\RequirePackage{fix-cm}
\documentclass[twocolumn]{svjour3}  
\smartqed 
\usepackage{graphicx}
\usepackage{tabularx}
\usepackage{bm}
\usepackage{amsmath}
\usepackage{amsfonts}
\usepackage{mathtools}
\usepackage{overpic}
\usepackage{hhline}
\usepackage{color}
\usepackage{booktabs}
\usepackage{url}

\renewcommand{\div}[1]{\operatorname{div} \left ( #1 \right )}
\newcommand{\tr}[1]{\mathrm{tr} \left ( #1 \right )}

\newcommand{\pd}[2]{\frac{\partial #1}{\partial #2}}
\newcommand{\eg}[0]{\emph{e.g.}}
\newcommand{\ie}[0]{\emph{i.e.}}
\newcommand{\dive}{\,\mathrm{div}}
\newcommand{\figwidth}{0.24}

\journalname{Journal of Mathematical Imaging and Vision}

\def\imagetop#1{\vtop{\null\hbox{#1}}}

\begin{document}
\title{Mapping-based Image Diffusion}

\author{Freddie {\AA}str{\"o}m \and Michael Felsberg \and George Baravdish}

\institute{F. {\AA}str{\"o}m, \at
  Heidelberg University \\
  Heidelberg Collaboratory for Image Processing (HCI) \\
  Im Neuenheimer Feld 205 \\
  69 120  Heidelberg, Germany \\
  Tel.: +49 (0) 662 1541 4841 \\
  \email{freddie.astroem@iwr.uni-heidelberg.de}  
  \and
  M. Felsberg \at
  Link{\"o}ping University \\
  Department of Electrical Engineering  \\
  581 83 Link{\"o}ping, Sweden   \\
  Tel.: +46 (0) 13 28 2460\\
  \email{michael.felsberg@liu.se}  
  \and
  G. Baravdish \at
  Link{\"o}ping University \\
  Department of Science and Technology \\
  581 83 Link{\"o}ping, Sweden   \\
  Tel.: +46 (0) 11 36 3118 	\\
  \email{george.baravdish@liu.se}  
}

\date{Received: date / Accepted: date}

\maketitle

\begin{abstract}
 In this work, we introduce a novel tensor-based functional for targeted image enhancement and denoising. Via explicit regularization, our formulation incorporates application dependent and contextual information using first principles. Few works in literature treat variational models that describe both application dependent information and contextual knowledge of the denoising problem. We prove the existence of a minimizer and present results on tensor symmetry constraints, convexity, and geometric interpretation of the proposed functional. We show that our framework excels in applications where nonlinear functions are present such as in gamma correction and targeted value range filtering. We also study general denoising performance where we show comparable results to dedicated PDE-based state of the art methods. 

  \keywords{image enhancement \and denoising \and PDE \and diffusion \and gradient energy tensor \and structure tensor}
\end{abstract}

\begin{figure*}[tbh]
   \centering
	\includegraphics[width=1\textwidth]{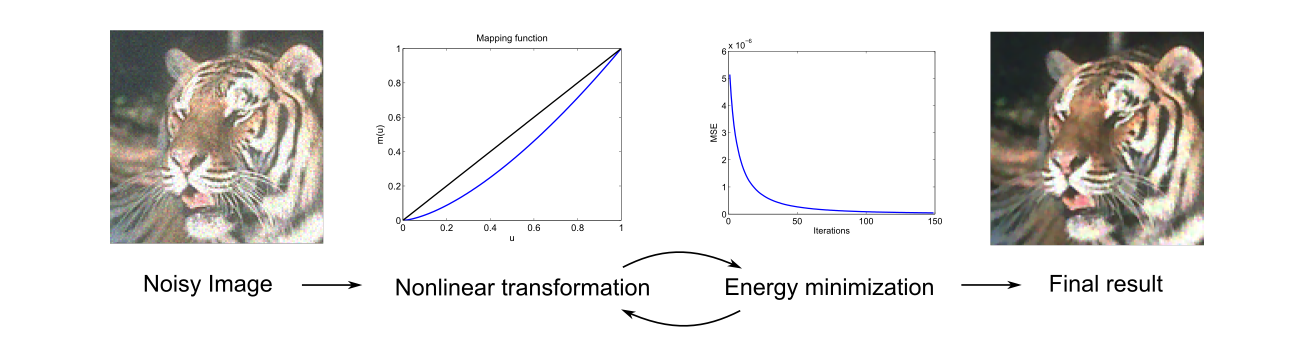}\\
	\caption{The presented framework studies image denoising from the viewpoint that the image data is not only corrupted by ``noise" but also pertubated by some nonlinear function. In this figure, the nonlinear function corrects for the overexposed image. We incorporate this nonlinear function called a ``mapping'' as a feedback component in the iterative update scheme. For additional examples of the mapping function see Figure \ref{fig:example_synthetic_butterfly}.}
	\label{fig:overview}
\end{figure*}

\section{Introduction}
\label{intro}
A simple approach to noise reduction is to apply an isotropic convolution kernel to the image data, \eg, normalized Gaussian filters or box filters~\cite{Koenderink1984}. Due to the simplicity of the approach, spatial and contextual information are neglected and therefore the output image is blurred. A common approach to adaptive filtering is to let the image gradient control the amount of filtering see, e.g., the seminal work of Perona and Malik (P-M) \cite{Perona90scale-spaceand}. They introduced an edge-stopping function to limit the filtering close to edges and corners. The P-M filter is often referred to as a nonlinear isotropic diffusion filter \cite{Weickert96anisotropicdiffusion}. The filter's edge-stopping function acts to preserve image structure, thereby obtaining superior image quality compared to isotropic filtering. The successor to P-M filtering was introduced by Weickert \cite{Weickert96anisotropicdiffusion} where additional adaptivity to image structure is achieved by using a second-order tensor. The main idea of the tensor-based formulation is to smooth parallel to image structures. The concept is based on the structure tensor \cite{diva2:274026,Forstner87}. The tensor is a windowed second moment matrix that describes the local orientation as a tensor field. Transforming the tensor field to align parallel to the image structure, one obtains a structure-preserving filtering scheme. The transformed tensor field is then used in the filtering scheme and is denoted as the diffusion tensor.

\renewcommand{\figwidth}{0.32}
\begin{figure*}[t]
  \newcommand{\tabwidth}{1}	
  \centering
  \hfill  \includegraphics[width=\figwidth\textwidth]{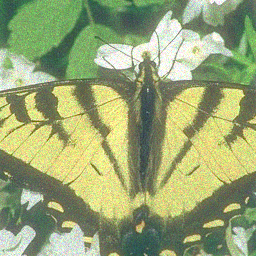} \hfill \hfill
  \hfill  \includegraphics[width=\figwidth\textwidth]{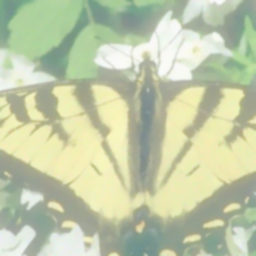} \hfill \hfill
  \hfill  \includegraphics[width=\figwidth\textwidth]{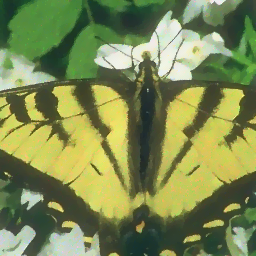} \hfill \phantom{I} \\
  \begin{tabularx}{\tabwidth\textwidth}{XXX}
    \textbf{Gamma correction} &  \textbf{Mapping} &  \textbf{Result} \\
  \end{tabularx}\\
  \hfill  \includegraphics[width=\figwidth\textwidth]{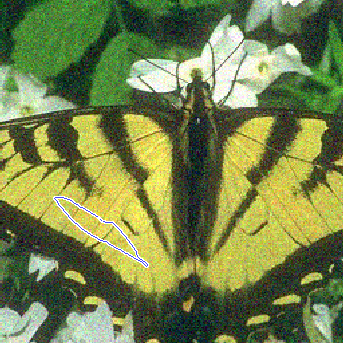} \hfill \hfill
  \hfill  \includegraphics[width=\figwidth\textwidth]{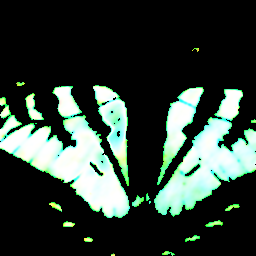} \hfill \hfill
  \hfill  \includegraphics[width=\figwidth\textwidth]{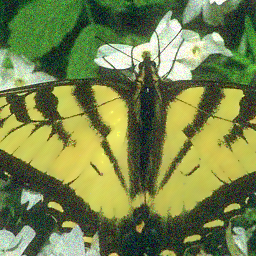} \hfill \phantom{I} \\
  \begin{tabularx}{\tabwidth\textwidth}{XXX}
     \textbf{Targeted filtering} & \textbf{Mapping} &  \textbf{Result} \\
  \end{tabularx}\\
  \hfill  \includegraphics[width=\figwidth\textwidth]{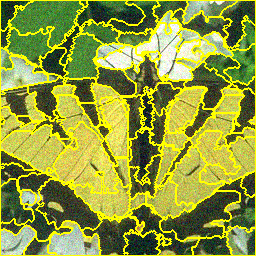} \hfill \hfill
  \hfill  \includegraphics[width=\figwidth\textwidth]{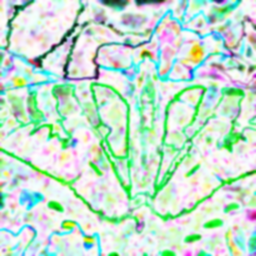} \hfill \hfill
  \hfill  \includegraphics[width=\figwidth\textwidth]{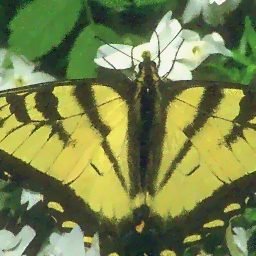} \hfill \phantom{I} \\
  \begin{tabularx}{\tabwidth\textwidth}{XXX}
    \textbf{Oversegmentation} &  \textbf{Mapping} &  \textbf{Result} \\
  \end{tabularx}\\
  \caption{In this work we introduce a mapping function in the energy \eqref{eq:main_R}. We consider three applications, each with different definitions of the mapping. First we perform gamma correction, illustrated on the first row. In the second application (second row) we perform targeted filtering. In the third application we perform general denoising by estimating the mapping from the image data via an oversegmentation map.}
  \label{fig:example_synthetic_butterfly}
\end{figure*}

The linear diffusion scheme is the solution of the diffusion equation, a partial differential equation (PDE), and it is closely related to the notion of scale-space \cite{lindeberg1993scale,Koenderink1984}. Therefore it has been of interest to investigate also P-M formulation and its successors of adaptive image filtering in terms of a variational framework. The motivation is that the variational framework allows for a generic approach to model the denoising problem and related image processing applications. By modifying the objective function this enables the inclusion of existing applications, \eg, active contours \cite{Kass88snakes:active}, deblurring \cite{661187,Oliveira20091683}, optical flow \cite{HornShunk1981,1420392,4409000}, and inpainting \cite{935036,Chan02euler'selastica}. Denote by $u:\mathbb{R}^2 \to \mathbb{R}$ the image data. The variational approach to image diffusion is to model an energy functional, $E$, of the form
\begin{equation}\label{eq:mainEFR}
	E(u) = F(u) + \lambda R(u),
\end{equation}
where $F$ is a fidelity term and the positive constant $\lambda$ determines the influence of $R$, the regularization term. The aim is to find the minimizer of the energy functional $E$. Here we are interested in tensor-based formulations of the regularization term. In particular we study and introduce an application dependent, tensor-driven anisotropic diffusion scheme. Furthermore, we introduce the gradient energy tensor~\cite{diva2:269100} as an alternative to the structure tensor \cite{Forstner87}.

\subsection{Approach}
Let $\Omega$ be a two-dimensional domain, denoting the support of the image $u$ in pixels. Assume $u$ to be differentiable and denote the gradient operator as $\nabla = ( \partial_{x}, \partial_{y} )^{\top}$, where $^{\top}$ is the transpose. The gradient of $u$ is defined as $\nabla u = ( \partial_{x} u, \partial_{y}u )^{\top}$. Throughout this work we introduce a \emph{mapping function}. We let this mapping be a differentiable nonlinear function $m : \mathbb{R}\to\mathbb{R}$, which describes a mapping of the \emph{image value range}, $u$. For notational convenience we write $\nabla m(u)$ rather than $\nabla(m(u(x,y)))$ where $(x,y) \in \Omega$ to express the differentiation of the mapping w.r.t. its argument. Given a noisy image, the mapping function can be estimated or defined from prior knowledge of the enhancement problem. Figure~\ref{fig:overview} illustrates the application of gamma correction. By letting the mapping function be an (estimated) prior of the nonlinear transform we obtain a feedback component in the optimization scheme tailored for the specific application. This mapping is not only  capable of preserving spatial features (\eg, corners, lines) but also preserves (or smooth) image data in certain image intensity ranges, \ie, the filter also depends on the image value range. 

The mapping can be either \emph{global} or \emph{local} depending on the application. We consider a \emph{global} gamma mapping (Section \ref{sec:gc}), \emph{local} targeted region filtering (Section \ref{sec:tf}), and global region filtering via \emph{local} oversegmentation maps (Section \ref{sec:colour_eval}). A preview of these applications is given in Figure \ref{fig:example_synthetic_butterfly} and Table \ref{table:overview_app} shows the corresponding interpretations of the mapping function. We emphasize that the determination of a relevant mapping function is highly application dependent. For readers familiar with anisotropic diffusion \cite{Weickert96anisotropicdiffusion,Perona90scale-spaceand} we remark that the selection of mapping function can be thought as the accurate selection of a diffusivity function. However, while the diffusivity function is gradient and data dependent, the mapping function is intensity and application dependent. In connection to the variational model \eqref{eq:mainEFR} we study regularization terms of the form
\begin{equation}\label{eq:main_R}
  R(u) = \int_\Omega \nabla m(u)^{\top} W(\nabla m(u)) \nabla m(u) \; d{\bm{x}},
\end{equation}
where ${\bm{x}} = (x,y)$ and $W \in \mathbb{R}^{2 \times 2}$ is a positive semi-definite tensor that incorporates orientation dependent information.

\renewcommand{\figwidth}{0.32}
\begin{figure*}[t]
  \newcommand{\tabwidth}{1}	
  \centering
  \hfill  \includegraphics[width=\figwidth\textwidth]{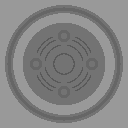} \hfill \hfill
  \hfill  \includegraphics[width=\figwidth\textwidth]{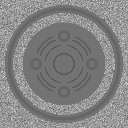} \hfill \hfill
  \hfill  \includegraphics[width=\figwidth\textwidth]{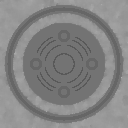} \hfill \phantom{I} \\
  \begin{tabularx}{\tabwidth\textwidth}{XXX}
     \textbf{Original} &  \textbf{Noisy} &  \textbf{Total variation mapping (see \eqref{eq:TVm})} \\
  \end{tabularx}\\
  \hfill  \includegraphics[width=\figwidth\textwidth]{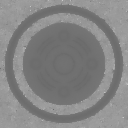} \hfill \hfill
  \hfill  \includegraphics[width=\figwidth\textwidth]{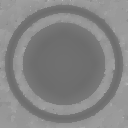} \hfill \hfill
  \hfill  \includegraphics[width=\figwidth\textwidth]{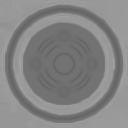} \hfill \phantom{I} \\
  \begin{tabularx}{\tabwidth\textwidth}{XXX}
    \textbf{Markov random field}  & \textbf{Total generalized variation}  &  \textbf{Block-matching} \\
    MRF \cite{5539844} & TGV \cite{Bredies:2010:TGV:1958729.1958739}  & BM3D \cite{DBLP:journals/tip/DabovFKE07} 
  \end{tabularx}\\
  \caption{Synthetic example where an image has been corrupted with additive Gaussian noise with standard deviation 20 in a specific value range. By appropriately defining the mapping in \eqref{eq:main_R} we are able to preserve image details oversmoothed by MRF, TGV and BM3D. In this example the mapping function was of sigmoidal type, \eqref{eq:sigmoidal} with $a=100$ and $c=1/2$.}
  \label{fig:example_synthetic}
\end{figure*}

\subsection{Synthetic example}
Figure \ref{fig:example_synthetic} illustrates the advantage when performing noise reduction with targeted filtering. In this image, the noise component is only present in the background of the image. The foreground, or object, consisting of circles of different diameters and intensities are not corrupted by noise. Based on this prior knowledge we set, in this example, the mapping as to be sigmoidal, \ie
\begin{equation}\label{eq:sigmoidal}
  m'(u;a,c) = (1 + \exp(-a(u-c)))^{-1}, \quad a,c > 0,
\end{equation}
The parameters were set to $a=100$ and $c=1/2$ as $u\in [0,1]$, these parameters correspond to a smooth approximation of a step-function translated to $1/2$. As an instance of our framework, we use total variation with a mapping function (introduced in Section \ref{sec:unifying_functional}). Visually, comparing the result to Markov Random Field~(MRF) \cite{5539844}, Total Generalized Variation (TGV) \cite{Bredies:2010:TGV:1958729.1958739} and BM3D \cite{DBLP:journals/tip/DabovFKE07} our approach appears more similar to the noise free image. MRF, TGV and BM3D all oversmooth the four small circles and details. We note however that in this simple example, assuming that the input image can be accurately partitioned in noise-free and noisy regions, one can naturally apply a restoration method on these partitions. However, in the case of more complex image regions, see Figure \ref{fig:example_synthetic_butterfly} second and third row, such heuristic partitioning of the image value range is in general a highly non-trivial task. Furthermore, by incorporating the mapping into the filtering process, one does not require an additional heuristic post-processing step to accurately fuse these independent regions. 

\subsection{Overview and summary of contributions}
The main contribution of this work is to study the effects of the mapping function in the tensor-based regularization term \eqref{eq:main_R}. The corresponding minimizer is presented in Theorem~\ref{thm:R}. We show in the experimental part that the inclusion of the nonlinear mapping, as a component of the filtering scheme, improves the filtering performance compared to applying the mapping as a separate pre- or post-processing step. The denoising applications we consider are gamma correction of the intensity range, targeted filtering, and adaptive denoising where the image value distributions are used to drive the filtering process. Furthermore, we introduce the gradient energy tensor (GET) \cite{diva2:269100} as an orientation dependent term into the regularization term. Our formulation allows us to utilize both the \emph{eigenvalues} and \emph{eigenvectors} of the tensor. 

In Section \ref{sec:related_works} we give an overview of denoising approaches related to the current work. Section \ref{sec:review_diffusion} further details the diffusion framework and introduces some additional notation. In Section \ref{sec:unifying_functional} we present the main results of this paper, summarized in Theorem \ref{thm:R}. To minimize the proposed functional we prove the existence of its Euler-Lagrange (E-L) equation. Furthermore, we derive properties in the subsequent analysis useful for image denoising applications. We show that the proposed functional describes several variational denoising methods as special cases and we extend these to include the mapping function. In particular we extend our gradient energy total variation formulation \cite{Astrom2014} to also include the mapping function. In the same section we also give some additional properties of the energy and its E-L equation. Section \ref{sec:gradient_energy_tensor} briefly reviews the GET tensor and presents an empirical evaluation where the tensor is compared to the structure tensor for orientation estimation. The GET tensor is used to formulate the gradient energy total variation scheme with a mapping function in Section \ref{sec:GETVm}. 

In the experimental evaluation, Section \ref{sec:gc}, we show empirically that providing the nonlinear mapping of the image data as a feedback component yields an improved result compared to (I) transforming the image value range and then denoise, or (II) denoise and then transform the image value range. In Section \ref{sec:tf} we let the mapping define a user-set value range for targeted filtering. Section \ref{sec:colour_eval}, we evaluate our framework on a third application where we let the filtering process be based on an oversegmentation map. Section \ref{sec:discussion} gives an extensive discussion on the difference between local and global mapping functions and their applicability in the context of image denoising. A preview of the above mentioned applications and their relation to the different classes of mappings are given in Table \ref{table:overview_app}. Recall that the mappings are nonlinear. Section \ref{sec:conclusion} concludes this work. 

\begin{table}[t]
  \renewcommand*{\arraystretch}{1.3}
  \newcommand{\parwidth}{1\linewidth}
  \begin{tabularx}{1\linewidth}{l}
    \textbf{Applications} \\
    \hline
    \textbf{I}. Gamma correction  \\
    \hspace{5mm} \parbox{\parwidth}{\emph{Global mapping applied pixel-wise}} \\
    \textbf{II}. Targeted filtering \\
    \hspace{5mm} \parbox{\parwidth}{\emph{Local mapping applied globally (on the value range)}} \\
    \textbf{III}. Oversegmentation \\
    \hspace{5mm} \parbox{\parwidth}{\emph{Local mappings applied locally (within one segment)}} \\
    \hline
  \end{tabularx} 
  \caption{Overview of the different applications considered in this work and their relation to local and global mappings. A discussion on the difference between these two classes of mappings is given in Section \ref{sec:discussion}. Figure \ref{fig:example_synthetic_butterfly} shows examples of the different applications. For additional details, we refer to the respective section in the experimental evaluation.}
  \label{table:overview_app}
\end{table}

\section{Related works}
\label{sec:related_works}

\subsection{Variational methods}
The connection between energy functionals and the corresponding E-L equation for image processing formulations have been investigated before, we refer to \eg, \cite{aubert-kornprobst:06} and references therein. The total variation (TV) formulation was introduced by Rudin et al.~\cite{Rudin1992}.
The TV-model is well studied, both theoretically and practically, see~\cite{Chan2006} for an overview of the method and its application areas. One major draw-back with the TV-approach for image denoising applications is its tendency to show staircasing effects.  The total generalized variation (TGV) introduced by Bredies et al.~\cite{Bredies:2010:TGV:1958729.1958739}, which includes higher order differentials, does not suffer from these artifacts. The drawback of the approach is that the introduced smoothness of the higher-order derivatives may cause image features to be oversmoothed. Other relevant extensions of the TV model to include additional smoothness constraints are, \eg, \cite{Blomgren97totalvariation,Chen04variableexponent,Bollt2008}.

The extension of TV to variational tensor-based formulations was investigated by Roussos and Maragos \cite{Roussos2010}, Lefkimmiatis et al.~\cite{Lefkimmiatis2013} and Grasmair and Lenzen \cite{Grasmair210}. These approaches consider the structure tensor \cite{diva2:274026,Forstner87} and model the objective functions in terms of the tensor eigenvalues. Roussos and Maragos define their regularization term as $R(u) = \int_\Omega \psi(\mu_1,\mu_2) \; {d\bm{x}}$, where $\psi$ is a strict convex function and $\mu_1$, $\mu_2$ are the eigenvalues of the structure tensor, thus the regularization is only indirectly taking into account the image structure as it ignores the eigenvectors. Similar to Roussos and Maragos, Lefkimmiatis et al. consider the Schatten-norm of the structure tensor eigenvalues. In contrast, Grasmair and Lenzen define the regularization $R(u) = \int_\Omega \sqrt{\nabla u^{\top} A(u) \nabla u} \; {d\bm{x}}$, where $A$ is the positive semidefinite structure tensor with remapped eigenvalues. The objective function is then solved using a finite element method instead of deriving a variational solution. An alternative approach was proposed by Krajsek and Scharr \cite{Krajsek2010} who formulated a linear anisotropic regularization term for tensor-valued image diffusion. The common factor by the aforementioned methods is that they use the structure tensor. Due to the fact that the structure tensor is defined as the integration of the outer product of the image gradient, the divergence theorem is not applicable when deriving the minimizer of the corresponding functional. We show this fact in Section \ref{sec:tensor_diffusion}, Proposition \ref{prop:GF_conv}. To our knowledge, the only tensor-based total variation approach that also considered the eigenvectors, has been studied in~\cite{Astrom2014}, where the lesser known gradient energy tensor (GET) is used instead of the structure tensor. 

In our previous work \cite{diva2:608779} we proposed to filter certain value ranges of the image data, \ie, an image value range adaptive filter. In a related work, \cite{Astrom2013a}, we interpret the image data as observations from a stochastic process and use an oversegmentation of the input image to define the mapping function. The basic approach is to describe each locally homogeneous region using first and second statistical moments to drive a nonlinear filtering process. In this work we combine the mapping function \cite{diva2:608779,Astrom2013a} and the tensor-based method \cite{Astrom2014}, leading to novel results. 

\subsection{Block-matching methods}
In addition to local, \eg, tensor-based, methods, a different paradigm in the denoising literature has emerged called non-local methods. These methods rely on block-matching of neighbourhoods and have been shown to efficiently suppress image noise. The basic idea is to average similar patches in a local neighbourhood, where similarity is measured with respect to some criteria. The denoising methods block-matching by sparse 3-D transform (BM3D)~\cite{DBLP:journals/tip/DabovFKE07} and non-local means (NLM) \cite{1467423} are two such methods. NLM and BM3D are fundamentally similar since both use collaborative filtering. Rather than computing new pixel-centres from matched neighbourhoods as in NLM, BM3D considers patches and adaptively denoises a group of local neighbourhoods. In this context we mention Levin et al. \cite{Levin2012} who studied image denoising  quality w.r.t. patch-size and patch-complexity in view of natural image statistics, they conclude: while the restoration of homogeneous regions is straightforward by increasing the sample size, an increase of patch-size for textured regions has limited impact on the restoration quality as pixels in these regions are weakly correlated. Since, BM3D combines block-matching, linear transform thresholding and Wiener filtering, the noise can only be suppressed efficiently given that sufficiently many similar patches are found. Therefore, as a consequence of inaccurate template matching criteria, oversmoothing may occur in highly textured regions. We compare our framework with BM3D in the experimental evaluation. 

In contrast to patch-matching methods, we present an altogether different view-point to image denoising by focusing on adaptive value range filtering and region specific noise removal. Application areas where a mapping function has shown beneficial are visualization of medical images \cite{diva2:608779} and colour-image denoising \cite{Astrom2013a}. In the present work, we further study this approach by considering a wide range of applications: denoising overexposed images, targeted filtering and denoising of natural images. The potential improvement of image quality and interesting mathematical exposition motivates the subsequent presentation of a theoretical framework. However, before proceeding to the main results, we summarize the background on standard scalar- and tensor valued diffusion methods. 

\section{Review of image diffusion methods}
\label{sec:review_diffusion}
To make this work self-contained we briefly review concepts from calculus of variations and its role in deriving standard diffusion methods. Moreover, we introduce some additional notation and terminology that is used in the remainder of the paper. 

\subsection{Concepts of image diffusion}
The classical approach to image diffusion is to consider functionals, or energy models, consisting of a fidelity term, $F$, and a regularization term, $R$, see \eqref{eq:mainEFR}. Moreover, the energy often takes the form
\begin{equation}  \label{eq:E_diffusion}
  E(u) =  \frac{1}{2}\int_\Omega (u - u^0)^2 \; d {\bm{x}} +  \lambda R(u),
\end{equation}
where ${\bm{x}} \in \Omega$ is the image domain, $u^{0}$ is the observed (noisy) image. The constant $\lambda$ is a positive scalar and determines the effect of the regularization. One well-studied choice of prior is the $p$-Dirichlet energy 
\begin{equation}
  \label{eq:RinEnergy}
  R(u) = \int_\Omega \frac{1}{p}|\nabla u|^p \; d {\bm{x}},
\end{equation}
where $p=1$ corresponds to Total Variation \cite{Rudin1992} and $p>1$ has been studied in. \eg, \cite{Kuijper20091023,diva2:758277}. To minimize $E$, assuming it is convex, one finds the stationary point by solving the corresponding Euler-Lagrange (E-L) equation
\begin{equation}\label{eq:dEL}
  \delta E(u) = 0,
\end{equation} 
and
\begin{equation} \label{eq:EL_min}
  \delta E(u)v =	\lim_{\varepsilon \rightarrow 0} \frac{\partial}{\partial \varepsilon}E (u + \varepsilon v) \quad \mbox{in} \quad  \Omega.
\end{equation}
With homogeneous Neumann boundary condition, $\nabla u \cdot \bm{n} = 0$, the E-L equation of \eqref{eq:E_diffusion}, \eqref{eq:RinEnergy} is the following boundary value problem
\begin{equation}
  \left \{
    \begin{array}{rlll}
      u-u^{0} - \lambda \div{|\nabla u|^{p-2}\nabla u} & = 0 & \mbox{in} & \Omega \\
      \nabla u \cdot \bm{n} & = 0 & \mbox{on} &  \partial \Omega, \\
    \end{array} \right .
  \label{eq:EL_linear}
\end{equation}
where $\bm{n}$ is the normal vector on the boundary $\partial \Omega$. If $p=2$, the E-L equation describes linear (isotropic) diffusion \cite{Koenderink1984} and if $p=1$ it describes the formal TV-formulation of Rudin et al. \cite{Rudin1992}. The $u$ that minimizes \eqref{eq:E_diffusion}, \eqref{eq:RinEnergy} is obtained by solving \eqref{eq:EL_linear}. The corresponding initial value problem is then
\begin{equation}
  \left \{
    \begin{array}{rlll}
      \partial_t u - \div{{|\nabla u}|^{p-2}\nabla u} &= 0  & \mbox{in} & \Omega \\
      \nabla u \cdot \bm{n} &= 0 & \mbox{on} &  \partial \Omega \\
      u(0) &= u^0 & \mbox{on} &   \Omega, \\
    \end{array} \right .
  \label{eq:EL_linear_IVP}
\end{equation}
and results in the diffusion equation \cite{Scherzer2000,DiBenedettoBook,LadyzenskajaSolonnikovUraltceva}. In the case of isotropic filtering, the diffusion equation has a closed form solution. 

\subsection{Nonlinear diffusion}
\label{sec:non-linearDiffusion}
Isotropic filtering does not preserve characteristic image features such as edges and corners due to the rotational symmetry of the Laplacian operator. Therefore, when solving the E-L equation in~\eqref{eq:EL_linear}, the image is blurred. Introducing a diffusivity function, also known as an ``edge-stopping'' function $g : \mathbb{R} \rightarrow \mathbb{R}^+ $ in \eqref{eq:EL_linear}, Perona and Malik \cite{Perona90scale-spaceand} formulated the PDE
\begin{equation}
  \left \{
    \begin{array}{rlll}
      u-u^{0} - \lambda \div{g(|\nabla u|) \nabla u} & = 0 & \mbox{in} & \Omega \\
      \nabla u \cdot \bm{n} & = 0 & \mbox{on} &  \partial \Omega. \\
    \end{array} \right .
  \label{eq:EL_PM}
\end{equation}
The function $g(s)$, with $s=|\nabla u|$, acts to reduce the filtering if the image gradient takes large values. The function's principal behaviour is $g(s)\rightarrow 1$ as $s\rightarrow 0$ and $g(s) \rightarrow 0$ as $s \rightarrow \infty$ and a common choice is the negative exponential function 
\begin{equation}\label{eq:PMg}
  g(s) = \exp(-s^2/k^2),
\end{equation}
where $k>0$ is an edge-stopping parameter. Although this formulation resolves many drawbacks of linear image diffusion, noise is preserved along edges since the filtering is simply ``stopped" for large image gradients. An alternative to using the negative exponential function as the diffusivity function is to let the problem define which edge-stopping function is suitable. In the work on targeted iterative filtering~\cite{diva2:608779} we let the mapping be defined by a user selected sigmoidal function, that compresses the value range of 12-bit image data for the purpose of medical visualization. The regularization term introduced in \cite{diva2:608779} is defined as
\begin{equation}
  R(u) = \int_\Omega |\nabla m(u)|^2 \; d{\bm x},
  \label{eq:EL_TIF}
\end{equation}
where the mapping $m:\mathbb{R}\to\mathbb{R}$ and $m\in\mathcal{C}^2(\Omega)$ is a sigmoidal function. Furthermore, note that $\nabla m(u) = m'(u)\nabla u$. In this case the mapping defines the visualization windows, thus replaces the ad-hoc selection of a potential function, \eg, \eqref{eq:PMg}. The corresponding PDE equation is given by
\begin{equation}\label{eq:L2mapping}
  u - u^0 - \lambda m'(u)\div{m'(u) \nabla u} = 0 \quad \mbox{in} \quad \Omega.
\end{equation}
Since $u^{0}\in L^{2}(\Omega)$ and $m\in {\cal C}^{2}(\Omega)$, it follows that $u\in W^{1,2}(\Omega)$, where $W^{1,2}(\Omega)$ is the Sobolev space equipped with the norm $||u||^{2}_{W^{1,2}(\Omega)}=||u||^{2}_{L^{2}(\Omega)} + ||\nabla u||^{2}_{L^{2}(\Omega)}$ (see, \eg, \cite{gilbarg2001elliptic}). Depending on the definition of $m'$ the function can be understood as the diffusivity function $g$ in~\eqref{eq:PMg}. Note that the PDE \eqref{eq:L2mapping} is equivalent to the formulation in~\cite{diva2:608779} due to the novel Lemma \ref{corollary:div_contraction} in Section \ref{subsec:properties}.

\subsection{Tensor-based diffusion}
\label{sec:tensor_diffusion}
The main deficiency of the methods presented in Section \ref{sec:non-linearDiffusion} is that the filtering is simply stopped close to image features such as edges and corners. To filter parallel to line-structures Weickert~\cite{Weickert96anisotropicdiffusion} introduced a tensor-based anisotropic diffusion scheme. The filtering scheme is defined as the PDE 
\begin{equation} \label{eq:EL_AD}
  u-u^{0} - \lambda \div{D(T) \nabla u} = 0 \quad \mbox{in} \quad  \Omega.
\end{equation}
The adaptivity of the filter is determined by the diffusion tensor $D : \mathbb{R}^{2\times 2} \to \mathbb{R}^{2\times 2}$, constructed from a nonlinear mapping of the structure tensor $T : \mathbb{R}^{2} \to \mathbb{R}^{2\times 2}$ defined as
\begin{equation} \label{eq:structure_tensor}
	T(\nabla u) = w * (\nabla u \nabla u^{\top} ),
\end{equation}
where $*$ is a convolution operator and $w$ is a smooth kernel, \eg, a Gaussian function~\cite{Forstner87,diva2:274026}. The tensor is a windowed second moment matrix, it is positive semidefinite, thus it estimates the local energy distribution and can be thought of being a covariance matrix~\cite{Lindeberg96_direct_computation}. The eigenvector of $T$ corresponding to the largest eigenvalue is aligned orthogonal to the image structure. Therefore, to avoid blurring image structures, the diffusion tensor $D$ is computed as $D(T) = O^{\top} g(\Lambda) O \enspace$ where $g$ is the diffusivity function (\eg, \eqref{eq:PMg}),  $O$ are the eigenvectors and $\Lambda$ has the eigenvalues of $T$ on its diagonal~\cite{Fberg2011}. 

Before continuing we show why a straightforward variational formulation, which gives the PDE \eqref{eq:EL_AD} without the convolution \eqref{eq:structure_tensor}, cannot be extended into a variational formulation for \eqref{eq:EL_AD} with the convolution \eqref{eq:structure_tensor} by inserting the convolution into the regularizer.

This problem has previously been studied by, \eg, Scherzer and Weickert \cite{Scherzer2000}. Here we tackle the energy functional directly and show that the failing component is the application of Green's identity. Now, let $F : \mathbb{R}^{2\times 2} \rightarrow \mathbb{R}^{2 \times 2}$ be a tensor-valued function and define the regularization term
\begin{equation}\label{eq:Ew_func}
  R(u) = \int_\Omega \nabla u^\top F(\nabla u) \nabla u \ d\bm{x}.
\end{equation}
To compute the variational derivative of $R$ we need partial integration in higher dimensions defined by Green's identity \cite{gilbarg2001elliptic}: Let $A$ be a vector field and $v$ once continuously differentiable, then 
\begin{align}\label{eq:EL-div}
  \notag  \int_\Omega A(\nabla u)^\top \nabla v \ d\bm{x} &= \int_{\partial \Omega} v A(\nabla u) \cdot \bm{n} \ dS \\ 
  & \quad - \int_\Omega v \mathrm{div} (A(\nabla u)) \ d\bm{x}.
\end{align}
When $F$ is defined by a nonlinear function and a convolution operator there is no operator $A$ such that Green's formula applies. This is shown in the following proposition. 
\begin{proposition}\label{prop:GF_conv}
  Let $f : \mathbb{R}^{2\times 2} \rightarrow \mathbb{R}^{2 \times 2}$ be a differentiable tensor valued-function and $w$ be a smooth (Gaussian) function. If
  \begin{equation}
    F(\nabla u) = f(w * (\nabla u \nabla u^\top )),
  \end{equation}
  then the corresponding Euler-Lagrange equation of \eqref{eq:Ew_func} does not exist. 
\end{proposition}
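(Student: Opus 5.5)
The plan is to compute the first variation $\delta R(u)v$ of \eqref{eq:Ew_func} directly from \eqref{eq:EL_min}. Then we show that the resulting linear functional of $v$ cannot be written as $\int_\Omega A(\nabla u)^\top \nabla v \, d\bm{x}$ with $A$ depending only pointwise on $\nabla u$ at $\bm{x}$. Without such a form, Green's identity \eqref{eq:EL-div} cannot be applied to move derivatives off $v$, and no pointwise Euler--Lagrange equation $\delta R(u)=0$ in $\Omega$ results.

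\textbf{Step 1: expand the first variation.} Put $T = w*(\nabla u\nabla u^\top)$. Differentiate $R(u+\varepsilon v)$ at $\varepsilon=0$:
\begin{align*}
\delta R(u)v &= \int_\Omega 2\,\nabla v^\top F(\nabla u)\nabla u \, d\bm{x} \\
&\quad + \int_\Omega \nabla u^\top \big[ Df(T)\big( w*(\nabla v\nabla u^\top + \nabla u \nabla v^\top)\big)\big] \nabla u \, d\bm{x},
\end{align*}
where $F$ is symmetrized if necessary. The first term has the required form $A(\nabla u)^\top\nabla v$ with $A = 2F(\nabla u)\nabla u$, so Green's identity handles it. The obstruction is entirely in the second term.

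\textbf{Step 2: isolate the obstruction.} In the second term, $\nabla v$ enters only through the convolution $w*(\cdot)$, evaluated at $\bm{x}$. Hence it depends on $\nabla v$ on the whole support of $w$ around $\bm{x}$, not on $\nabla v(\bm{x})$ alone. I would write this term as $\int_\Omega\int_\Omega w(\bm{x}-\bm{y})\, B(\bm{x},\bm{y})\cdot\nabla v(\bm{y})\, d\bm{y}\, d\bm{x}$, where $B$ collects the factors $\nabla u(\bm{x})$, $Df(T(\bm{x}))$ and $\nabla u(\bm{y})$. The integrand is then not of the form $A(\nabla u(\bm{x}))^\top\nabla v(\bm{x})$ required by \eqref{eq:EL-div}, with $A$ a vector field determined pointwise by $\nabla u$. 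So no operator $A$ exists for which \eqref{eq:EL-div} converts the full variation into $\int_\Omega v\,(\cdot)\,d\bm{x}$ plus a boundary term. This is what the surrounding text asserts, and it yields the claimed nonexistence of the (local, PDE-form) Euler--Lagrange equation.

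\textbf{Main obstacle.} The delicate point is Step 2. A Fubini swap of $\bm{x}$ and $\bm{y}$ formally rewrites the term as $\int_\Omega \tilde A(\bm{y})^\top\nabla v(\bm{y})\,d\bm{y}$ with $\tilde A(\bm{y}) = \int w(\bm{x}-\bm{y})B(\bm{x},\bm{y})\,d\bm{x}$, but this $\tilde A$ is a nonlocal functional of $u$, not a function $A(\nabla u)$. I would therefore state precisely that ``the Euler--Lagrange equation'' means a local PDE obtained via \eqref{eq:EL-div} with $A=A(\nabla u)$. To argue that no such $A$ works, I would first try a concrete counterexample: pick two functions $u$ with identical $\nabla u(\bm{x}_0)$ but different $\nabla u$ nearby. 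The convolution term gives different first variations at $\bm{x}_0$, so the integrand cannot depend only on $\nabla u(\bm{x}_0)$.
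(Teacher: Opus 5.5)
You follow the paper's route. You differentiate $R(u+\varepsilon v)$, single out the term in which $\nabla v$ enters through $w*(\cdot)$, and conclude that Green's identity \eqref{eq:EL-div} cannot be applied with a field $A(\nabla u)$. Your derivative $w*(\nabla v\nabla u^\top+\nabla u\nabla v^\top)$ is the correct one; the paper's $w*(2\nabla u\nabla v^\top)$ is not.

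Your Fubini remark shows that the paper's justification ``$v$ cannot pass the convolution'' is false. You should state this outright rather than treat it as a side obstacle. Set $G=Df(T)^*(\nabla u\nabla u^\top)$, so that $\nabla u^\top[Df(T)H]\nabla u=\langle H,G\rangle$ in the Frobenius pairing, and $\tilde G(\bm y)=\int_\Omega w(\bm x-\bm y)\,G(\bm x)\,d\bm x$. Then the full variation is $\int_\Omega\nabla v^\top\big(F+F^\top+\tilde G+\tilde G^\top\big)\nabla u\,d\bm x$. So \eqref{eq:EL-div} \emph{does} apply, and it yields the integro-differential equation $-\operatorname{div}\big((F+F^\top+\tilde G+\tilde G^\top)\nabla u\big)=0$. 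For $f(T)=\mathrm{tr}(T)I$ and even $w$ this is simply $-4\operatorname{div}\big((w*|\nabla u|^2)\nabla u\big)=0$. The proposition is therefore true only in the restricted sense you name (no \emph{local} Euler--Lagrange PDE), and the whole proof must be a proof of that claim.

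That is where your proposal still has gaps.

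\emph{Step 1 contradicts your own criterion.} $F(\nabla u)(\bm x)=f\big((w*\nabla u\nabla u^\top)(\bm x)\big)$ is already nonlocal in $u$. So $A=2F\nabla u$ is not ``determined pointwise by $\nabla u$'' either, and the obstruction is not confined to the $Df$-term.

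\emph{The counterexample is set up the wrong way round.} The ``integrand at $\bm x_0$'' is not determined by the functional: for $v\in C_c^\infty(\Omega)$ you may add any divergence-free field to $A$. Also, two $u$'s that agree at $\bm x_0$ but differ nearby refute nothing, since a local $A(\nabla u)$ also sees nearby differences. The right test is locality. A representation $\delta R(u)v=\int_\Omega A(\nabla u)\cdot\nabla v\,d\bm x$ forces $\delta R(u)v$ to depend only on $u$ on $\mathrm{supp}\,v$. So you must exhibit $u_1=u_2$ on a ball $B$ and $v\in C_c^\infty(B)$ with $\delta R(u_1)v\neq\delta R(u_2)v$, using that the Gaussian has full support. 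In the trace example this works, because changing $|\nabla u|$ outside $B$ changes $w*|\nabla u|^2$ on $B$.

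\emph{A hypothesis is missing from the statement.} For constant $f\equiv C$ the functional is $\int_\Omega\nabla u^\top C\nabla u\,d\bm x$, whose Euler--Lagrange equation $-\operatorname{div}\big((C+C^\top)\nabla u\big)=0$ exists and is local. You need $f$ to be non-constant in a suitable sense; the paper's proof has the same defect.
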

\begin{proof}
  Assume that the variational derivative of \eqref{eq:Ew_func} exists. To show this assertion we set
  \begin{equation}\label{eq:s}
    s(\varepsilon) = w * ( \nabla (u+\varepsilon v) \nabla (u+\varepsilon v)^\top ),
  \end{equation} 
  and let
  \begin{equation}
    f(s(\varepsilon)) = F(\nabla(u+\varepsilon v)).
  \end{equation}
  Then from \eqref{eq:EL_min} and \eqref{eq:Ew_func} we have
  \begin{align}\label{eq:Ew}
\notag   \frac{\partial R(u+\varepsilon v)}{\partial \varepsilon} &= \\
    &\hspace{-10mm}\frac{\partial}{\partial \varepsilon} \int_\Omega \nabla (u+\varepsilon v)^\top f(s(\varepsilon)) \nabla (u+\varepsilon v) \ d\bm{x}.
  \end{align}
  From the product rule of differentiation we are interested in the following part from \eqref{eq:Ew}: 
  \begin{align}
\notag    \frac{\partial}{\partial \varepsilon} &  \int_\Omega \nabla u^\top f(s(\varepsilon)) \nabla u \ d\bm{x} \\
    \label{eq:firstEproduct}    & \qquad = \int_\Omega \nabla u^\top \left ( \frac{\partial f(s(\varepsilon))}{\partial \varepsilon}  \right ) \nabla u \ d\bm{x},
  \end{align}
  from which we only need to consider the differentiation of $f$ w.r.t. $\varepsilon$, \ie, 
  \begin{align}\label{eq:dfde}
    \frac{\partial f(s(\varepsilon))}{\partial \varepsilon}  &= \frac{\partial f}{\partial s} \frac{\partial s}{\partial \varepsilon }. 
  \end{align}
  With $s$ as in \eqref{eq:s} we have
  \begin{align}
    \notag  \frac{\partial s}{\partial \varepsilon } &= w * \left ( \frac{\partial}{\partial \varepsilon} \nabla (u+\varepsilon v) \nabla (u+\varepsilon v)^\top \right ) \\
\label{eq:dsde}    &= w * \left (2 \nabla u \nabla v^\top + 2\varepsilon \nabla v \nabla v^\top \right ) .
  \end{align}
  Now, we evaluate the limit $\varepsilon \rightarrow 0$ and obtain from \eqref{eq:firstEproduct}, \eqref{eq:dfde}, \eqref{eq:dsde} the following result
  \begin{align}
    \notag \lim_{\varepsilon \rightarrow 0} & \int_\Omega \nabla u^\top \left ( \frac{\partial f(s(\varepsilon))}{\partial \varepsilon}  \right ) \nabla u \ d\bm{x}   \\
    &=
    \notag \int_\Omega \nabla u^\top \left (  \frac{\partial f(s(0))}{\partial s} \left ( w * \left (2 \nabla u \nabla v^\top \right ) \right ) \right )  \nabla u \ d\bm{x} \\
    &\neq \int_\Omega A(\nabla u)^\top \nabla v \ d\bm{x},
  \end{align}
  for any operator $A(\nabla u)$. Since the element $v$ cannot pass the convolution operator,  Green's formula \eqref{eq:EL-div} can not be applied to give an E-L equation of \eqref{eq:Ew_func}. \qed
\end{proof}
In the next section we present one of the main results of this work: the mapping function-based functional and its E-L equation. 

\section{Mapping-based Image Diffusion}
\label{sec:unifying_functional}
In this section we integrate a nonlinear mapping into a framework of tensor-based image diffusion. 

\subsection{Variational formulation}
In the following, let $m \in \mathcal{C}^2(\Omega)$ be the mapping function which describes the nonlinear mapping of the image value range. Also let $W: \mathbb{R}^n \to \mathbb{R}^{n\times n}$ be a positive semidefinite tensor. Set ${\bm s} = \nabla m(u)$. Let the components of $W(\bm{s})$ be differentiable, then $W_{s_i}({\bm s})$ is the component-wise differentiation of $W({\bm s})$ w.r.t. ${\bm s}$, such that $W_{\bm{s}} \in \mathbb{R}^{n^2\times n}$ is defined as
\begin{equation} \label{eq:Ws}
  W_{\bm s}({\bm s})
  =
  \begin{pmatrix} 
    W_{s_1}({\bm s})  \\
    \vdots \\
    W_{s_n}({\bm s}) 
  \end{pmatrix} . 
\end{equation}
In order to simplify notation in the following calculations we introduce a prescript-notation for matrix contraction, we define
\begin{equation} \label{eq:T_s}
  \prescript{}{\nabla u}{W_{\bm s}({\bm s})}
  =
  \begin{pmatrix} 
    \nabla u^{\top} W_{s_1}({\bm s})  \\
    \vdots \\
    \nabla u^{\top} W_{s_n}({\bm s}) 
  \end{pmatrix}, 
\end{equation}
which contracts $n^2\times n$ matrices such as \eqref{eq:Ws} to an $n \times n$ matrix.

The next theorem generalizes our initial formulation~\cite{Astrom2014} (cf. Theorem 1) which states the relation between a tensor-functional and its corresponding E-L equation. In this work we extend the theorem to dimensions $n \geq 2$ and to include the mapping $m$. Compared with \cite{Astrom2014} we state the theorem in a simplified form, exploiting newly found identities which result in a generalized tensor-based application-driven image denoising framework.
\begin{theorem}
  \label{thm:R}
  Let the regularization term $R$, be given by
  \begin{equation}\label{eq:Rfunctional}
    R(u) = \int_{\Omega} \nabla m(u)^{\top} W(\nabla m(u)) \nabla m(u) \; d{\bm x},
  \end{equation}
  where $u, m \in \mathcal{C}^2(\Omega)$ and $W: \mathbb{R}^n \to \mathbb{R}^{n\times n}$, and $\nabla m \mapsto W$ is a tensor-valued function. Set ${\bm{s}} = \nabla m(u)$, then the corresponding E-L equation is given by
  \begin{equation}\label{eq:ELexp_simplified_2}
    \hspace{0mm}\left \{
      \begin{array}{rl}
        u-u^0 - \lambda \Big ( (m')^2\div{ S_1 \nabla u} \hspace*{5mm} & \\ 
        + \div{ (m')^2 S_2 \nabla u} \Big ) & \hspace*{0mm} = 0 \mbox{ in }  \Omega \\
        {\bm n} \cdot (S_1 + S_2)\nabla u & \hspace*{0mm} =  0     \mbox{ on }  \partial \Omega 
      \end{array} \right .
  \end{equation}
  where
  \begin{subequations}
    \begin{align}
      \label{eq:thm_S1}	S_1 & = 2W({\bm s})  \; + m'\prescript{}{\nabla u}{W_{\bm s}({\bm s})},  \\
      \label{eq:thm_S2}	S_2 & = 2W({\bm s})^{\top} + m'\prescript{}{\nabla u}{W_{\bm s}({\bm s})}.
    \end{align}
  \end{subequations}
\end{theorem}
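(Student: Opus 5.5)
We compute the first variation of $E(u)=\tfrac12\int_\Omega(u-u^0)^2\,d\bm x+\lambda R(u)$ along a direction $v\in\mathcal{C}^1(\Omega)$, using \eqref{eq:EL_min}. The fidelity term contributes $\int_\Omega (u-u^0)v\,d\bm x$.

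\textbf{Differentiating the integrand.} Write $\bm s(\varepsilon)=\nabla m(u+\varepsilon v)=m'(u+\varepsilon v)\nabla(u+\varepsilon v)$. Then
\[
\partial_\varepsilon \bm s|_{\varepsilon=0}=m''(u)v\nabla u+m'(u)\nabla v=\nabla\big(m'(u)v\big).
\]
This identity is the key simplification: set $\phi:=m'(u)v$, so every occurrence of the variation enters only through $\nabla\phi$. Differentiating $\bm s^\top W(\bm s)\bm s$ with respect to $\varepsilon$ gives three terms:
\[
\nabla\phi^\top W\bm s+\bm s^\top W\nabla\phi+\bm s^\top\Big(\sum_i W_{s_i}(\partial_i\phi)\Big)\bm s .
\]
The first two terms combine to $\nabla\phi^\top (W+W^\top)\bm s$. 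For the third, use $\bm s=m'\nabla u$ and the contraction notation \eqref{eq:T_s}, so that $\sum_i \partial_i\phi\,\bm s^\top W_{s_i}\bm s=(m')^2\nabla\phi^\top\,\prescript{}{\nabla u}{W_{\bm s}}\nabla u$, up to the exact index convention. Collecting, the integrand becomes $m'\,\nabla\phi^\top\tfrac12(S_1+S_2)\nabla u$, or with the same structure after distributing the factors. I would keep the $W$ and $W^\top$ pieces separate so that they attach to $S_1$ and $S_2$ respectively, and split the $W_{\bm s}$ term evenly between them.

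\textbf{Integrating by parts.} Apply Green's identity \eqref{eq:EL-div} with $A=m'S\nabla u$. Since the first variation is $\int_\Omega \nabla\phi^\top A\,d\bm x$, this gives
\[
-\int_\Omega \phi\,\mathrm{div}(A)\,d\bm x+\int_{\partial\Omega}\phi\,A\cdot\bm n\,dS .
\]
Substituting back $\phi=m'v$, the interior term becomes $-\int_\Omega v\,m'\,\mathrm{div}(m' S\nabla u)\,d\bm x$. To reach the stated asymmetric form, $(m')^2\mathrm{div}(S_1\nabla u)+\mathrm{div}((m')^2S_2\nabla u)$, I would regroup the terms. For one half, use the product rule $m'\,\mathrm{div}(m'X)=(m')^2\mathrm{div}(X)+m'\nabla m'\cdot X$. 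For the other half, handle the attached $W^\top$ part differently, applying Green's identity to the pairing with $v$ directly rather than with $\phi$, so that no factor $m'$ is pulled outside the divergence. This is presumably the ``div contraction'' lemma the paper refers to (Lemma~\ref{corollary:div_contraction}), and I would establish it along the way as $m'\mathrm{div}(m'X)=\mathrm{div}((m')^2X)-m'\nabla m'\cdot X$.

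\textbf{Boundary condition and conclusion.} Because $v$ is arbitrary on $\partial\Omega$, the boundary integral forces $\bm n\cdot(S_1+S_2)\nabla u=0$, after dividing by the common positive factor $(m')^2$ where $m'\neq0$. Because $v$ is arbitrary in $\Omega$, the interior terms give the stated PDE.

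The main obstacle is the bookkeeping. First, the transpose placement and the contraction convention must be chosen so that the $W_{\bm s}$ term matches $m'\,\prescript{}{\nabla u}{W_{\bm s}}$ with the correct factor of $2$ versus $1$. Second, the regrouping between $S_1$ and $S_2$ must reproduce exactly the asymmetric split in \eqref{eq:ELexp_simplified_2}. I would verify both against the special case $W=I$, where the equation must reduce to \eqref{eq:L2mapping}.
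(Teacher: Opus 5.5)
Up to the single-divergence form your computation is correct, and it takes a different and cleaner route than the paper. The identity $\partial_\varepsilon \bm s|_{\varepsilon=0}=\nabla(m'v)$ lets you write the first variation as $\int_\Omega m'\,\nabla\phi^\top N\nabla u\,d\bm x$ with $\phi=m'v$ and
\[
N=W+W^\top+m'\prescript{}{\nabla u}{W_{\bm s}}=\tfrac12(S_1+S_2).
\]
One application of Green's formula then gives $\delta R(u)v=-\int_\Omega v\,m'\operatorname{div}(m'N\nabla u)\,d\bm x+\int_{\partial\Omega}v\,(m')^2\,\bm n\cdot N\nabla u\,dS$.

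The paper instead writes the integrand as $A\cdot B$ and puts the whole factor $(m')^2$ into $A$. Apart from the $W_{\bm s}$ contribution, the zeroth-order term in $v$ then comes only from differentiating $(m')^2$. It collects as $m'm''\nabla u^\top S_1\nabla u$, with $W$ and not $W^\top$, while the $\nabla v$-terms carry $(m')^2N$. Expanding $\operatorname{div}((m')^2S_1\nabla u)$ and using $2N-S_1=S_2$ then yields the asymmetric form at no cost. In the paper the $S_1/S_2$ split is thus an artifact of the bookkeeping. Your route instead gives the form of Corollary~\ref{corr:EL_contracted} immediately, for arbitrary, not necessarily symmetric, $W$ (with $M=N$), and without Lemma~\ref{corollary:div_contraction}.

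The price is the final regrouping, which you leave vague, and it needs one fact you do not state. Carry out the split you describe: one product rule on the $S_1$ half, the other on the $S_2$ half. You get
\[
2m'\operatorname{div}(m'N\nabla u)=(m')^2\operatorname{div}(S_1\nabla u)+\operatorname{div}\big((m')^2S_2\nabla u\big)+m'm''\,\nabla u^\top(S_1-S_2)\nabla u .
\]
The residual vanishes only because $S_1-S_2=2(W-W^\top)$ is antisymmetric, so its quadratic form is zero. Lemma~\ref{corollary:div_contraction} cannot supply this, since it has the same tensor in both slots. You must add this observation to close the proof.

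Expect also a factor of $2$ once you track constants. You get $\delta R(u)v=-\tfrac12\int_\Omega v\,[(m')^2\operatorname{div}(S_1\nabla u)+\operatorname{div}((m')^2S_2\nabla u)]\,d\bm x$, so the exact E-L equation has $\lambda/2$ in front of the bracket, not $\lambda$. The stated PDE therefore holds only up to rescaling $\lambda$; the paper's proof silently doubles the equation at \eqref{eq:PDEnoLinearity}. For the same reason, your $W=I$ check gives $u-u^0-2\lambda m'\operatorname{div}(m'\nabla u)=0$. This matches \eqref{eq:L2mapping} only up to a constant absorbed into $\lambda$, so do not read the mismatch as an error in your work. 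Your boundary condition, with the caveat $m'\neq 0$, is handled correctly.
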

\begin{proof}
  The proof is a generalization of the results presented in~\cite{Astrom2014} (cf. Theorem 1) and is given in Appendix~\ref{appendix:proof_quad_func}.
\end{proof}

\begin{table*}[t]
  \centering
  \small
  \begin{tabular*}{\linewidth}{@{\extracolsep{\fill}}ll ll}
    \toprule\noalign{\smallskip}
    \textbf{Methods}  &  \textbf{Regularizer} & \textbf{$M$ in E-L equation}         & \textbf{Comment}  \\ 
    \emph{\textbf{Without} mapping}        &                 & $u-u_0 - \lambda \dive (M\nabla u) = 0$    &                   \\
    \noalign{\smallskip}\hline\noalign{\smallskip}
    Isotropic           &  $\displaystyle\int_\Omega |\nabla u|^2 \; d{\bm{x}}$	 &  $I$ \\ 
    \noalign{\smallskip}\hline\noalign{\smallskip}
    TV \cite{Rudin1992}     &   $\displaystyle \int_\Omega |\nabla u| \; d{\bm{x}} $ &  $I/|\nabla u|$   & \\
    \noalign{\smallskip}\hline\noalign{\smallskip}
    EAD \cite{diva2:543914} &   $\displaystyle \int_\Omega \nabla u^{\top} W(\nabla u) \nabla u \; d{\bm{x}}$ & $W + W^{\top} + W_{\bm s}$ \\
    \noalign{\smallskip}\hline\noalign{\smallskip}
    GETV \cite{Astrom2014}         &  $\displaystyle \int_\Omega |\nabla u|(\lambda_1( v\cdot \xi)^2 + \lambda_2(w\cdot \xi)^2)  \; d{\bm{x}}$ & $S$  &  \parbox{3.5cm}{see \eqref{eq:GETV_S} with  $m(u) = u$}\\

    \noalign{\smallskip}\toprule\noalign{\smallskip}
    \emph{\textbf{With} mapping}             &     & $u-u_0 - \lambda m'(u)\dive (M \nabla u) = 0$ \\
    \noalign{\smallskip}\hline\noalign{\smallskip}
    \parbox{1cm}{TIF \cite{diva2:608779} \\ D3 \cite{Astrom2013a}}    &  $\displaystyle \int_\Omega |\nabla m(u)|^2 \; d{\bm{x}}$  & $m'(u)I$ \\ 
    \noalign{\smallskip}\hline\noalign{\smallskip} 
    \noalign{\smallskip}\noalign{\smallskip}
    TVm (\textbf{\emph{novel}})   & $\displaystyle \int_{\Omega} |\nabla m(u)| \; d{\bm{x}} $ &  $I/|\nabla u|$ & See \eqref{eq:TVmEnergy}-\eqref{eq:TVm} \\
    \noalign{\smallskip}\hline\noalign{\smallskip}
    GETVm  (\textbf{\emph{novel}})  &  $\displaystyle \int_\Omega |\nabla m(u)|(\lambda_1( v\cdot \xi)^2 + \lambda_2(w\cdot \xi)^2)  \; d{\bm{x}}$ & $m'(u)S$ &  see \eqref{eq:GETV_S} \\
    \noalign{\smallskip}\toprule\noalign{\smallskip}
  \end{tabular*}
  \caption{Examples of E-L equations for different choices of mapping functions and tensors. We have listed linear diffusion, total variation (TV), extended anisotropic diffusion (EAD), gradient energy total variation (GETV), targeted iterative filtering (TIF) and density driven diffusion (D3). The corresponding boundary conditions can be computed using~\eqref{eq:thm_S1} and~\eqref{eq:thm_S2}. The methods TVm and GETVm are extensions, introduced in this work, of TV and GETV with a mapping function.}
  \label{tab:overview}      
\end{table*}

Depending on choice of mapping $m$ and tensor $W$ in \eqref{eq:Rfunctional}, we show in Table \ref{tab:overview} that the proposed scheme generalizes several diffusion methods from literature. The same table also shows the methods that are subject to further study in this paper. In the subsequent sections we investigate some additional selections of $W$ and $m$ that extend the gradient energy total variation scheme (GETV) presented in~\cite{Astrom2014} to include the mapping. Also we extend the scalar TV formulation to include the mapping function. Before proceeding to specify $W$ and $m$, consider the following properties of the proposed functional and corresponding E-L equation in Theorem~\ref{thm:R}.  

\subsection{Properties}
\label{subsec:properties}
In the general case $\prescript{}{\nabla u}{W_{\bm s}}$ is a non-symmetric tensor, however one may be interested in positive definiteness or other symmetry constraints. For example it is often desirable in orientation estimation that the magnitude response of the estimated orientation is equivalent in vertical and horizontal directions in the image plane. If $S_1$ and $S_2$ in \eqref{eq:ELexp_simplified_2} are symmetric, then it aids the subsequent analysis of otherwise complex expressions to determine positive-definiteness. Therefore, we derive necessary conditions for $W$ to yield a symmetric $\prescript{}{\nabla u}{W_{\bm s}}$ in this section. In the following, we show that a tensor $S$ exists such that the tensors defined by $S_1$ and $S_2$ in the E-L equation are symmetric, \ie, 
\begin{equation}
	\label{eq:EL-equal-brackets}
	S = S_1 = S_2.
\end{equation}
It is easy to identify two additional constraints required for $S$ to be of the form \eqref{eq:EL-equal-brackets} from \eqref{eq:thm_S1},\eqref{eq:thm_S2}, \ie, it is necessary that $W = W^{\top}$ and $\prescript{}{\nabla u}{W_{\bm s}} = (\prescript{}{\nabla u}{W_{\bm s}})^{\top}$. 
Let 
\begin{equation}\label{eq:Wabbc}
  W = \begin{pmatrix} a & b \\ b & c \end{pmatrix},
\end{equation}
then the following corollary state the set of symmetric tensors $W$ that yield a symmetric tensor $S$ in \eqref{eq:EL-equal-brackets}.

\begin{corollary}\label{corr:sym_Ws}
  Let $W \in \mathbb{R}^{2\times 2}$ be the symmetric tensor \eqref{eq:Wabbc} with differential components, then if 
  \begin{itemize}
  \item[i)] $b_{s_1} = a_{s_2}$
  \item[ii)] $c_{s_1} = b_{s_2}$
  \end{itemize}
  are both satisfied, $\prescript{}{\nabla u}{W_{\bm s}}$ is also a symmetric tensor.
\end{corollary}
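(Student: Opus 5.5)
The plan is to compute the contracted matrix $\prescript{}{\nabla u}{W_{\bm s}}$ explicitly for $n=2$ using definition \eqref{eq:T_s}, and then compare its two off-diagonal entries. With $W$ as in \eqref{eq:Wabbc}, differentiating componentwise gives
\begin{equation*}
  W_{s_1} = \begin{pmatrix} a_{s_1} & b_{s_1} \\ b_{s_1} & c_{s_1} \end{pmatrix}, \qquad
  W_{s_2} = \begin{pmatrix} a_{s_2} & b_{s_2} \\ b_{s_2} & c_{s_2} \end{pmatrix}.
\end{equation*}
Both are symmetric because $W$ is. Write $\nabla u = (u_x, u_y)^{\top}$. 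By \eqref{eq:T_s}, the first row of $\prescript{}{\nabla u}{W_{\bm s}}$ is $\nabla u^{\top} W_{s_1}$ and the second row is $\nabla u^{\top} W_{s_2}$. This gives the $2\times 2$ matrix
\begin{equation*}
  \prescript{}{\nabla u}{W_{\bm s}} =
  \begin{pmatrix}
    u_x a_{s_1} + u_y b_{s_1} & u_x b_{s_1} + u_y c_{s_1} \\
    u_x a_{s_2} + u_y b_{s_2} & u_x b_{s_2} + u_y c_{s_2}
  \end{pmatrix}.
\end{equation*}

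A $2\times 2$ matrix is symmetric exactly when its $(1,2)$ and $(2,1)$ entries agree, so the only condition to check is
\begin{equation*}
  u_x b_{s_1} + u_y c_{s_1} = u_x a_{s_2} + u_y b_{s_2}.
\end{equation*}
Rearranged, this reads $u_x (b_{s_1} - a_{s_2}) + u_y (c_{s_1} - b_{s_2}) = 0$. Under hypotheses i) and ii) both brackets vanish identically, so the identity holds for every $\nabla u$. Hence $\prescript{}{\nabla u}{W_{\bm s}}$ is symmetric, which proves the claim.

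There is no real obstacle here; the only point needing care is the index convention in \eqref{eq:T_s}. Row $i$ of the contracted matrix is $\nabla u^{\top}$ times the derivative of $W$ with respect to $s_i$, not a column. Getting this backwards would swap which entries are compared, although by symmetry of $W_{s_i}$ the final condition would come out the same.

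It is also worth remarking, though the statement does not require it, that i) and ii) are necessary if symmetry is required for all gradients. Since $\nabla u$ can be chosen with arbitrary direction, the two brackets must each vanish. This links the conditions to the integrability (curl-free) conditions on the rows of $W$ as functions of $\bm s$.
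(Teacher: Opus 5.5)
Your argument is correct and is essentially the paper's proof: the paper writes $\prescript{}{\nabla u}{W_{\bm s}}$ as $u_x$ times one matrix plus $u_y$ times another, which is exactly your explicit matrix, and reads i) and ii) off the equality of the off-diagonal entries. One caution on your optional necessity remark: in the paper's setting $\bm s = m'(u)\nabla u$ is parallel to $\nabla u$, so the two brackets need not vanish separately (e.g.\ the TVm tensor $W = |\bm s|^{-1} I$ violates i) and ii), yet its contraction $-|\bm s|^{-3}\,\bm s\,\nabla u^{\top}$ is symmetric), and necessity holds only if $\bm s$ and $\nabla u$ are treated as independent variables.
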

\begin{proof}
  Since $W$ is symmetric we can write $\prescript{}{\nabla u}{W_{\bm s}}$ as 
  \begin{equation}
    \prescript{}{\nabla u}{W_{\bm s}({\bm s})} =
    u_x
    \begin{pmatrix} 
      a_{s_1} & b_{s_1} \\
      a_{s_2} & b_{s_2}
    \end{pmatrix} 
    + u_y
    \begin{pmatrix} 
      b_{s_1} & c_{s_1} \\
      b_{s_2} & c_{s_2}
    \end{pmatrix}.
  \end{equation}
  Conditions $i)$ and $ii)$ follow directly from the constraint $\prescript{}{\nabla u}{W_{\bm s}} = (\prescript{}{\nabla u}{W_{\bm s}})^{\top}$. \qed
\end{proof}

The solution of $i)$ and $ii)$ is clearly not unique. For example, one solution is given by 
\begin{equation}
  b = \int a_{s_2} \; ds_1 = \int c_{s_1} \; ds_2. 
\end{equation}
If instead $b$ is given, then $a$ and $c$ must satisfy
\begin{align}
  a &= \int b_{s_1} \; ds_2, \\
  c &= \int b_{s_2} \; ds_1,	
\end{align}
which results in
\begin{equation}
  W = \begin{pmatrix} \displaystyle \int b_{s_1} \; ds_2  & b \\ b & \displaystyle \int b_{s_2} \; ds_1 \end{pmatrix},
\end{equation}
to which we can also add any constant symmetric matrix. We note that in the general case it may be difficult, or not interesting, to evaluate Corollary~\ref{corr:sym_Ws}. However, as a theoretical tool it may serve the purpose to derive classes of tensors $S$ that are symmetric.

The following lemmas are particularly useful and enable the reformulation of the E-L equation~\eqref{eq:ELexp_simplified_2} in a single divergence form. In this case we set $M = S_1 = S_2$ where $M$ in contrast to $S$ in \eqref{eq:EL-equal-brackets} is not necessarily a symmetric tensor.

\begin{lemma}\label{lemma:divmu}
  Let $M\in \mathbb{R}^{n\times n}$, then
  \begin{equation}\label{eq:divmu}
    \div{M\nabla u} = \div{M}\nabla u + \tr{M Hu},
  \end{equation}
  where $H$ is the Hessian matrix.
\end{lemma}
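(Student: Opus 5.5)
We prove the identity by expanding both sides in coordinates and applying the product rule; no earlier result is needed. We fix conventions first. $M=(m_{ij})$ is a matrix field whose entries are differentiable functions of $\bm x$, and $u\in\mathcal{C}^2(\Omega)$. The divergence of a matrix field is the row vector obtained by taking the divergence of each column:
\begin{equation*}
  \div{M} = \Big(\sum_{i=1}^n \partial_{x_i} m_{i1}, \; \dots, \; \sum_{i=1}^n \partial_{x_i} m_{in}\Big),
\end{equation*}
so that $\div{M}\nabla u$ is a scalar. $Hu=(\partial_{x_j}\partial_{x_k}u)_{jk}$ denotes the Hessian.

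The computation goes as follows. The $i$-th component of $M\nabla u$ is $\sum_j m_{ij}\,\partial_{x_j}u$. Hence
\begin{equation*}
  \div{M\nabla u}=\sum_{i,j}\partial_{x_i}\big(m_{ij}\,\partial_{x_j}u\big)
  =\sum_{j}\Big(\sum_i \partial_{x_i} m_{ij}\Big)\partial_{x_j}u+\sum_{i,j} m_{ij}\,\partial_{x_i}\partial_{x_j}u .
\end{equation*}
The first sum equals $\div{M}\nabla u$ by the column convention above.

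For the second sum, note that $\tr{MHu}=\sum_{i,j} m_{ij}(Hu)_{ji}$, and $(Hu)_{ji}=\partial_{x_j}\partial_{x_i}u$. Since $u\in\mathcal{C}^2$, Schwarz's theorem gives that $Hu$ is symmetric. Therefore $(Hu)_{ji}=(Hu)_{ij}=\partial_{x_i}\partial_{x_j}u$, and the second sum is exactly $\tr{MHu}$. This proves the identity.

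The only delicate point is the convention for $\div{M}$. We must apply divergence column-wise, matching the index summed against $\partial_{x_i}$ in $M\nabla u$. With the row-wise convention one would instead obtain $\div{M^{\top}}\nabla u$, which coincides with the claimed formula only for symmetric $M$. Since $M$ need not be symmetric here, we state the column-wise convention explicitly in the proof.

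Symmetry of the Hessian is also essential: it is what allows the trace term to be written as $\tr{MHu}$ rather than $\tr{M^{\top}Hu}$. This is guaranteed by the $\mathcal{C}^2$ hypothesis on $u$ from Theorem~\ref{thm:R}.
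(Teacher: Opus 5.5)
Your proof is correct and follows the paper's argument: expand $\div{M\nabla u}=\sum_{i,j}\partial_{x_i}(M_{ij}u_{x_j})$ by the product rule, then identify the two resulting sums with $\div{M}\nabla u$ and $\tr{MHu}$. Your explicit column-wise convention for $\div{M}$ and your appeal to the symmetry of $Hu$ only make precise what the paper's one-line computation leaves implicit. The column-wise convention is also the one consistent with how the paper later uses $\div{(m')^2M}$ in the proof of Lemma~\ref{corollary:div_contraction}.
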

\begin{proof}
  We first write the left hand-side of \eqref{eq:divmu} as a sum, then by expanding its partial derivatives we get
  \begin{align}
    \notag    \div{M\nabla u} &= \sum_{ij} \partial_{x_i} (M_{i,j} u_{x_j})  \\    
    \notag    &= \sum_{ij} (\partial_{x_i} M_{i,j}) u_{x_j} + M_{i,j}u_{x_ix_j} \\    
    &= \div{M} \nabla u + \tr{M H u},
  \end{align}
  which shows the result. \qed
\end{proof}

\begin{lemma}\label{corollary:div_contraction}
Given a tensor $M \in \mathbb{R}^{n\times n}$ with differentiable components, the following relation holds
\begin{align}
\notag (m')^2 \div{M \nabla u} + & \div{(m')^2 M\nabla u} \\
\label{lemma:contraction} & \hspace*{5mm} =  2 m' \div{m' M \nabla u} .
\end{align}
\end{lemma}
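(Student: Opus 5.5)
The identity is a product-rule computation, so I will expand both sides with the Leibniz rule for the divergence and compare. The key tool is the elementary identity $\div{\phi \bm{F}} = \phi\,\div{\bm{F}} + \nabla\phi^{\top}\bm{F}$ for a differentiable scalar $\phi$ and vector field $\bm{F}$. I apply it with $\bm{F} = M\nabla u$ and with $\phi = (m')^2$ or $\phi = m'$. Here $m'$ means $m'(u(\bm x))$, so $\nabla m' = m''(u)\nabla u$, which is well defined since $m\in\mathcal{C}^2$. Lemma \ref{lemma:divmu} is not really needed, because everything stays at the level of the vector field $M\nabla u$.

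\textbf{Left-hand side.} Expanding the second term gives
\begin{equation*}
\div{(m')^2 M\nabla u} = (m')^2\div{M\nabla u} + \nabla\big((m')^2\big)^{\top} M\nabla u .
\end{equation*}
By the chain rule, $\nabla\big((m')^2\big) = 2m'\,\nabla m'$. Hence the left-hand side of \eqref{lemma:contraction} equals
\begin{equation*}
2(m')^2\div{M\nabla u} + 2m'\,(\nabla m')^{\top} M\nabla u .
\end{equation*}

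\textbf{Right-hand side.} Expanding the inner divergence gives
\begin{equation*}
\div{m' M\nabla u} = m'\,\div{M\nabla u} + (\nabla m')^{\top} M\nabla u .
\end{equation*}
Multiplying by $2m'$ produces exactly the same two terms, which proves the lemma.

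\textbf{Main obstacle.} The only delicate point is the regularity needed to apply the product rule pointwise. I need $M\nabla u$ differentiable, which follows from $u\in\mathcal{C}^2$ and $M$ having differentiable components. I also need $m'\circ u$ to be $\mathcal{C}^1$, which follows from $m\in\mathcal{C}^2$. I also have to check that the contraction $(\nabla m')^{\top}M\nabla u$ appears in the same order in both expansions. It does, because in each expansion the gradient of the scalar multiplies the vector field $M\nabla u$ from the left. As a result, no symmetry of $M$ is required, consistent with the remark that $M$ need not be symmetric.
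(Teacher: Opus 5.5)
Your proof is correct, and it follows a slightly different and more economical route than the paper's.

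\textbf{The paper's route.} The paper works through Lemma~\ref{lemma:divmu}. It splits each divergence into a matrix-divergence part and a Hessian part, $\div{M\nabla u} = \div{M}\nabla u + \tr{MHu}$, and likewise for $(m')^2M$. It then uses $\div{(m')^2M} = (m')^2\div{M} + 2m'm''\nabla u^{\top}M$ to collect the remaining terms. Finally, it reassembles $\div{m'M}\nabla u + \tr{m'MHu}$ into $\div{m'M\nabla u}$ by applying Lemma~\ref{lemma:divmu} a second time.

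\textbf{Your route.} You apply the scalar Leibniz rule $\div{\phi\bm{F}} = \phi\,\div{\bm{F}} + \nabla\phi^{\top}\bm{F}$ directly to the intact vector field $\bm{F} = M\nabla u$. As a result, neither $\div{M}$ nor the Hessian appears, and nothing has to be reassembled.

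\textbf{Comparison.} Your version is shorter and more general. Your algebra never actually uses $\nabla m' = m''\nabla u$, so it proves $\phi^2\div{\bm{F}} + \div{\phi^2\bm{F}} = 2\phi\,\div{\phi\bm{F}}$ for any $\mathcal{C}^1$ scalar $\phi$ and any differentiable vector field $\bm{F}$. This shows that neither the form $M\nabla u$ nor the specific choice $\phi = m'\circ u$ matters. It is also the same expansion the paper itself uses in \eqref{eq:proofEL}. The paper's route, in exchange, makes visible that both sides contain the same Hessian term $2(m')^2\tr{MHu}$, so only the terms involving $\div{M}$ and $m''$ need to be matched.

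Your ``order of contraction'' check is unnecessary: $\nabla\phi^{\top}\bm{F}$ is a scalar product, and symmetry of $M$ plays no role in either proof.
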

\begin{proof}
From Lemma \ref{lemma:divmu}, the left hand-side of~\eqref{lemma:contraction} can be written as
\begin{align*}
	& (m')^2 \Big (\div{M} \nabla u + \tr{MHu} \Big )  \\
	& \hspace{1cm}  + \div{(m')^2M} \nabla u + \tr{(m')^2 MHu} \\ 
	& = \Big ( (m')^2 \div{M} + 2m'm''\nabla u^{\top} M \\
        & \hspace{1cm} + (m')^2\div{M} \Big )  \nabla u + 2(m')^2 \tr{MHu} \\ 
	& = 2 \Big ( (m')^2 \div{M} + m'm''\nabla u^{\top} M \Big )\nabla u \\
        & \hspace{1cm} + 2(m')^2 \tr{MHu} \\ 
	& =  2 m' \Big [ \div{m' M}  \nabla u + \tr{m'MHu} \Big ],
\end{align*}
which concludes the proof. \qed
\end{proof}

Observe that depending on $m$, its derivative $m'$ can in the general case obtain negative values. However, due to linearity of the divergence operator, any non-negative factors originating from $m'$ cancel as indicated in the right-hand side of \eqref{lemma:contraction}, \ie,
\begin{equation*}
  (\pm m')\div{(\pm m') M \nabla u}  \iff +  m'\div{ m' M \nabla u}.
\end{equation*}
Assuming that $M$ is at least positive (semi)-definite, then the final discretized E-L equation will have a convergent behaviour.

In particular, with the sufficient condition for tensor symmetry of $W$ and $\prescript{}{\nabla u}{W_{\bm s}}$, and the above simplification of the divergence terms in Lemma~\ref{corollary:div_contraction}, we obtain the E-L equation of the regularization term in Theorem~\ref{thm:R} on the following (considerably) simplified form.
\begin{corollary}\label{corr:EL_contracted}
  Let $W \in \mathbb{R}^{n\times n}$ be a symmetric tensor, then the E-L equation~\eqref{eq:ELexp_simplified_2} is given by
  \begin{equation}\label{eq:ELexp_singleDIV}
    \left \{
      \begin{array}{rlll}
        u-u^0 - \lambda m'\div{m' M \nabla u} & = 0 & \mbox{in} & \Omega \\
        \bm{n} \cdot M\nabla u & = 0    & \mbox{on} & \partial \Omega
      \end{array} \right .
  \end{equation}
  where
  \begin{equation}
    M = 2W({\bm s})   + m'\prescript{}{\nabla u}{W_{\bm s}({\bm s})}.
  \end{equation}
\end{corollary}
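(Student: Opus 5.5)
Starting from Theorem~\ref{thm:R}, I would specialize \eqref{eq:ELexp_simplified_2} to symmetric $W$ and then collapse the two divergence terms into one using Lemma~\ref{corollary:div_contraction}.

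First, if $W=W^{\top}$, then comparing \eqref{eq:thm_S1} and \eqref{eq:thm_S2} gives
\begin{equation*}
S_1 = 2W({\bm s}) + m'\prescript{}{\nabla u}{W_{\bm s}({\bm s})} = 2W({\bm s})^{\top} + m'\prescript{}{\nabla u}{W_{\bm s}({\bm s})} = S_2 .
\end{equation*}
Denote this common tensor by $M$. Both $S_1$ and $S_2$ contain the same contracted term, so only the symmetry of $W$ is needed here. Symmetry of $\prescript{}{\nabla u}{W_{\bm s}}$ in the sense of Corollary~\ref{corr:sym_Ws} is not required, which is why $M$ need not be symmetric.

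Next, substituting $S_1=S_2=M$ into the interior equation of \eqref{eq:ELexp_simplified_2} produces the combination $(m')^2\div{M\nabla u} + \div{(m')^2 M\nabla u}$. By Lemma~\ref{corollary:div_contraction} this equals $2m'\div{m' M\nabla u}$. The interior equation therefore becomes $u-u^0-2\lambda m'\div{m' M\nabla u}=0$. I would absorb the factor $2$ into the regularization parameter (rescaling $\lambda$), or equivalently into the normalization of $M$, to obtain the stated form \eqref{eq:ELexp_singleDIV}. The proof should state explicitly which convention is used, since this is the only point where the constant is not carried through verbatim.

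For the boundary condition, $\bm n\cdot(S_1+S_2)\nabla u = 2\,\bm n\cdot M\nabla u = 0$, which is equivalent to $\bm n\cdot M\nabla u=0$. No step is a real obstacle. The one point needing care is that Lemma~\ref{corollary:div_contraction} requires $M$ to have differentiable components. This follows from $u,m\in\mathcal C^2(\Omega)$ together with differentiability of $W$ and of $W_{\bm s}$, the latter being implicitly assumed once $\div{M\nabla u}$ is formed. The other point is the bookkeeping of the constant $2$ discussed above.
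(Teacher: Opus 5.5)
Your argument matches the paper's: symmetry of $W$ gives $S_1=S_2=M$, and Lemma~\ref{corollary:div_contraction} merges the two divergence terms, while the paper simply passes over the factor $2$ that you flag. That factor does not require rescaling $\lambda$: carrying constants through the first variation of \eqref{eq:E_diffusion}, Theorem~\ref{thm:R} should have $\lambda/2$ in front of the bracket (the appendix doubles the equation in \eqref{eq:PDEnoLinearity} without adjusting $\lambda$), so the corollary holds exactly as stated; for example, $m(u)=u$, $W=I$ gives $u-u^0-2\lambda\Delta u$ both by direct differentiation and from \eqref{eq:ELexp_singleDIV} with $M=2I$.
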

\begin{proof}
  Since $W=W^{\top}$ it follows that $S_1 = S_2$, whereby Lemma~\ref{corollary:div_contraction} can be applied to \eqref{eq:ELexp_simplified_2}. \qed
\end{proof}

\begin{figure*}[t]
  \centering
  \newcommand{\tabwidth}{1}	
  \begin{minipage}{0.29\linewidth}
    \vspace{-15mm}
    \caption{(a) Illustration of eigenvector basis at coordinate $(x,y)$. Dashed (red) arrows indicate the eigenvectors of $W$ and $S$ and thick (black) arrows $\nabla u\nabla u^{\top}$. (b) Illustration of the paraboloid~\eqref{eq:paraboloid} where we set $c = \tau_1q_1^2 + \tau_2q_2^2$ and $\tau_1, \tau_2 \geq 0$. }
    \label{fig:eigenvectors}
  \end{minipage}
  \hspace{5mm}
  \begin{minipage}{0.65\linewidth}
    \hfill
    \begin{overpic}[width=0.45\textwidth]{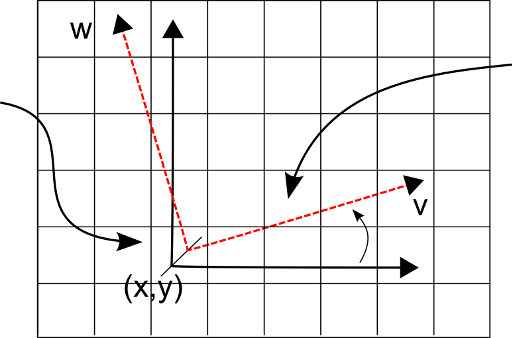}
      \put (35,60) {$\xi^\top$}
      \put (-17,50) {$\nabla u \nabla u^{\top}$}
      \put (75,19) {$\theta$}
      \put (85,58) {$W(\nabla u)$}
      \put (80,5) {$\xi$}
    \end{overpic}
    \hfill \hfill
    \begin{overpic}[width=0.4\textwidth]{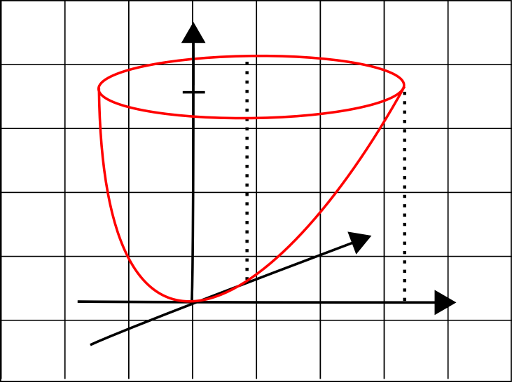}
      \put (42,67) {$\Phi(\nabla u)$}
      \put (42,55) {$c$}
      \put (35,25) {$\sqrt{\frac{\tau_2}{c}}$}
      \put (71,25) {$q_2$}
      \put (68,6) {$\sqrt{\frac{\tau_1}{c}}$}
      \put (88,10) {$q_1$}
    \end{overpic} 
    \hfill \phantom{I} \\
    \begin{tabularx}{\tabwidth\textwidth}{XX}
      \hspace{25mm} (\textbf{a}) & \hspace{25mm} (\textbf{b}) 
    \end{tabularx}
  \end{minipage}
\end{figure*}

\textbf{Convexity} In this section we investigate and discuss convexity of the proposed functional in the case $m(u) = u$ and $W$ is a symmetric and positive semidefinite tensor. If $m$ is a monotonically increasing function, the below results can be generalized using similar arguments, however, for a general function $m$ the description of a convex function remains an open problem. Our approach is based on the eigenvalue decomposition of the tensor $W$. 

Consider the quadratic form $\nabla u^{\top} W(\nabla u) \nabla u$ rewritten in the eigendecomposition of $W$ with eigenvalues $\lambda_1,\lambda_2 \geq 0$ and eigenvectors $v,w$, then 
\begin{align}
\notag	 \nabla u^{\top} W(\nabla u) \nabla u 
	& = \nabla u^{\top} [ \lambda_1 vv^{\top} + \lambda_2 ww^{\top} ] \nabla u  \\
\label{eq:phi_orient}	& \hspace{-10mm} = \lambda_1 v^{\top} [\nabla u \nabla u^{\top}] v + \lambda_2 w^{\top} [\nabla u \nabla u^{\top}] w .
\end{align} 
The product $\nabla u\nabla u^{\top}$ has orthonormal eigenvectors $\xi=(\xi_1,\xi_2)^{\top}$ and $\xi^\perp = (\xi_2, -\xi_1)^{\top}$ such that $P = (\xi, \xi^\perp)$ and $\Lambda$ has the corresponding eigenvalues $\kappa_1$ and $\kappa_2$ on its diagonal. Note that by the spectral theorem \cite{Horn:1985:MA:5509}, the eigendecomposition of $\nabla u\nabla u^{\top}$ is \emph{always} well-defined, \ie, $\xi$ is not singular. This is shown by the generalized definition of $\xi$, \ie, in the case of $|\nabla u| \neq 0$ then $\xi=\nabla u/|\nabla u|$, and in the case of $|\nabla u| = 0$ we let $P = I$ where $I$ is the identity matrix.

In the following, we substitute the eigendecomposition of $\nabla u \nabla u^{\top} = P^{\top}\Lambda P$ into~\eqref{eq:phi_orient}:
\begin{align}
\notag	\nabla u^{\top} W(\nabla u) \nabla u  
\notag	&=   \lambda_1 v^{\top}P \Lambda P^{\top} v + \lambda_2 w^{\top}P \Lambda P^{\top} w  \\
\label{eq:quad_nabla_eig}	&\hspace*{-25mm} =   \lambda_1 v^{\top}P \begin{pmatrix} \kappa_1 & 0 \\ 0 & \kappa_2 \end{pmatrix} P^{\top} v + \lambda_2 w^{\top}P \begin{pmatrix} \kappa_1 & 0 \\ 0 & \kappa_2 \end{pmatrix} P^{\top} w ,
\end{align}
and insert $\kappa_1 = |\nabla u|^2$ and $\kappa_2 = 0$.
Then after rewriting~\eqref{eq:quad_nabla_eig} we obtain 
\begin{equation}\label{eq:phi_mu}
	 \nabla u^{\top} W(\nabla u) \nabla u   
	= \lambda_1 |\nabla u|^2 (v\cdot \xi)^2 + \lambda_2|\nabla u|^2(w\cdot \xi)^2. 
\end{equation}
In \eqref{eq:phi_mu}, $v \cdot \xi = \cos(\theta)$ and $w \cdot \xi = \sin(\theta)$, where $\theta$ is the rotation angle as shown in Figure \ref{fig:eigenvectors} (a). This means that the scalar products define the rotation of $W$ in relation to the image gradient direction. Note that as $W$ describes the local directional information, its eigenvectors will be parallel to the orthonormal eigenvectors of $\nabla u\nabla u^{\top}$, \ie, $v\,||\,\xi$ and $w\,||\,\xi^\perp$ if $\theta = 0$. 
\begin{remark}
	We could have arrived at~\eqref{eq:phi_mu} directly by introducing $|\nabla u|^2$ in the nominator and denominator of \eqref{eq:phi_orient}. However, this approach would not be valid since it introduces a singularity when $|\nabla u|=0$. Instead we exploited properties from the spectral theorem and thus~\eqref{eq:phi_mu} is always well-defined even in the case $|\nabla u| = 0$.  
\end{remark}

The next result shows that the proposed functional is convex when $W$ is symmetric and positive semidefinite with eigenvalues $\lambda_1,\lambda_2 \geq 0$. When the functional is strictly convex it is guaranteed from the theory of convex optimization that its stationary point gives the minimum energy, and thus results in the optimal solution \cite{Boyd:2004:CO:993483}. Note that $m(u) = u$. Now consider the convexity of the proposed functional in the below Corollary. 

\begin{corollary}\label{thm:S}
  Let $m(u) = u$, then the functional, $R$ in \eqref{eq:Rfunctional}, is convex w.r.t. the element $u$. 
\end{corollary}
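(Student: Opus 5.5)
The plan is to reduce convexity of $R$ to pointwise convexity of the integrand as a function of the gradient variable, and then to use linearity of $u\mapsto\nabla u$ and monotonicity of the integral. With $m(u)=u$ we have ${\bm s}=\nabla u$, and the integrand is $\Phi({\bm q}) = {\bm q}^{\top} W({\bm q})\,{\bm q}$ evaluated at ${\bm q}=\nabla u({\bm x})$. Suppose $\Phi$ is convex on $\mathbb{R}^n$. Then for $u_1,u_2$ and $t\in[0,1]$ we get $\nabla(tu_1+(1-t)u_2) = t\nabla u_1+(1-t)\nabla u_2$. Hence $\Phi(\nabla(tu_1+(1-t)u_2)) \le t\Phi(\nabla u_1)+(1-t)\Phi(\nabla u_2)$ pointwise in $\Omega$, and integrating over $\Omega$ gives $R(tu_1+(1-t)u_2)\le tR(u_1)+(1-t)R(u_2)$.

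For the pointwise step I would use the eigendecomposition derived just above, namely \eqref{eq:phi_mu}: $\Phi({\bm q}) = \lambda_1|{\bm q}|^2(v\cdot\xi)^2+\lambda_2|{\bm q}|^2(w\cdot\xi)^2$ with $\lambda_1,\lambda_2\ge 0$. In the orthonormal eigenbasis $(v,w)$ of $W$ write $q_1 = {\bm q}\cdot v = |{\bm q}|\cos\theta$ and $q_2={\bm q}\cdot w = |{\bm q}|\sin\theta$. This is well defined even when ${\bm q}=0$, by the spectral-theorem argument in the Remark. In these coordinates the integrand becomes the paraboloid $\Phi = \tau_1 q_1^2+\tau_2 q_2^2$ with $\tau_i=\lambda_i\ge0$, as in Figure~\ref{fig:eigenvectors}(b). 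Its Hessian with respect to $(q_1,q_2)$ is $\mathrm{diag}(2\tau_1,2\tau_2)$, which is positive semidefinite. Equivalently, in the original coordinates the Hessian is $2W$, which is positive semidefinite since $W$ is symmetric positive semidefinite. So $\Phi$ is convex, since a psd quadratic form composed with an orthogonal change of variables stays psd. This also yields the strict-convexity remark: if $\lambda_1,\lambda_2>0$ the paraboloid is strictly convex, and together with the strictly convex fidelity term in \eqref{eq:E_diffusion} the stationary point of Theorem~\ref{thm:R} (in the simplified form of Corollary~\ref{corr:EL_contracted}) is the minimizer.

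The main obstacle is that $W$, and hence $\lambda_i$, $v$ and $w$, depend on ${\bm q}=\nabla u$ itself. The Hessian computation above is therefore exact only when the tensor is treated as given at each point, i.e.\ frozen. In the genuinely nonlinear case, differentiating $\Phi$ twice produces extra terms involving $W_{\bm s}$, exactly the contraction $\prescript{}{\nabla u}{W_{\bm s}}$ that appears in $M$. I would handle this by stating the argument for $W$ fixed, which is the setting the paraboloid picture describes. For the nonlinear case I would add the check that the full Hessian $2W + 2\,\prescript{}{\nabla u}{W_{\bm s}} + (\text{second-derivative terms})$ remains positive semidefinite. A sufficient structural condition is that the $\tau_i$ are nonnegative and the rotation $\theta$ is locally independent of ${\bm q}$. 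If that cannot be verified in general, the corollary should be read in the frozen-coefficient sense, which is the version the proof actually establishes.
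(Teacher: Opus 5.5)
Your frozen-coefficient argument is correct, and it follows the paper's own route (eigendecomposition of $W$, then a paraboloid in the eigenbasis) carried out properly. The paper writes $\Phi(\nabla u)=\tau_1q_1^2+\tau_2q_2^2$ with $q=V\xi$ a \emph{unit} vector and $\tau_i=|\nabla u|^2\lambda_i(\nabla u)$. It then infers convexity of $R$ from the positive curvature of this paraboloid and from ``$u$ is mapped continuously to the paraboloid''. Your choice $q_i=\bm q\cdot v$, $\tau_i=\lambda_i$ is what makes $\Phi$ a genuine positive semidefinite quadratic form in $\nabla u$ when $W$ does not depend on $u$ (it may still vary with $\bm x$). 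The reduction via linearity of $u\mapsto\nabla u$ and integration of the pointwise inequality is sound.

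The obstacle you isolate is exactly the step the paper's proof passes over. Curvature in the variables $q$, with a basis and coefficients that themselves move with $\nabla u$, says nothing about convexity in $\nabla u$. It also cannot be repaired, because the corollary is false for genuinely $\nabla u$-dependent $W$. Take $W(\bm s)=(1+|\bm s|^2)^{-1}I$, which is symmetric positive definite, so $\Phi(\bm s)=|\bm s|^2/(1+|\bm s|^2)$. For $u_t(\bm x)=t\,\bm e\cdot\bm x$ with $|\bm e|=1$ you get $R(u_t)=|\Omega|\,t^2/(1+t^2)$. Hence $R(u_1)=\tfrac12|\Omega|>\tfrac25|\Omega|=\tfrac12\big(R(u_0)+R(u_2)\big)$, although $u_1=\tfrac12(u_0+u_2)$. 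The Perona--Malik weight $W=\exp(-|\bm s|^2/k^2)I$ fails the same way. So restricting the corollary, as you propose, is not a weakness of your argument but a necessity.

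What is wrong in your proposal is the ``sufficient structural condition''. Nonnegative $\tau_i$ plus a $\bm q$-independent rotation does not suffice. The example above has $\tau_1=\tau_2\ge0$ and, being isotropic, admits an eigenbasis that is constant, or one aligned with $\xi$ so that $\theta\equiv0$. Either way your condition holds, yet convexity fails, because the radial dependence of the eigenvalues alone destroys it.

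Checking the full Hessian of $\Phi$ is the right test, but it is an extra hypothesis, not a consequence of $W$ being positive semidefinite. Also, its first-order part is $2\big(\prescript{}{\nabla u}{W_{\bm s}}+(\prescript{}{\nabla u}{W_{\bm s}})^{\top}\big)$, not $2\,\prescript{}{\nabla u}{W_{\bm s}}$.

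The clean formulation is to assume outright that $\bm s\mapsto\bm s^{\top}W(\bm s)\bm s$ is convex. For $W=g(|\bm s|)I$ this is equivalent to $r\mapsto r^2g(r)$ being convex and nondecreasing on $[0,\infty)$. That covers isotropic diffusion, TV ($g=1/r$) and $|\bm s|^p$ with $p\ge1$ from Proposition~\ref{cor:pq-norm}. It excludes Perona--Malik-type weights and is not guaranteed for the GET-based tensor of GETVm. Your strict-convexity remark is fine though unnecessary: once $R$ is convex, the quadratic fidelity term already makes $E$ strictly convex.
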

\begin{proof}
  To prove the convexity of $R$ we write $\Phi(u) = \nabla u^{\top} W(\nabla u) \nabla u$ in terms of the eigenvectors and eigenvalues of $W$. Then, from \eqref{eq:phi_mu} it follows that 
  \begin{align}
    \notag	\Phi(\nabla u) & = |\nabla u|^2(\lambda_1 \xi^{\top} (vv^{\top}) \xi + \lambda_2 \xi^{\top} (ww^{\top}) \xi ) \\
    \notag				   &=  |\nabla u|^2\xi^{\top}(\lambda_1 vv^{\top} + \lambda_2 ww^{\top})\xi \\
    &=   (V\xi)^{\top} \begin{pmatrix} \tau_1 & 0 \\ 0 & \tau_2 \end{pmatrix} V \xi ,
  \end{align}
  where $V = (v, w)$ and $\tau_i(\nabla u) = |\nabla u|^2\lambda_i \geq 0$ for $i = 1,2$. Let $q = V\xi = (q_1, q_2)^{\top}$, then
  \begin{equation}\label{eq:paraboloid}
    \Phi(\nabla u) = \tau_1 q_1^2 + \tau_2 q_2^2, \quad \tau_i  \geq 0, 
  \end{equation}
  is a quadratic form in the basis of orthonormal eigenvectors $V$ and $\tau_i$. The quadratic form is always well-defined due to the spectral theorem. The paraboloid \eqref{eq:paraboloid} has positive curvature everywhere as illustrated in Figure \ref{fig:eigenvectors} (b). Since $u$ is mapped continuously to the paraboloid, $R$ is convex in $u$. \qed
\end{proof}

\vspace{3mm}
\textbf{Extending scalar-valued diffusion} It is straightforward to show that the quadratic form in \eqref{eq:Rfunctional} is related to the standard Euclidean p-norms:  
\begin{proposition}\label{cor:pq-norm}
  Let $W=|\nabla m(u)|^q$ and $q = p - 2$ where $1 \leq p < \infty $, then 
  \begin{equation}\label{eq:pq-norm}
    \nabla m(u)^{\top} |\nabla m(u)|^q \nabla m(u) = | \nabla m(u) |^p .
  \end{equation}
\end{proposition}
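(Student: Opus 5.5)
The identity is a direct computation once we read $W$ as the scalar multiple of the identity, $W(\bm{s}) = |\bm{s}|^q I$ with $\bm{s} = \nabla m(u)$. The quadratic form in \eqref{eq:Rfunctional} then reduces to a scalar factor times the squared Euclidean norm. We first compute $\nabla m(u)^{\top} \nabla m(u) = |\nabla m(u)|^2$. Since $|\nabla m(u)|^q$ is a scalar, it commutes out of the product, so the left-hand side of \eqref{eq:pq-norm} equals $|\nabla m(u)|^q\,|\nabla m(u)|^2$. Exponents add, and with $q = p-2$ this gives $|\nabla m(u)|^{p}$.

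The only point needing care is where $|\nabla m(u)| = 0$ and $q<0$, i.e. $1 \le p < 2$. There $|\nabla m(u)|^q$ is undefined; for $p=1$ it is $|\nabla m(u)|^{-1}$. We handle this by splitting into cases.

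\emph{Case $\nabla m(u) \neq 0$.} The computation above is valid verbatim.

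\emph{Case $\nabla m(u) = 0$.} If $p \ge 2$, both sides are zero (with the convention $0^0 = 1$ when $p=2$). If $1\le p<2$, we interpret the left-hand side by continuity. For $\bm{s}\neq 0$ the expression equals $|\bm{s}|^p$, and this tends to $0$ as $\bm{s}\to 0$ because $p \ge 1 > 0$. So the product $\bm{s}^{\top}|\bm{s}|^q\bm{s}$ extends continuously to $\bm{s}=0$ with value $0 = |\bm{s}|^p$.

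I expect this degenerate case to be the only real obstacle. The algebra itself is immediate. As a consequence, choosing $p=1$ gives the TVm regularizer $\int_\Omega |\nabla m(u)|\,d\bm{x}$ listed in Table~\ref{tab:overview}, and $p=2$ recovers \eqref{eq:EL_TIF}.
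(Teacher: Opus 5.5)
Your argument is correct and is the same direct computation the paper uses. The paper also pulls out the scalar $|\nabla m(u)|^{q}$, uses $\nabla m(u)^{\top}\nabla m(u)=|\nabla m(u)|^2$, and combines exponents with $q=p-2$. Your continuity treatment of $\nabla m(u)=0$ when $1\le p<2$ is a worthwhile addition, since the paper's ``canceling common terms in the quotient'' tacitly assumes $\nabla m(u)\neq 0$.
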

\begin{proof}
  Substitute $q=p-2$ in the left-hand side of \eqref{eq:pq-norm}. Since $\nabla m(u)^{\top} \nabla m(u) = |\nabla m(u)|^2$, the right-hand side is obtained after canceling common terms in the quotient.  \qed
\end{proof}

We obtain mapping-based isotropic diffusion by setting $q=0$ in~\eqref{eq:pq-norm}, \ie, it yields the regularization term \eqref{eq:EL_TIF} studied in \cite{diva2:608779}. In this case the E-L equation is given by~\eqref{eq:L2mapping}. Additionally, from \eqref{eq:pq-norm}, if we select $q=-1$ then
\begin{equation}\label{eq:TVmEnergy}
	R(u) = \int_\Omega |\nabla m(u)| \; {d\bm{x}}. 
\end{equation}
The regularization term \eqref{eq:TVmEnergy} can also be interpreted in relation to the standard Rudin-Osher-Fatemi (ROF) TV-model \cite{Rudin1992}, see definition in~\cite{aubert-kornprobst:06}, pp. 39-43, but now with a mapping function. 

The formulation \eqref{eq:TVmEnergy} can also be obtained formally from the initial functional \eqref{eq:Rfunctional} in Theorem \ref{thm:R}. Set 
\begin{equation}\label{eq:W_TVm}
	W(\nabla m(u)) = \frac{1}{|\nabla m(u)|} I,
\end{equation}
then the differentiation with respect to each component of $W$ yields
\begin{equation}
	\prescript{}{\nabla u}{W_{\nabla m(u)}(\nabla m(u))} = - \frac{1}{(m')^2|\nabla u|^3} \nabla u \nabla u^{\top}. 
\end{equation}
Since $W$ in~\eqref{eq:W_TVm} is obviously symmetric, Corollary~\ref{corr:EL_contracted} is applicable and the resulting E-L equation is given by
\begin{equation}  \label{eq:TVm}
  \left \{
    \begin{array}{rlll}
    u-u^0- \lambda m'\div{ \dfrac{\nabla u}{|\nabla u|}}   & = 0 & \mbox{in} & \Omega \\
      {\bm n} \cdot \nabla u & = 0    & \mbox{on} & \partial \Omega \enspace .
    \end{array} \right .
\end{equation}

The standard isotropic diffusion and total variation formulations are obtained by setting $m(u) = u$, which yields $m'(u) = 1$ in \eqref{eq:pq-norm}. Thus the standard methods are special cases of our presented framework. The introduction of the mapping function adds an additional nonlinear component that controls the amount of filtering based on some prior knowledge of the image value range.

Before continuing to the tensor-valued formulations of Theorem~\ref{thm:R}, we recall the gradient energy tensor \cite{diva2:269100} and present an empirical evaluation of the tensor compared to the structure tensor \cite{diva2:274026}. In \cite{diva2:543914} we proved that the connection between the functional and its E-L is not preserved when the structure tensor is subject to a nonlinear mapping of its eigenvalues. This has also been observed by others \cite{Krajsek2010,Black1996}, furthermore, an explicit derivation of this fact is given in Proposition \ref{prop:GF_conv}, Section \ref{sec:tensor_diffusion} of this work. Since the GET does not have a post-convolution of its tensor components we remove some obstacles to preserve the connection between the energy and the corresponding E-L equation.

In the next section we review the gradient energy tensor before introducing the gradient energy total variation scheme with a mapping function. 

\section{Gradient Energy Tensor}
\label{sec:gradient_energy_tensor}
In this section we review the gradient energy tensor \cite{diva2:269100} (GET) and analyse how sensitive it is with regards to orientation estimation compared with the structure tensor. Our analysis show that the GET can approximate the structure tensor for image denoising.  

\subsection{Background and definition}

\begin{figure*}[t]
  \centering
  \begin{minipage}{0.29\linewidth}
    \vspace{-48mm}
    \caption{Visualization of estimated local orientations for (a) the structure tensor and (b) the gradient energy tensor.}
    \label{fig:tensors}
  \end{minipage}
   \begin{minipage}{0.70\linewidth}
     \hfill  \includegraphics[width=0.49\textwidth]{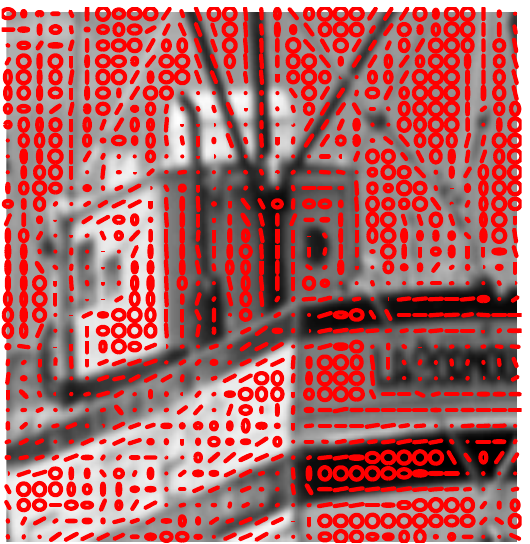} \hfill \hfill
     \includegraphics[width=0.49\textwidth]{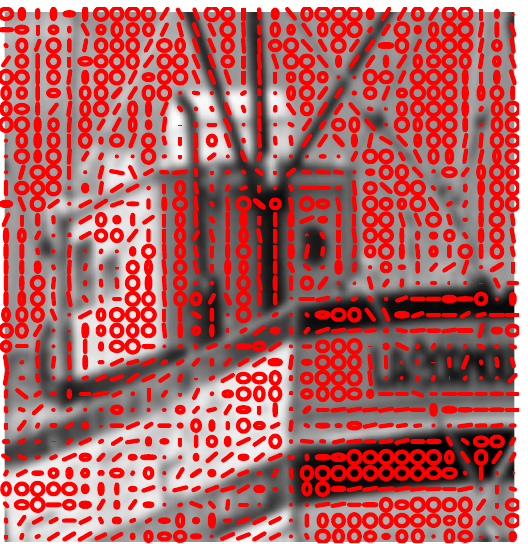} \hfill \phantom{I}  \\
     \hspace*{0mm} \hfill (\textbf{a}) Remapped structure tensor \hfill \hfill (\textbf{b}) Remapped energy tensor \hfill \phantom{I} \\
   \end{minipage}
\end{figure*}

The origin of the gradient energy tensor comes from the 1-D energy operator \cite{275632}, and in particular, from the observation that the energy of a signal can be estimated from its squared magnitude~\cite{115702}. The gradient energy tensor is a symmetric tensor, like the structure tensor, and it can be shown that GET is phase invariant, orientation equivariant second order tensor~\cite{Felsberg04poidetection,nordberg95}. The GET tensor is lesser known in the computer vision community and it was first introduced by Felsberg and K{\"o}the \cite{diva2:269100}. The motivation for considering the GET in this work, as opposed to the structure tensor, is three-fold (I) the GET can potentially enable real-time image diffusion implementations which is not possible with the structure tensor (due to the post-convolution of its tensor-components, for details see \cite{AstroemACCV}), (II) the GET preserves the relation between the energy and the E-L equation when the tensor is positive semi-definite and (III) to promote further studies on the tensor. Similarly to the structure tensor, GET is of rank 2 and determines the directional energy distribution of the signal gradient.

The classical GET is defined in terms of the image data in $u$~\cite{diva2:269100}. Here we use an alternative, but equivalent, formulation of GET expressed in the image gradient. Let $Hm(u) = \nabla (\nabla m(u)^{\top})$ be the Hessian and $\nabla(\Delta m(u)) = \nabla (\nabla^{\top} \nabla m(u))$, where $\Delta$ is the Laplace operator, then we define GET as
\begin{align}\label{eq:GET}
  \notag	& \mathrm{GET}(\nabla m(u)) = Hm(u)Hm(u) \\
  &  \hspace{1mm} - \frac{1}{2}\Big(\nabla m(u) [\nabla \Delta m(u)]^{\top} + [\nabla \Delta m(u)]\nabla m(u)^{\top} \Big).
\end{align} 
Since the GET components are expressed in terms of the image gradient, Theorem~\ref{thm:R} is applicable. In contrast to the structure tensor~\eqref{eq:structure_tensor}, the gradient energy tensor~\eqref{eq:GET}  does \emph{not} (necessarily) require a convolution operator to form a rank 2 tensor. 

Example responses from the two tensors are illustrated in Figure \ref{fig:tensors}. Note that due to the convolution operator, the structure tensor is not sensitive to structures smaller than the width of the averaging filter used to compute it (in this case the standard deviation of the Gaussian filter was set to 1). The presence of second and third-order derivatives in GET makes it slightly more sensitive to noise. However, it allows us to capture orientation of structures not possible to detect using the structure tensor.

\subsection{Spectral properties of GET}
An investigation of the positivity of the 1-D energy operator was presented in~\cite{275632}. In the 2-D case, the positivity of the operator is reflected in the sign of the eigenvalues. Let the components of the GET be $a$, $b$ and $c$, \ie, 
\begin{equation}\label{eq:GET_components}
  \mathrm{GET} = \begin{pmatrix} a & b \\ b & c \end{pmatrix}, 
\end{equation}
then the GET is positive semidefinite if the following lemma is satisfied.
\begin{lemma}\label{lemma:1}
  The GET is positive semidefinite if 
  \begin{equation}\label{eq:cond_get}
    \tr{Hm(u)Hm(u)} - \nabla m(u)^{\top} \nabla \Delta m(u) \geq \sqrt{l},
  \end{equation}
where $l = \tr{\mathrm{GET}}^2 - 4\det(\mathrm{GET}) \geq 0 $.
\end{lemma}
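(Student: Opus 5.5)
The plan is to compute the eigenvalues of the symmetric $2\times 2$ matrix \eqref{eq:GET_components} in closed form and show that the smaller one is nonnegative under \eqref{eq:cond_get}. Since GET is real symmetric, its eigenvalues are real and given by
\begin{equation*}
  \mu_{\pm} = \frac{1}{2}\Big( \tr{\mathrm{GET}} \pm \sqrt{\tr{\mathrm{GET}}^2 - 4\det(\mathrm{GET})} \Big) = \frac{1}{2}\Big( \tr{\mathrm{GET}} \pm \sqrt{l}\Big).
\end{equation*}
Here $l = (a-c)^2 + 4b^2 \geq 0$, which justifies the hypothesis $l\geq 0$ and makes $\sqrt{l}$ well defined. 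Positive semidefiniteness is then equivalent to $\mu_- \geq 0$, that is, to $\tr{\mathrm{GET}} \geq \sqrt{l}$. This already implies $\mu_+\geq \mu_-\geq 0$.

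The remaining step is to identify $\tr{\mathrm{GET}}$ with the left-hand side of \eqref{eq:cond_get}. I will take the trace of definition \eqref{eq:GET} and use linearity. The first term contributes $\tr{Hm(u)Hm(u)}$. For the symmetrized outer product, I will use $\tr{xy^{\top}} = y^{\top}x$, so that
\begin{equation*}
  \frac{1}{2}\tr{\nabla m(u)[\nabla\Delta m(u)]^{\top} + [\nabla\Delta m(u)]\nabla m(u)^{\top}} = \nabla m(u)^{\top}\nabla\Delta m(u).
\end{equation*}
Hence $\tr{\mathrm{GET}} = \tr{Hm(u)Hm(u)} - \nabla m(u)^{\top}\nabla\Delta m(u)$. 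Substituting this into $\mu_-\geq 0$ gives exactly the condition \eqref{eq:cond_get}.

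There is no real obstacle here. The only care needed is to note that the statement is a sufficient condition; in fact it is also necessary. I also need to check that the square-root expression for the eigenvalues is legitimate, which holds because $l$ is a sum of squares. The smoothness assumptions on $m$ and $u$ guarantee that the third derivatives in $\nabla\Delta m(u)$ exist, so every quantity in the argument is pointwise well defined.
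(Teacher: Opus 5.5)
Your proof is correct and follows the same route as the paper: the eigenvalues are $\frac{1}{2}\bigl(\mathrm{tr}(\mathrm{GET}) \pm \sqrt{l}\bigr)$ with $l=(a-c)^2+4b^2\geq 0$, so positive semidefiniteness reduces to $\mathrm{tr}(\mathrm{GET})\geq\sqrt{l}$. You also spell out a step the paper leaves implicit, namely that taking the trace of \eqref{eq:GET} gives exactly the left-hand side of \eqref{eq:cond_get}, and you rightly note that the condition is necessary as well as sufficient.
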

\begin{proof}
  Since GET is symmetric it has real eigenvalues. Thus by its eigenvalue decomposition it is sufficient to show that $\tr{\mathrm{GET}} \geq \sqrt{l}$ (\ie,  \eqref{eq:cond_get}) in order for GET to be positive semidefinite. $l$ is necessarily positive since $l = (a-c)^2 + 4b^2 \geq 0$. \qed
\end{proof}
The GET-tensor can attain zero-values despite the gradient is not zero, and in some cases the eigenvalues may be negative. Here, we set the negative eigenvalues to positive in the $\mathrm{GET}^+$ tensor. We define
\begin{equation}\label{eq:GET+}
  \mathrm{GET}^+(\nabla m(u)) = vv^{\top}|\mu_1| + ww^{\top}|\mu_2|,
\end{equation}
and $v$, $w$ are the eigenvectors and $\mu_1$, $\mu_2$ eigenvalues of the GET respectively. 
In the standard approach of anisotropic diffusion one considers the diffusion tensor, which is the structure tensor with transformed eigenvalues as described in Section \ref{sec:tensor_diffusion}. We do a similar mapping of \eqref{eq:GET+} by using the negative exponential function, \ie, we have
\begin{align}
\notag	D(\nabla m(u)) &= \exp(-\mathrm{GET}^+(\nabla m(u))/k^2) \\
\label{eq:Dget+}   &=  \lambda_1 vv^{\top} +  \lambda_2 ww^{\top},	
\end{align}
and $v,w$ are the same eigenvectors as in~\eqref{eq:GET+} and the eigenvalues $\lambda_{1,2} = \exp(-|\mu_{1,2}|/k^2)$. Now $\mathrm{GET}^+$, is positive semidefinite as $\exp$ is the matrix exponential function obtained by a Taylor series expansion. Figure~\ref{fig:tensors} shows a visual comparison between the remapped energy tensor and the diffusion tensor. Visually, the local orientation estimation appears nearly identical between the two tensors. Next, we quantify this assertion. 

\begin{figure*}[t]
    \centering
        	\includegraphics[width=0.32\textwidth]{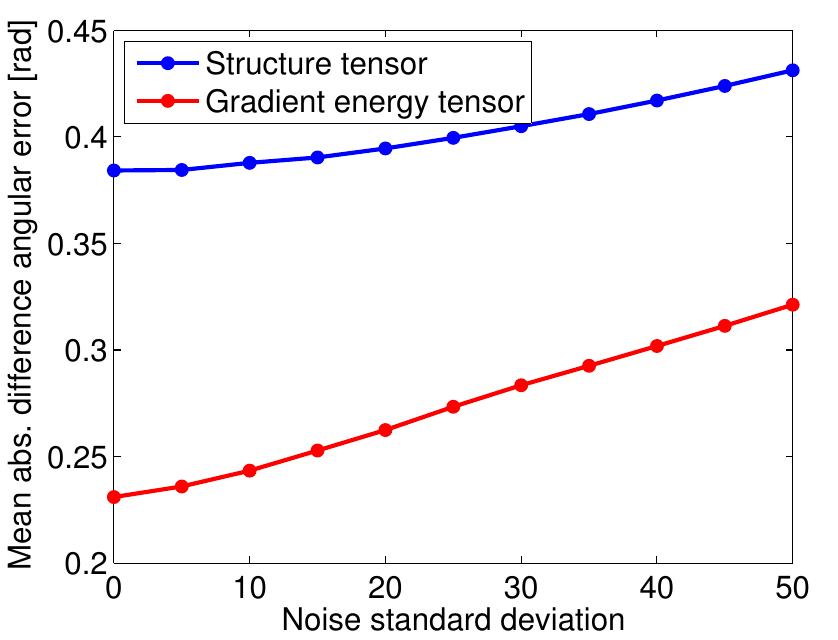} 
        	\includegraphics[width=0.31\textwidth]{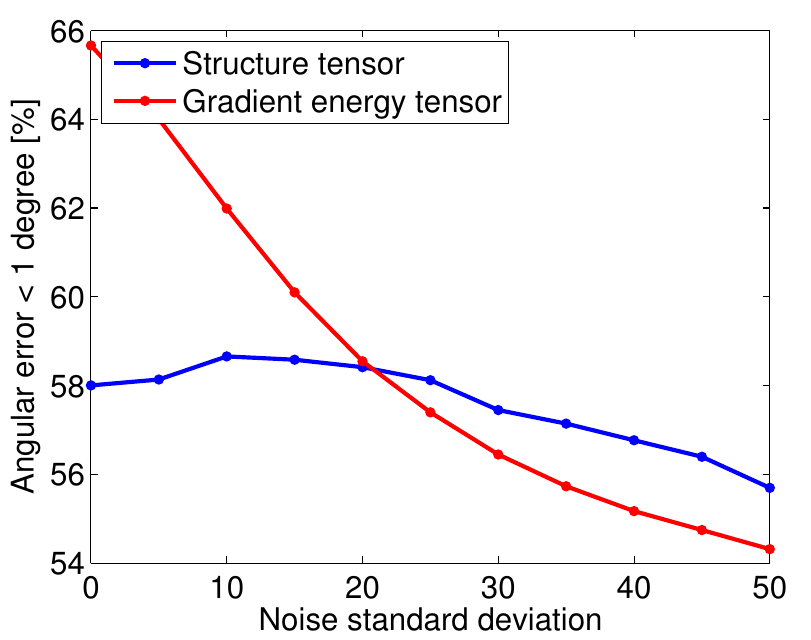}
        	\includegraphics[width=0.32\textwidth]{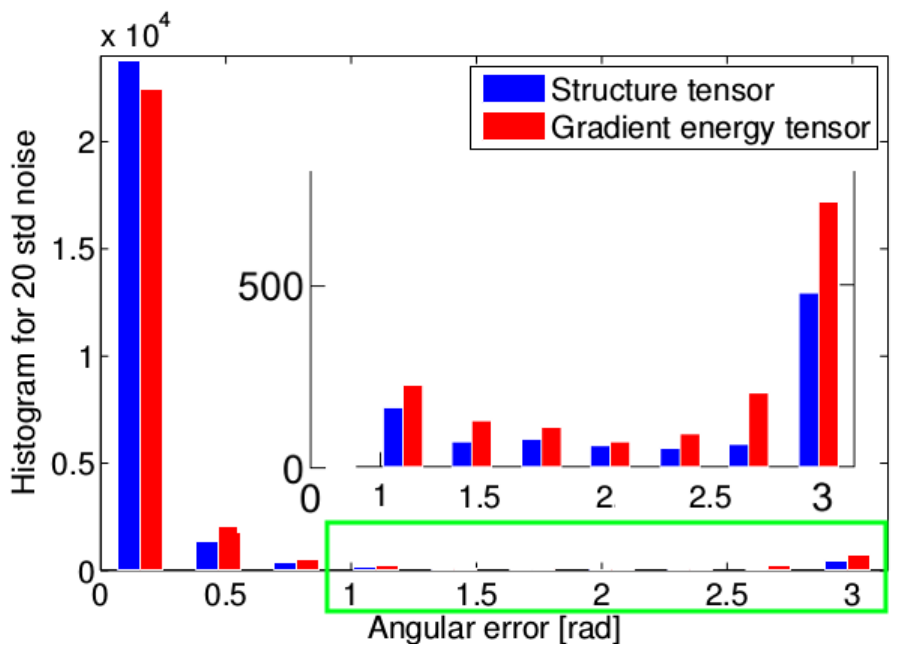} \\
        	\hfill (\textbf{a}) \hfill \hfill (\textbf{b}) \hfill \hfill  (\textbf{c}) \hfill  \phantom{I} \\
    \caption{Results of sensitivity analysis of orientation estimation by using the structure tensor and the gradient energy tensor. We used the radially symmetric pattern (a) for different noise levels in Figure~\ref{fig:pattern_est}. (a) shows the mean absolute angular error. (b) illustrates the percentage of pixels which has an angular error smaller than 1 degree. (c) shows the histogram of angular errors divided into 10 bins from 0 to $\pi$ radians. See text for details. }
    \label{fig:angle}
    \vspace{5mm}
    \hfill
    \includegraphics[width=0.20\textwidth]{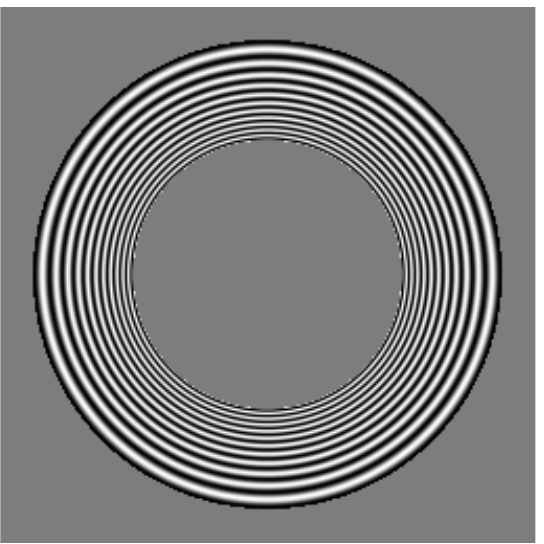} \hfill\hfill
    \hfill\hfill \includegraphics[width=0.23\textwidth]{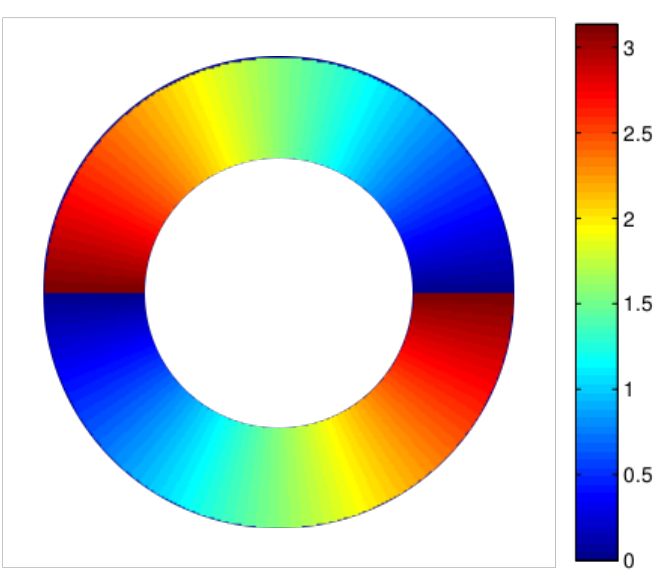} \hfill\hfill
    \hfill\hfill \includegraphics[width=0.23\textwidth]{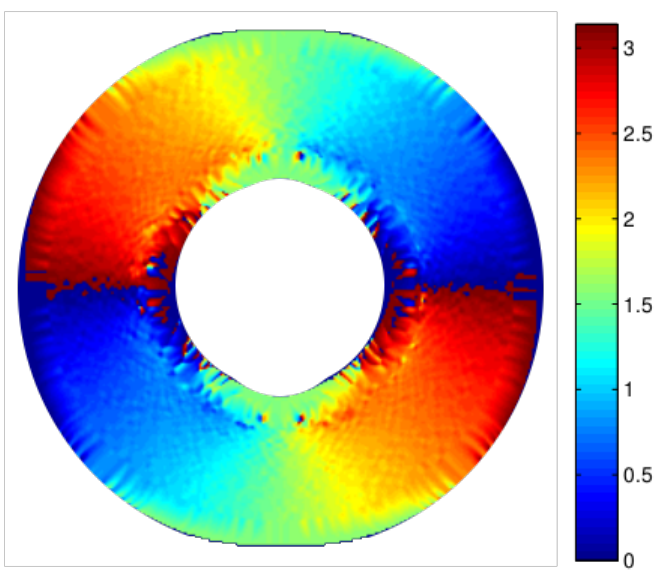}\hfill\hfill
    \hfill\hfill \includegraphics[width=0.23\textwidth]{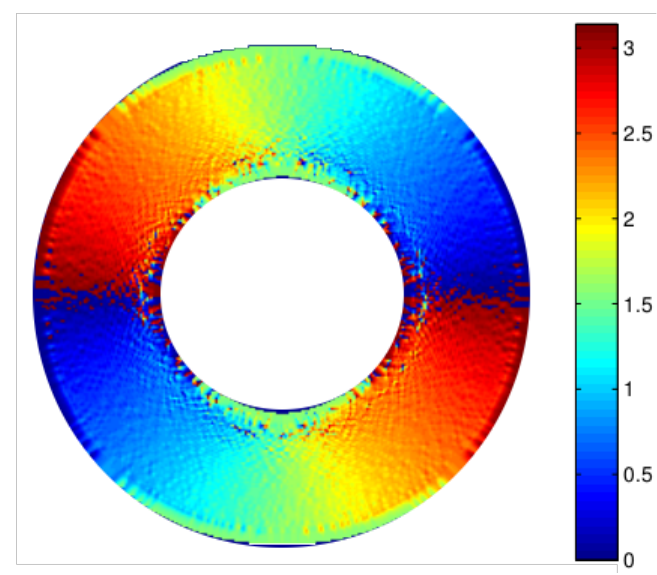}\hfill \\
    \hfill (\textbf{a}) \hfill \hfill (\textbf{b}) \hspace{8mm} \hfill \hfill (\textbf{c}) \hfill \hfill (\textbf{d}) \hfill \phantom{I} \\
    \caption{(a) Original, noise free circular pattern. (b) Ground truth orientation. Orientation in radians at standard deviation 20 of noise, (c) Structure tensor and (d) Gradient energy tensor.}
    \label{fig:pattern_est}
\end{figure*}

\subsection{Analysis}
In this part we discuss the effect of noise perturbation on the tensor's estimated orientations. We consider a radially symmetric test-pattern shown in Figure~\ref{fig:pattern_est} (a). As a baseline for the evaluation we use the noise-free angles that have also been used to compute the test-pattern image. For each tensor and considered noise level, we compute the local neighbourhoods dominant distribution angle by projecting the tensor components on a vector $z$ in the complex plane~\cite{books/daglib/0095066}, 
\begin{equation}
	z = a-c + i2b,
\end{equation}
where $a$, $b$ and $c$ are components of a symmetric $2\times 2$ matrix. We compare the mean absolute difference of the angular error in Figure~\ref{fig:angle} (a) for different noise levels. In (b) we show the percentage of pixels that have an angular error smaller than 1 degree. It is evident that the two tensors are marginally different. As the noise-level increases beyond standard deviation 20, the structure tensor performs better due to the post-convolution of the tensor components. At standard deviations 0 and 10 of noise the difference is approximately 8\% and 3\% respectively. Our third evaluation is a comparison of the histogram error quantized to 10 bins in the range $0-\pi$ at the intersection point seen in (b), \ie, at the noise level of standard deviation 20. The histogram is shown in (c) and we see that the error distribution is indeed similar for the two tensors. We also see that for the angles less than $0.5$ rad the structure tensor shows higher errors than the structure tensor due to the post-convolution of its components. However, considering the right most bins, the structure tensor has fewer pixels with large errors than the gradient energy tensor.

Objectively, the gradient energy tensor has more large magnitude errors than the structure tensor. However, this is expected since the energy tensor lacks a post-convolution of the tensor components to compensate for the noise. Nevertheless, considering that the difference between the two tensors mean angular error is small, as indicated in (a), it still suggests that the gradient energy tensor can be used in a diffusion framework, a claim that we further support in our numerical experiments, and was shown in \cite{AstroemACCV}. In the following sections we use the gradient energy tensor to model a tensor-based total variation framework. 

\section{Mapping-based Gradient Energy Tensor Total Variation}
\label{sec:GETVm}
In this section we extend our previous work~\cite{Astrom2014,diva2:543914} on the gradient energy tensor total variation (GETV). In particular we introduce the mapping to GETV and denote the formulation as GETVm.

\subsection{The objective function}
The below corollary gives the definition of the GETVm scheme. The tensor that steer the filter is the transformed gradient energy tensor, \ie, \eqref{eq:Dget+}. In the following, let $W$ be a tensor such that the following relation holds
\begin{equation}\label{eq:DW}
  D(\nabla m(u)) = W(\nabla m(u)) |\nabla m(u)|^q,
\end{equation}
where $D$ is the tensor in \eqref{eq:Dget+} and has eigenvalues $\lambda_{1}$, $\lambda_2$ and eigenvectors $v,w$. Then~\eqref{eq:DW} can be reformulated into a modified expression of~\eqref{eq:phi_mu}, \ie,
\begin{align}
  & \notag \nabla m(u)^{\top}  W(\nabla m(u)) \nabla m(u) \\ 
  \label{eq:func_q} &= \lambda_1 |\nabla m(u)|^{2-q} (v \cdot \xi)^2 + \lambda_2 |\nabla m(u)|^{2-q} (w \cdot \xi)^2.
\end{align}
where the left hand side is the integrand of \eqref{eq:Rfunctional}. Now we give the following statement

\begin{corollary}
\label{corr:R}
Set ${\bm s} = \nabla m(u)$ and let $q=1$ in \eqref{eq:func_q}. Then the gradient energy total variation regularization term is given by 
\begin{equation}
	R(u) = \int_\Omega |\nabla m(u)|( \lambda_1 (v \cdot \xi)^2 + \lambda_2 (w \cdot \xi)^2 ) \; {d\bm{x}},
\end{equation}
and the tensor $\prescript{}{\nabla u}{W_{\bm s}}$ in~\eqref{eq:T_s} is
\begin{align} \label{eq:corr1_Ts}
\notag	\prescript{}{\nabla u}{W_{\bm s}({\bm s})} & = 
	-\frac{1}{|s|^3}
	 \begin{pmatrix} 
		s_1 \nabla u^{\top}  D({\bm s}) \\ 
		s_2 \nabla u^{\top}  D({\bm s})
	 \end{pmatrix} \\
	 & \hspace{1.5cm} 	 
	+
	\frac{1}{|s|}
	\begin{pmatrix}  \
		\nabla u^{\top}  D_{s_1}({\bm s}) \\ 
		\nabla u^{\top}  D_{s_2}({\bm s})
	 \end{pmatrix},
\end{align}
where $D$ is defined in~\eqref{eq:Dget+}.
\end{corollary}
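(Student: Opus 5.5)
Starting from \eqref{eq:DW} with $q=1$, we have $W({\bm s}) = D({\bm s})/|{\bm s}|$ wherever ${\bm s}\neq 0$. The plan has two parts: first identify the regularization term, then differentiate $W$ component-wise and contract with $\nabla u$ as in \eqref{eq:T_s}.

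For the first claim, insert $D=\lambda_1 vv^\top+\lambda_2 ww^\top$ from \eqref{eq:Dget+} into ${\bm s}^\top W({\bm s}){\bm s} = {\bm s}^\top D({\bm s}){\bm s}/|{\bm s}|$. Then repeat the spectral argument that led to \eqref{eq:phi_mu}, now with $\nabla m(u)$ in place of $\nabla u$. Writing ${\bm s}{\bm s}^\top$ in its eigenbasis with eigenvalues $|{\bm s}|^2$ and $0$, and leading eigenvector $\xi$, gives ${\bm s}^\top D {\bm s} = |{\bm s}|^2(\lambda_1(v\cdot\xi)^2+\lambda_2(w\cdot\xi)^2)$. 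This is \eqref{eq:func_q} with $q=1$. Dividing by $|{\bm s}|$ yields the integrand $|\nabla m(u)|(\lambda_1(v\cdot\xi)^2+\lambda_2(w\cdot\xi)^2)$, and integrating over $\Omega$ gives $R$. At ${\bm s}=0$ the integrand is understood as its continuous extension, zero, exactly as in the Remark following \eqref{eq:phi_mu}.

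For the contracted derivative, apply the product rule to $W({\bm s}) = |{\bm s}|^{-1}D({\bm s})$ in each variable $s_i$. Since $\partial_{s_i}|{\bm s}|^{-1} = -s_i/|{\bm s}|^3$, we get
\begin{equation*}
W_{s_i}({\bm s}) = -\frac{s_i}{|{\bm s}|^3}D({\bm s}) + \frac{1}{|{\bm s}|}D_{s_i}({\bm s}).
\end{equation*}
Left-multiplying by $\nabla u^\top$ and stacking the rows $i=1,2$ as prescribed in \eqref{eq:T_s} produces the two block matrices in \eqref{eq:corr1_Ts}. The scalars $-s_i/|{\bm s}|^3$ and $1/|{\bm s}|$ pass through the contraction because they are scalars.

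Most of this is routine; the main point to handle is well-definedness. The formula is only meaningful for ${\bm s}\neq 0$, and it requires $D$ to be differentiable in ${\bm s}$. Differentiability of $D$ is subtle because $\mathrm{GET}^+$ uses $|\mu_i|$, and its eigenvectors are not smooth at repeated eigenvalues. I would therefore state the derivative pointwise where $\nabla m(u)\neq0$ and where $D$ is differentiable, taking the existence of $D_{s_i}$ as the standing assumption of Theorem~\ref{thm:R}. The components $D_{s_i}$ are left unexpanded, as in the statement. The symmetry of $W$, inherited from $D$, also means Corollary~\ref{corr:EL_contracted} applies afterwards. That is not needed for the present statement, but it justifies using this contraction in $M$.
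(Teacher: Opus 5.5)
Your proposal is correct and follows the paper's proof in Appendix~B. Like the paper, you write $W = D\,E$ with $E = 1/|{\bm s}|$, use $\partial_{s_i}E = -s_i/|{\bm s}|^3$ in the product rule, and contract row by row with $\nabla u^{\top}$; you also spell out the $q=1$ case of \eqref{eq:func_q} for $R$ and differentiate directly in ${\bm s}=\nabla m(u)$ (the appendix writes it with ${\bm s}=\nabla u$), which are harmless refinements.
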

\begin{proof}
  See Appendix \ref{app:proofGETV}. \qed
\end{proof}

Due to the definition of the tensor $D$ in \eqref{eq:DW} the resulting functional describes an anisotropic total variation filtering method. The tensor acts as a weight limiting the TV filtering perpendicular to image structures. In a homogeneous region $D(\nabla m(u)) \approx I$ the regularization term in the functional reduces to $R(u) = \int_\Omega |\nabla m(u)| \; {d\bm{x}}$ previously described in~\eqref{eq:TVmEnergy}. It is obvious that the GET-tensor will be isotropic in homogeneous regions. 

In the section below we show how to derive the E-L equation corresponding to the selection of $W$ in~\eqref{eq:DW} that yields the diffusion scheme.

\subsection{Diffusion formulation}
\label{sec:diffusion_formulation}
To obtain the diffusion formulation of GETVm, note that $W$ in~\eqref{eq:DW} is symmetric. Then, according to Corollary~\ref{corr:EL_contracted}, it is straightforward to show that the tensors in the E-L equation~\eqref{eq:ELexp_simplified_2} are given by $S = S_1 = S_2 = 2W + \prescript{}{\nabla u}{W_{\bm s}}m'$. With $\prescript{}{\nabla u}{W_{\bm s}}$ given in~\eqref{eq:corr1_Ts} we obtain 
\begin{align}
  \notag	S&= \bigg \{ \frac{2D(s)}{|s|} 
  -\frac{1}{|s|^3}
  \begin{pmatrix} 
    s_1 \nabla u^{\top}  D(s) \\ 
    s_2 \nabla u^{\top}  D(s)
  \end{pmatrix} \\
\notag  & \hspace{30mm} +
  \frac{1}{|s|}
  \begin{pmatrix}  \
    \nabla u^{\top}  D_{s_1}(s) \\ 
    \nabla u^{\top}  D_{s_2}(s)
  \end{pmatrix} \bigg \} \bigg |_{\bm{s}=m(u)}\\
\label{eq:GETV_S}  &= \frac{M}{m'(u)|\nabla u|},
\end{align}
where 
\begin{align}
\notag  M &=  
  2D(s)
  -\dfrac{1}{|\nabla u|^2}
  \begin{pmatrix} 
    u_x \nabla u^{\top}  D(s) \\ 
    u_y \nabla u^{\top}  D(s)
  \end{pmatrix}  \\
\label{eq:GETV_M}  & \hspace{30mm} +
  \begin{pmatrix}  \
    \nabla u^{\top}  D_{s_1}(s) \\ 
    \nabla u^{\top}  D_{s_2}(s)
  \end{pmatrix}. 
\end{align} 
Now, from Corollary \ref{corr:EL_contracted}, \eqref{eq:ELexp_singleDIV} we get the E-L equation
\begin{equation}\label{eq:E-L_3terms}
  \hspace{0mm}\left \{
    \begin{array}{rl}
      u - u^0 -  \lambda m'\mathrm{div} \Bigg (
      M
      \dfrac{\nabla u}{|\nabla u|} \Bigg )  & \hspace*{-0mm} = 0 \mbox{ in }  \Omega \\
      \hspace{0mm} {\bm n} \cdot M\nabla u & \hspace*{-0mm} =  0     \mbox{ on }  \partial \Omega 
    \end{array} \right .
\end{equation}

The difference to TV is the introduction of a mapping function and the tensor in front of $\nabla u / |\nabla u|$ in the divergence. These weights control the anisotropy of the TV scheme: reducing the filtering orthogonal to image structures and accelerating the filtering parallel to image structures. The orientation estimation is achieved using the GET-tensor with positive eigenvalues, defined in~\eqref{eq:GET+}. However, note that using GET is not unique, any tensor, also the diffusion tensor constructed from the structure tensor may be applicable in this framework. 

The derivatives of $D$ with respect to $\partial_x m(u)$ and $\partial_y m(u)$ in \eqref{eq:GETV_M} are obtained by first computing the eigenvectors $v$ and $w$ and eigenvalues $\lambda_1$, $\lambda_2$ of $D$.  The differentiation of $D$ in \eqref{eq:DW} w.r.t. $s_1 = \partial_x m(u)$ reads
\begin{align}\label{eq:D_ux}
\notag	D_{s_1} & = (\partial_{s_1}vv^{\top})\lambda_1 + vv^{\top}(\partial_{s_1}\lambda_1) \\
		 & \hspace{5mm} + (\partial_{s_1}ww^{\top})\lambda_2 + ww^{\top}(\partial_{s_1}\lambda_2),
\end{align}
and $\lambda_i = \exp(-|\mu_i|/k^2)$, $i = 1,2$ with the corresponding orthonormal eigenvectors ($b \neq 0$)
\begin{align}\label{eq:eigenvector_v}
	v
	=
	\begin{pmatrix}
		v_1 \\
		v_2
	\end{pmatrix}
	=
	\frac{1}{|\tilde{v}|}
	\tilde{v},
	\;\; \mbox{where} \;\;
	\tilde{v} = 
	\begin{pmatrix}
		2b \\
		c-a+\sqrt{l}
	\end{pmatrix}, 
\end{align}
and
\begin{align}\label{eq:eigenvector_w}
	w
	=
	\begin{pmatrix}
		w_1 \\
		w_2
	\end{pmatrix}
	=
	\frac{1}{|\tilde{w}|}
	\tilde{w},
	\;\; \mbox{where} \;\;
	\tilde{w} = 
	\begin{pmatrix}
		2b \\
		c-a-\sqrt{l}
	\end{pmatrix},
\end{align}
and $l$ is given in Lemma~\ref{lemma:1}. Appendix \ref{app:eigenD} further details the components of \eqref{eq:D_ux}. In the case $b=0$ the eigenvalues are given on the diagonal of GET and the orthonormal eigenvectors are given by the identity matrix. The implication of $b=0$, is that derivatives with respect to $\partial_{\bm{s}} vv^{\top}$ and $\partial_{\bm{s}} ww^{\top}$ in~\eqref{eq:D_ux} vanish. 

\subsection{Discretization}
\label{sec:numerical_aspects}
The PDEs we have introduced in this work can all be expressed on the form 
\begin{equation}\label{eq:pdeM}
  u-u_0 - \lambda m'(u)\dive \left (M  \frac{\nabla u}{|\nabla u|} \right ) = 0,
\end{equation}
where $\nabla u \mapsto M$ and $M \in \mathbb{R}^{2 \times 2}$ is a tensor different for each application (selection) of $W$ and $m$. Different selections of $W$ are outlined in Table \ref{tab:overview}. 

To approximate the PDEs we use a simple, but generic, PrimalDual approach of Chan, Golub and Mulet \cite{doi:10.1137/S1064827596299767}. The idea is straight-forward and the aim is to avoid explicit computation of the quotient in \eqref{eq:pdeM}, therefore they introduce a dual variable, $d$. In the following, let $\lambda, \beta, \gamma$ and $\tau$ be small positive constants. In our case we define the dual variable as
\begin{equation}
  d = \frac{M\nabla u}{\sqrt{|\nabla u|^2 + \beta}},
\end{equation}
where we introduce $\beta$ in the denominator. This gives the nonlinear equation system
\begin{subequations}
  \begin{align}
    \label{eq:P}   u-u_0 - \lambda m'(u)\dive (d) &= 0 \\
    \label{eq:D}  d\sqrt{|\nabla u|^2 + \beta} - M\nabla u &= 0. 
  \end{align}
\end{subequations}
We update of the primal variable $u$ as 
\begin{equation}\label{eq:update_u}
  u^{k+1} = u^k + \tau \left ( u^k-u_0  - \lambda m'(u^{k}) \mathrm{div} (d^{k+1}) \right )  ,
\end{equation}
where $k$ is an iterator. To solve for the dual variable in \eqref{eq:D} we introduce the iterator $i$, and use the current $u^k$ as initial value, \ie, we have 
\begin{align}
\begin{dcases}
  d^{k}_{i+1} &= d^k_i + \gamma \big ( d^k_i \sqrt{|\nabla u^k|^2 + \beta} - M_k\nabla u^k \big )\\
  d^{k+1} &\leftarrow d^{k}_{i+1}
\end{dcases}
\end{align}
The divergence term in \eqref{eq:P} is discretized using forward finite differences \cite{JWeickert_Disc}, \ie,
\begin{align} \label{eq:div_expanded}
  \dive (d^{k+1}) = \partial^+_x d^{k+1} + \partial^+_y d^{k+1} , 
\end{align}
where 
\begin{subequations}
\begin{align}
  \partial_x^+ u & = u(x+1,y) - u(x,y) \\
  \partial_y^+ u & = u(x,y+1) - u(x,y).
\end{align}
\end{subequations}

\begin{figure}[t]
  \centering
  \includegraphics[width=0.49\textwidth]{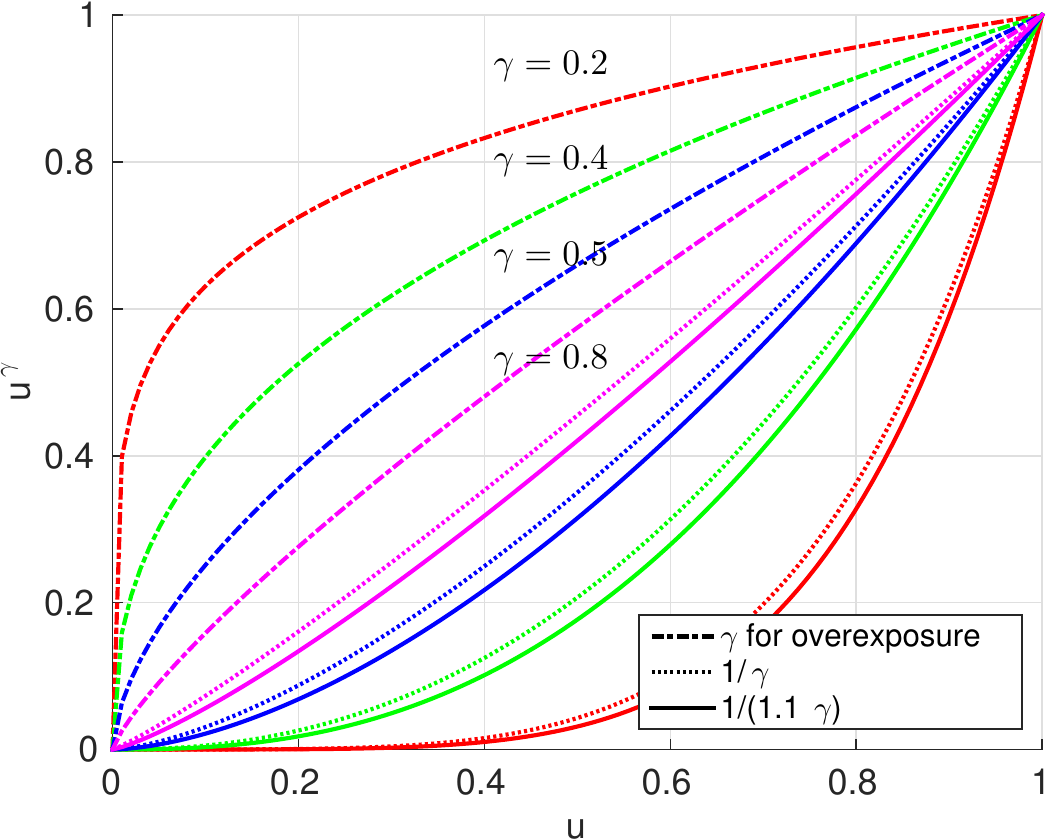}
  \caption{Mapping function for gamma correction. Illustration of different gamma parameters (dash-dotted), corresponding compensation curves (dashed) and the compensation curve with 10\% error (thick). The dash-dotted curve is used to obtain the noisy images and the thick curve is used to obtain the denoised results in Figure \ref{fig:gc}.}
  \label{fig:gamma_mapping}
\end{figure}	

\begin{table}
  \centering
  \renewcommand*{\arraystretch}{1.3}
  \begin{tabularx}{0.65\linewidth}{r|cc}
    & \textbf{ISO/m}, \textbf{TV/m}  & \textbf{GETV/m} \\
    \hline
    $\lambda$ & 0.1          & 0.05   \\ 
    \hline
    $\tau$ & \multicolumn{2}{c}{$\lambda/8$}  \\
    $\beta$ & \multicolumn{2}{c}{$10^{-12}$}  \\
    $\gamma$ & \multicolumn{2}{c}{1}  \\
    $k$ & \multicolumn{2}{c}{100}  \\
  \end{tabularx} 
  \caption{Parameters of the numerical scheme were fixed for all applications. ``\textbf{/m}'' denotes with and without mapping. The abbreviations stand for isotropic diffusion (ISO), isotropic diffusion with a mapping function (ISOm) \cite{Astrom2013a},  total variation (TV) \cite{Rudin1992}, total variation with a mapping function (TVm), see \eqref{eq:TVm}, gradient energy tensor total variation (GETV) \cite{Astrom2014} and GETV with a mapping function (GETVm), see \eqref{eq:E-L_3terms}.}
  \label{table:parameters}
\end{table}

\renewcommand{\figwidth}{0.24}
\begin{figure*}[t]
  \newcommand{\tabwidth}{1}	
  \centering
  \begin{tabularx}{\tabwidth\textwidth}{XX XX}
    \textbf{Noisy} & \textbf{Filter then GC} & \textbf{GC then filter} & \textbf{Mapping-based filter} \\
  \end{tabularx}\\[0mm]
  \begin{tabularx}{\tabwidth\textwidth}{XX XX}
    $\gamma = 0.8$
  \end{tabularx}\\
  \includegraphics[width=\figwidth\textwidth]{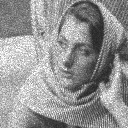}
  \includegraphics[width=\figwidth\textwidth]{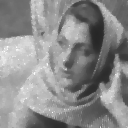}
  \includegraphics[width=\figwidth\textwidth]{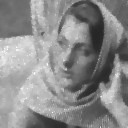}
  \includegraphics[width=\figwidth\textwidth]{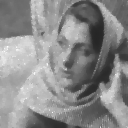} \\
  \begin{tabularx}{\tabwidth\textwidth}{XX XX}
    PSNR: 8.6 & 24.5 & 24.4 & 26.2 \\ 
    SSIM: 0.50 & 0.74 & 0.70 & 0.79 \\ 
    \hline
  \end{tabularx}\\[3mm]
  \begin{tabularx}{\tabwidth\textwidth}{XX XX}
    $\gamma = 0.6$
  \end{tabularx}\\
  \includegraphics[width=\figwidth\textwidth]{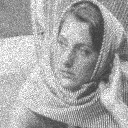}
  \includegraphics[width=\figwidth\textwidth]{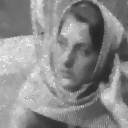}
  \includegraphics[width=\figwidth\textwidth]{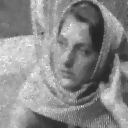}
  \includegraphics[width=\figwidth\textwidth]{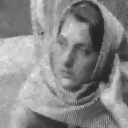} \\
  \begin{tabularx}{\tabwidth\textwidth}{XX XX}
    PSNR: 15.7 & 24.4 & 24.9 & 25.3 \\ 
    SSIM: 0.56 & 0.73 & 0.75 & 0.77 \\ 
    \hline
  \end{tabularx}\\[3mm]
  \begin{tabularx}{\tabwidth\textwidth}{XX XX}
    $\gamma = 0.4$
  \end{tabularx}\\
  \includegraphics[width=\figwidth\textwidth]{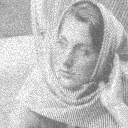}
  \includegraphics[width=\figwidth\textwidth]{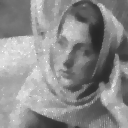}
  \includegraphics[width=\figwidth\textwidth]{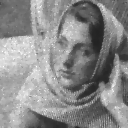}
  \includegraphics[width=\figwidth\textwidth]{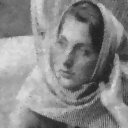} \\
  \begin{tabularx}{\tabwidth\textwidth}{XX XX}
    PSNR: 11.9 & 24.4 & 25.1 & 25.8 \\ 
    SSIM: 0.56 & 0.74 & 0.75 & 0.79 \\ 
  \end{tabularx}\\
  \caption{Examples of combined gamma correction (GC) and denoising compared to ``denoise and then gamma correct" and ``gamma correct and then denoise". If the degradation parameter is set to $\gamma = 1$,  \eqref{eq:gc} is a one-to-one mapping and \eqref{eq:gc_deriv} is the identity. Instead of using the true gamma correction parameter $1/\gamma$ we use $1/(1.1\gamma)$ to represent the uncertainty in the estimate of an unknown inverse mapping. With a stronger nonlinearity, it can be seen that the mapping-based filter performs better than the other approaches, both visually and with respect to error measures.}
  \label{fig:gc}
\end{figure*}

\begin{figure*}
  \centering
  {\includegraphics[width=0.24\textwidth]{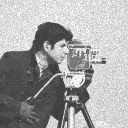}}    
  {\includegraphics[width=0.24\textwidth]{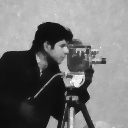}} 
  \hfill\hfill
  {\includegraphics[width=0.24\textwidth]{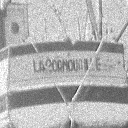}} 
  {\includegraphics[width=0.24\textwidth]{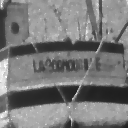}} 
  \hfill\hfill
  \caption{Denoising results for the mapping-based filter when $\gamma = 0.4$. In each instance we accurately preserve the dominant features while suppressing the main parts of the noise. The noise is multiplicative and $\sigma$ was set to 40 levels of the intensity range in the noise model \eqref{eq:multNoise}. }
  \label{fig:gc_examples}
\end{figure*}  

\begin{figure*}
  \centering
  \begin{minipage}{0.26\textwidth}
    \vspace{-10mm}
    \caption{Evaluation of gamma correction of overexposed images Cameraman, Barbara, Lena and Boat. As $\gamma$ becomes smaller, \ie, the nonlinearity is more pronounced, the mapping-based filter consistently performs better than the two alternatives ``denoise and gamma correct" and ``gamma correct and denoise". We set $\sigma = 40$ of the intensity range in \eqref{eq:multNoise}.}
    \label{fig:gc_eval}
  \end{minipage}
  \begin{minipage}{0.73\textwidth}
    \begin{tabular}{cc}
      \imagetop{\includegraphics[width=0.455\textwidth]{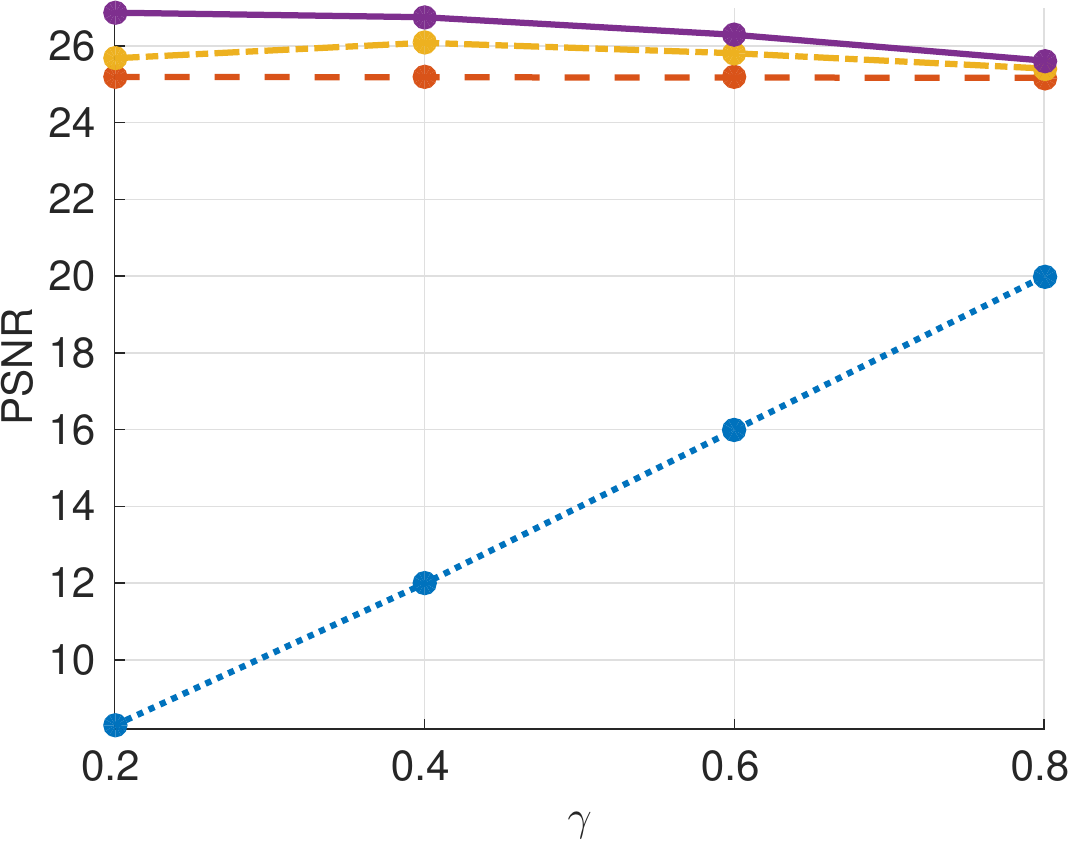}} &   
      \imagetop{\includegraphics[width=0.47\textwidth]{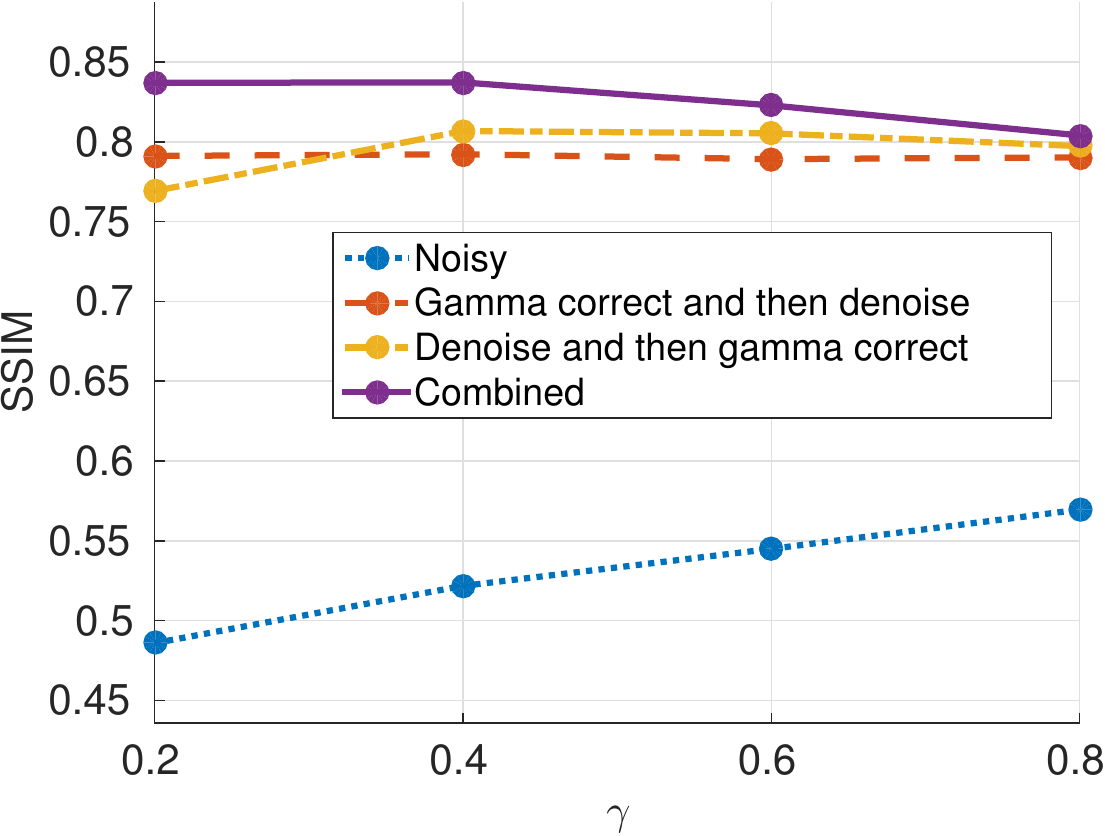}} 
    \end{tabular}
  \end{minipage}
\end{figure*}

In $M$, \eqref{eq:pdeM}, we are required to compute terms such as $1/|\nabla u|$. Therefore, in practice we regularize the denominator term with a small positive constant, \ie, $|\nabla u|_\beta = \sqrt{u_x^2 + u_y^2 + \beta}$ as they appear in $M$. In the gradient energy tensor \eqref{eq:GET}, we are required to compute third-order derivatives (see appendix \eqref{eq:GETa}-\eqref{eq:GETc}) for terms such as $\partial_x \Delta u$. However, rather than to expand these terms using derivative rules, we found that it is sufficient to approximate these terms using a central difference derivative. The finite differences are regularized with a smooth kernel. Here we use a Gaussian filter of standard deviation $1$ before computing the finite differences. The filter size was kept constant for all images and all noise levels in the experimental evaluation. We also fixed the parameters $\lambda$, $\beta$, $\gamma$, $\tau$ and the number of inner iterations $k$ for all experiments. The parameter values are shown in Table \ref{table:parameters}. 

\section{Application I: Gamma correction}	
\label{sec:gc}
In this section we apply total variation with a mapping function (TVm), see \eqref{eq:TVm}, to the problem of denoising overexposed images corrupted with multiplicative noise. Multiplicative noise is the common noise model in, e.g., ultrasound imaging, synthetic aperture radar (SAR) imaging, sonar and laser imaging, etc.

The intention with this application is not to show state of the art results denoising of multiplicative noise, rather we show empirically that (I) gamma correction (GC) and then denoise, or (II) denoising and then gamma correction is suboptimal to (III) using a combined filtering scheme compensating for the gamma correction in the evolution process of the minimiser. The parameters were set as shown in Table \ref{table:parameters}.
We consider the following noise model: let $u^0$ be the degraded image, then 
\begin{equation}\label{eq:multNoise}
  u^0 = \exp(\ln(u) + \eta(u)), 
\end{equation}
where $\eta \sim \mathcal{N}(u | \mu,\sigma)$ and $\mu$ is the mean-value and $\sigma$ the standard deviation. The main difference to Gaussian noise is that multiplicative noise scales with the intensity; the lower the intensity, the less noise is present. Gaussian noise is uniform in the sense that the intensity range is degraded by the same strength of the noise-components. 

We use the ``standard" gamma correction formula and define the mapping function to correct for an overexposed image. In this sense, we define the mapping function that corrects for this degradation as
\begin{equation}\label{eq:gc}
  m(u) = u^{1/\gamma}, 
\end{equation}
where $\gamma>0$. If $\gamma < 1$ then \eqref{eq:gc} corrects for an overexposed image, and if $\gamma > 1$ corrects for an underexposed image. Figure \ref{fig:gamma_mapping} shows the different values for $\gamma$ and its inverse, we also show the difference to 10\% offset of the inverse gamma value. The reason why we include this scale-factor is that in practical applications it is difficult, or impossible, to obtain an accurate estimate of the true inverse-mapping. In the E-L equation we are required to compute the derivative of the mapping, \ie, we have
\begin{equation}\label{eq:gc_deriv}
  m'(u) = \frac{1}{\gamma} u^{1/\gamma - 1},
\end{equation}
and normalize it by the maximum slope so that $0\leq m'(u)\leq 1$ to assure stability when solving the PDE~\eqref{eq:TVm}. 

\begin{figure*}
  \centering
  \includegraphics[width=0.24\textwidth]{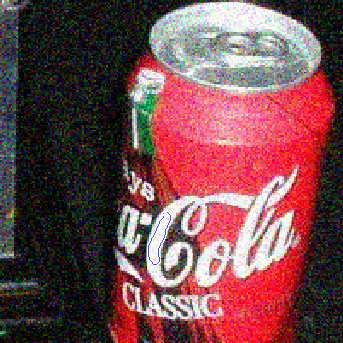}    
  \includegraphics[width=0.24\textwidth]{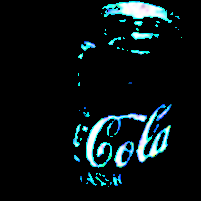} 
  \begin{overpic}[scale=0.59]{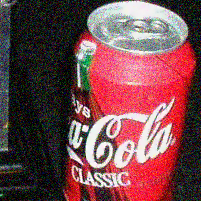}
    \put(3,35){\includegraphics[scale=0.8]{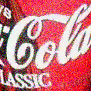}}  
  \end{overpic}
  \includegraphics[width=0.24\textwidth]{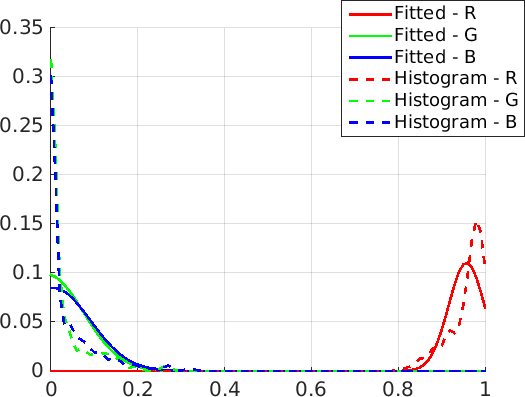}   \\
  \includegraphics[width=0.24\textwidth]{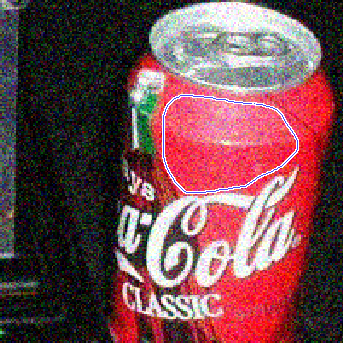}    
  \includegraphics[width=0.24\textwidth]{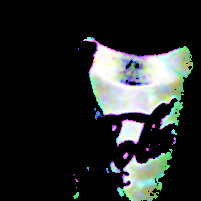} 
  \begin{overpic}[scale=0.59]{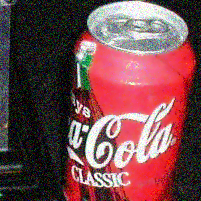}
    \put(3,35){\includegraphics[scale=0.8]{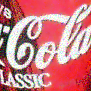}}  
  \end{overpic}
  \includegraphics[width=0.24\textwidth]{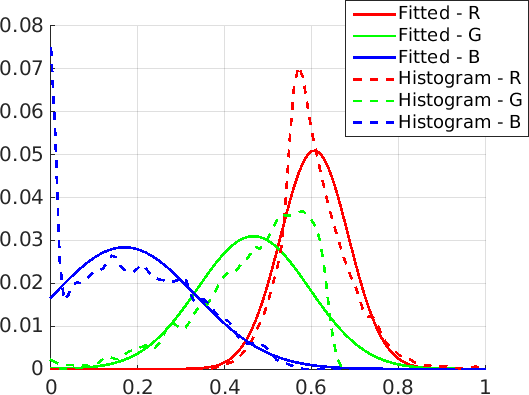}   \\
  \hfill Selected region \hfill \hfill Mapping \hfill\hfill Result \hfill\hfill Histogram \hfill \phantom{I} \\
  \caption{Two examples of targeted value range filtering from user selected regions. On the first row we focus on the text. As visible in the close-up in the Result-column the noise is removed in the white text whereas and preserved in the red background. On the second row we instead choose to filter the red color of the soda can, and as visible in the Result-column the red is denoised whereas the noise remains in the white text. The column labeled ``Mapping'' is computed from the fitted (normalized to 1) density functions in the right-most column. See text for details.} 
  \label{fig:targetedFiltering-Value}
\end{figure*}

\subsection{Results}
Figure \ref{fig:gc} shows the result of denoising an overexposed image for the different cases of filtering order. Naturally, if $\gamma = 1$, then the case of ``filter and then gamma correct" is equivalent to the mapping-based filter (as it should be). As $\gamma$ decreases, \ie, the nonlinearity become more severe, we observe that the mapping-based filter becomes superior to the two other approaches in all cases. The error measures in Figure \ref{fig:gc_eval} clearly illustrates this relation. In particular, Figure \ref{fig:gc_eval} shows the error measures for varying $\gamma$-parameter and the different orders (I), (II), and (III) of filtering. The final error values were obtained at the peak structural similarity value (SSIM) \cite{Wang2004}. The reason for the improved performance is easiest explained in terms of a dynamic system: by including the nonlinear mapping of the intensity range as a component in the filtering scheme, it acts as a feedback component adjusting the diffusivity to conform with the final visualized result. This behaviour is also illustrated in Figure \ref{fig:overview}, Section \ref{intro}. 

In addition to the error graph, we show the final result for the mapping-based filter of Cameraman and Boat as $\gamma = 0.4$ in Figure \ref{fig:gc_examples}. It is visible that most of the noise has been removed and that edges remain well-preserved.

\section{Application II: Targeted value range filtering}
\label{sec:tf}

In this section we present an application where the image value range is set from prior information based on user interaction. 

Figure \ref{fig:example_synthetic} in the introduction illustrates a fixed selection of the image value range where we know which value range is noisy prior to filtering. Thus, we can achieve high quality result where we preserve fine details in the image. These results are not possible to obtain with state of the art denoising methods such as MRF \cite{5539844}, TGV \cite{Bredies:2010:TGV:1958729.1958739} or BM3D \cite{DBLP:journals/tip/DabovFKE07} since they denoise the complete image value range.  

\subsection{User-based value-mapping}
In the introduction we also gave an initial example of user-based region selected value range filtering, see the first row of Figure \ref{fig:example_synthetic_butterfly} which shows a result when filtering the butterfly's wings. In the same figure, the corresponding mapping shows the region of interest that is processed by our algorithm. The denoised result is shown in the right column. It is visible that the noise in the desired region is reduced meanwhile noise in other value ranges are preserved.  

We give an additional example in Figure \ref{fig:targetedFiltering-Value}. In the right most column we also show the histogram for the selected color region in the left column. The histogram shows the sample value-distribution of each color channel and the corresponding estimated Gaussian distribution. These distributions make up our channel specific mapping functions. In this targeted-value range application we first reduce the noise of the letters written on the soda can. This is achieved by selecting a sample region in the image outlined on the letter ``C'', displayed on the first column in the first row. The resulting mapping shows that the algorithm is targeting the desired region. A close-up in the result-column shows that the filtering is confined to the letters - leaving the noise in the remainder of the value range undisturbed. 

Next, we mark a region with red color and filter the image. The corresponding close-up on the second row shows that the red color is free of noise and that the white color of the letters still contains noise, just as desired. This type of filtering process is beneficial when the application's noise-component is known prior to filtering. Also, there may be cases when one can tolerate the preservation of noise meanwhile other regions are smoothed. 

When performing the filtering process the fitted density functions are normalized in the same way as was done in the application of gamma correction in the previous section.

\section{Application III: Denoising via an oversegmentation map}
\label{sec:colour_eval}

In the third application we treat the problem of global image denoising, \ie, in contrast to applications I and II, we now consider the whole (spatial) image domain $\Omega$. Here, the mapping function will be estimated from locally homogeneous regions via an oversegmentation map.

\subsection{Method definitions}
We define the tensor $W$ and the mapping in functional \eqref{eq:Rfunctional} to describe isotropic diffusion (ISO), isotropic diffusion with a mapping function (ISOm) \cite{Astrom2013a},  total variation (TV) \cite{Rudin1992}, total variation with a mapping function (TVm), see \eqref{eq:TVm}, gradient energy tensor total variation (GETV) \cite{Astrom2014} and GETV with a mapping function (GETVm), see \eqref{eq:E-L_3terms}. The definitions of these methods are outlined in Table \ref{tab:overview}. The  diffusivity parameter in \eqref{eq:Dget+} for GETV and GETVm is set according to
\begin{equation}
  \label{eq:k2}
  k^2 = \frac{\exp(1)-1}{\exp(1)-2}\sigma^2,
\end{equation}
where $\sigma^2$ is the image noise variance \cite{Fberg2011}. In the following evaluation the aim is not to show superiority over all existing denoising algorithms. We rather investigate how an oversegmentation map affects the final denoised result and we believe interesting results will follow from this approach. To get comparative performance result of the below outlined approach, we compare our formulation to original code of state of the art denoising methods BM3D \cite{DBLP:journals/tip/DabovFKE07}, TGV \cite{Bredies:2010:TGV:1958729.1958739} and MRF \cite{5539844}.

\subsection{Approach and motivation}
Objects in an image, or more specifically, their value distributions, can be described by a stochastic process. Such processes can be estimated from image segments, or homogeneous regions, via an oversegmentation map. Figure \ref{fig:oversegm_boat_map} shows one such partitioning of similar regions. In this case, each region (or object) is associated with its own set of features \cite{citeulike:6171513}. In image denoising the aim is to remove the noise while preserving image features. In other words we want to obtain a ``clean" image without the noise component, \ie, we want to infer the true underlying statistical process that describes the noise free image. Therefore, modelling the image filtering process based on estimated densities is a means to incorporate a prior into the filtering scheme. Here, the aim is to separate objects based on certainty estimates originating from the underlying distributions. In practice we rely on the principle of ergodicity to estimate first and second order moments of each segments sample distribution. Additionally, we assume uncorrelated pixels between segments to drive the diffusion process. The idea is to exploit properties of statistical sample distributions in a way such that we can reduce the uncertainty to which segment a pixel, located at the border of two segments, belongs to. This is achieved by incorporating the statistical properties of pixel density functions into the filtering scheme.

Based on this motivation we let the derivative of the mapping function, \ie, $m'$, describe the sample distribution. Thus, the mapping function can be interpreted as the corresponding cumulative density function. 

In this application, our approach to image denoising is a four step procedure: (I) Compute an oversegmentation that conforms to homogeneous regions. (II) In the case of natural colour images, transform the colour space into colour opponent components. (III) Construct the mapping derivative, $m'$, from estimated density functions. (IV) Solve the corresponding E-L equation for a given $W$. We use the CIELAB colour representation of the noisy images to decorrelate its RGB components \cite{sharma2014digital}. In the following we introduce the concept of oversegmentation maps and show how they can be used to compute mapping functions to drive the diffusion process. 

\subsection{Oversegmentation}
\label{sec:oversegmentation}
The introduction of a mapping into the diffusion framework was first presented in \cite{Astrom2013a}. In this work we extend the initial formulation to include a tensor component, this is enabled by the functional in Theorem~\ref{thm:R}. However, before specifying the filtering method, we explain how to obtain an oversegmentation map to estimate the image statistics. 

\renewcommand{\figwidth}{0.49}
\begin{figure*}[t]
  \newcommand{\tabwidth}{1}	
  \centering
  \begin{minipage}{0.33\linewidth}
    \vspace{-105mm}
    \caption{Image ``Boat''. Top left original image, right is the degraded image corrupted with additive Gaussian noise with standard deviation 20. Down left, oversegmentation map and down right, is the obtained mapping function.}
    \label{fig:oversegm_boat_map}
  \end{minipage}
  \begin{minipage}{0.66\linewidth}
    \hfill  \includegraphics[width=\figwidth\textwidth]{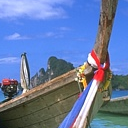}
    \hfill \hfill \includegraphics[width=\figwidth\textwidth]{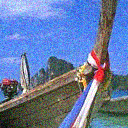} \hfill \phantom{I} \\[-3mm]
    \begin{tabularx}{\tabwidth\textwidth}{XX}
      Original &  Noisy  
    \end{tabularx}\\
    \hfill \includegraphics[width=\figwidth\textwidth]{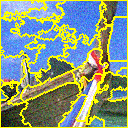} 
    \hfill \hfill \includegraphics[width=\figwidth\textwidth]{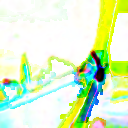} \hfill \phantom{I} \\[-3mm]
    \begin{tabularx}{\tabwidth\textwidth}{XX}
      Oversegmentation &  Mapping \\
    \end{tabularx}\\
  \end{minipage}
\end{figure*}

The selection of image regions suitable for image denoising is a non-trivial task. Much research has focused on devising efficient methods to partition images into homogeneous areas that adhere to image structure boundaries. It is not until recently these partitioning methods are efficient enough to produce sufficiently accurate regions for image denoising while achieving computation times suitable for practical applications. One oversegmentation method that satisfy these demands is SEEDS, superpixels extracted via energy-driven sampling~\cite{SEEDS}. With the assumption that each segment in the oversegmentation map is independent, we estimate local second order statistics for each segment. A sensitivity analysis on the selection of number of regions was done in~\cite{Astrom2013a}, which showed that the filtering process is robust to the number of segments used in the oversegmentation map. Furthermore, note that the selection of oversegmentation method is not unique. We expect that as the oversegmentation methods continue to improve boundary localizations, they may further accelerate their applicability in denoising applications. 

\subsection{Estimating a mapping}
The spatial extent of a segment in the oversegmentaion map should conform to approximately homogeneous regions. If the requirement is fulfilled, then the observed data (or pixels) within one region can be assumed to originate from the same stochastic process and modeled using the same density function. We use this idea to derive a confidence value that describes the certainty that a sample (pixel) belongs to a region or not, \eg, a noisy observation would have a low confidence. We model the mapping $m$ to reflect the confidence level that a sample belongs to a neighbourhood. To do so we define the mapping from a local neighbourhood $\mathcal{L}$. Assuming independence between each segment, we model the data sample in the neighbourhood using a Gaussian distribution with associated mean $\mu$ and variance $\sigma^2$. In practice we estimate the statistical moments using the sample mean and variance of the segment. The mapping function is defined as
\begin{equation} \label{eq:segmentation_m}
  m'(u;\mu,\sigma) = \prod_{i\in \mathcal{L}} g(u; \mu_i, \sigma_i),
\end{equation}
where
\begin{equation}
  g(u; \mu_i, \sigma_i) = \frac{1}{\sqrt{2\pi}\sigma_i} \exp \left ( {-\frac{1}{2}\frac{(u-\mu_i)^2}{\sigma_i^2}} \right ). 
\end{equation}
Thus, instead of controlling the filtering based on edge-strength, \eg, as in P-M \eqref{eq:PMg}, the filtering is now defined by $m'$, the certainty that an observed sample belong to the local neighbourhood. The selection of neighbourhood $\mathcal{L}$ is arbitrary, but for simplicity we choose its size to be $3 \times 3$. In~\eqref{eq:segmentation_m} we use the product of the probability density functions (PDF) rather than the sum which would yield a proper Gaussian mixture model (GMM). The motivation is that at edges, the GMM would not reduce the filtering as efficiently as in the case of a product mapping function. When filtering colour images we compute one oversegmentation of the noisy image. This map is then used to compute the density estimates in each channel of the colour image representation. 

\renewcommand{\figwidth}{0.32}
\begin{figure*}[t]
  \newcommand{\tabwidth}{1}	
  \centering
  \hfill  \includegraphics[width=\figwidth\textwidth]{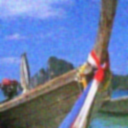} \hfill \hfill
  \hfill\hfill  \includegraphics[width=\figwidth\textwidth]{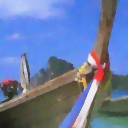} \hfill \hfill
  \hfill\hfill  \includegraphics[width=\figwidth\textwidth]{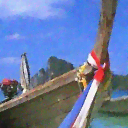} \hfill \phantom{I} \\
  \begin{tabularx}{\tabwidth\textwidth}{XXX}
     ISO &  TV  &  GETV \\
  \end{tabularx}\\
  \hfill  \includegraphics[width=\figwidth\textwidth]{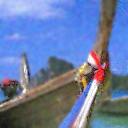} \hfill \hfill
  \hfill\hfill  \includegraphics[width=\figwidth\textwidth]{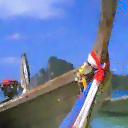} \hfill \hfill
  \hfill\hfill  \includegraphics[width=\figwidth\textwidth]{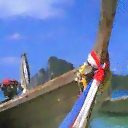} \hfill \phantom{I} \\
  \begin{tabularx}{\tabwidth\textwidth}{XXX}
    ISOm &  TVm &  GETVm \\
  \end{tabularx}\\
  \hfill \includegraphics[width=\figwidth\textwidth]{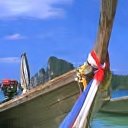} \hfill \hfill
  \hfill \hfill \includegraphics[width=\figwidth\textwidth]{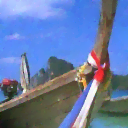} \hfill \hfill
  \hfill \hfill \includegraphics[width=\figwidth\textwidth]{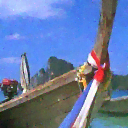} \hfill \phantom{I} \\
  \begin{tabularx}{\tabwidth\textwidth}{XXX}
    BM3D &  TGV  & MRF \\
  \end{tabularx}
  \caption{Comparison of visual quality. The Boat's bow appears more clear where we have applied the mapping function, compared to ISO, TV and GETV. Visually, TGV appear similar to TVm and GETVm. While ISO and ISOm are oversmoothed, MRF and GETV exhibits color artifacts in the blue sky. Although, BM3D has good visual and good error values (Table \ref{eq:table_boat}), its approach to denoising is \emph{fundamentally} different from the PDE-based method introduced in this work.}
  \label{fig:oversegm_boat}
\end{figure*}

\begin{table*}
  \centering
  \begin{minipage}{0.48\linewidth}
    \centering
    \renewcommand*{\arraystretch}{1.3}
    \begin{tabularx}{0.55\textwidth}{r|c|c}
      & \textbf{PSNR} & \textbf{SSIM} \\
      \hline
      Noisy  & 22.10 & 0.47 \\ 
      \hline 
      ISO  & 25.09 & 0.75 \\ 
      ISOm & 25.36 & 0.76 \\ 
      \hline 
      TV   & 26.71 & 0.80 \\ 
      TVm  & 26.66 & 0.80 \\ 
      \hline  
      GETV & 27.60 & 0.77 \\ 
      GETVm & 26.70 & 0.80 \\ 
      \hline  
      TGV  & 27.98 & 0.82 \\ 
      BM3D & 31.36 & 0.90 \\ 
      MRF  & 28.26 & 0.81 \\ 
    \end{tabularx} 
    \caption{Error values for the Boat image in Figure \ref{fig:oversegm_boat_map} and \ref{fig:oversegm_boat}. We see that the mapping function in particular improves the SSIM values. Compared to TGV, BM3D and MRF our results are most similar to TGV and MRF.}
    \label{eq:table_boat}
  \end{minipage}
  \hspace{3mm}
  \begin{minipage}{0.48\linewidth}
    \centering
    \renewcommand*{\arraystretch}{1.3}
    \begin{tabularx}{0.55\textwidth}{r|c|c}
      & \textbf{PSNR} & \textbf{SSIM} \\
      \hline
      Noisy  & 22.08 & 0.40 \\ 
      \hline 
      ISO  & 27.69 & 0.80 \\ 
      ISOm & 28.09 & 0.81 \\ 
      \hline 
      TV   & 29.05 & 0.83 \\ 
      TVm  & 29.32 & 0.83 \\ 
      \hline  
      GETV & 29.12 & 0.76 \\ 
      GETVm & 29.42 & 0.84 \\ 
      \hline  
      TGV  & 30.03 & 0.85 \\ 
      BM3D & 33.10 & 0.91 \\ 
      MRF  & 29.71 & 0.82 \\ 
    \end{tabularx} 
    \caption{Error values for the Lena image in Figure \ref{fig:oversegm_lena_map} and \ref{fig:oversegm_lena}. In this case the mapping improves all error values. Compared to TGV, BM3D and MRF our results are most similar to TGV and MRF.}
    \label{eq:table_lena}
  \end{minipage}
\end{table*}

\renewcommand{\figwidth}{0.49}
\begin{figure*}[t]
  \newcommand{\tabwidth}{1}	
  \centering
  \begin{minipage}{0.33\linewidth}
    \vspace{-103mm}
      \caption{Image ``Lena''. Top left original image, top right shows the degraded image corrupted with additive Gaussian noise of standard deviations 20. Down left shows the oversegmentation map of the noisy image and down right illustrates the obtained mapping function.}
      \label{fig:oversegm_lena_map}
  \end{minipage}
  \begin{minipage}{0.66\linewidth}
    \hfill  \includegraphics[width=\figwidth\textwidth]{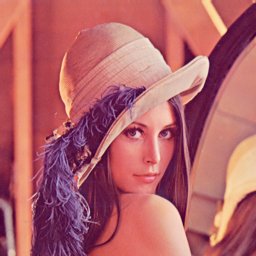} 
    \hfill \hfill \includegraphics[width=\figwidth\textwidth]{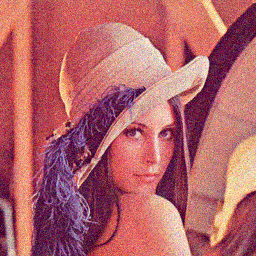} \hfill\phantom{I} \\[-3mm]
    \begin{tabularx}{\tabwidth\textwidth}{XX}
      Original  & Noisy \\
    \end{tabularx}\\
    \hfill \includegraphics[width=\figwidth\textwidth]{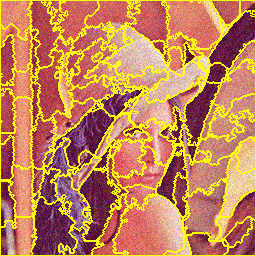} 
    \hfill \hfill \includegraphics[width=\figwidth\textwidth]{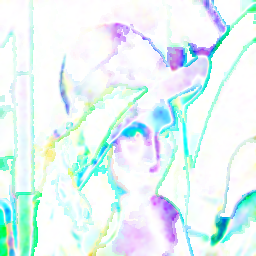} \hfill \phantom{I} \\[-3mm]
    \begin{tabularx}{\tabwidth\textwidth}{XX}
      Oversegmentation  & Mapping \\
    \end{tabularx}\\
  \end{minipage}
\end{figure*}

\renewcommand{\figwidth}{0.32}
\begin{figure*}[t]
  \newcommand{\tabwidth}{1}	
  \centering
  \hfill  \includegraphics[width=\figwidth\textwidth]{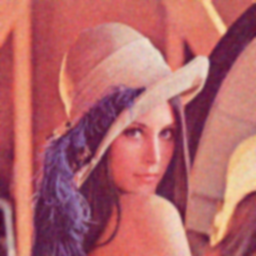} \hfill \hfill
  \hfill\hfill  \includegraphics[width=\figwidth\textwidth]{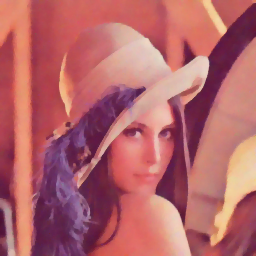} \hfill \hfill
  \hfill\hfill  \includegraphics[width=\figwidth\textwidth]{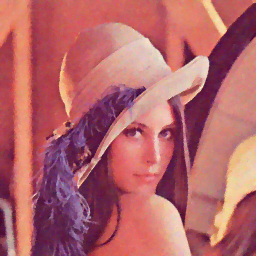} \hfill \phantom{I} \\
  \begin{tabularx}{\tabwidth\textwidth}{XXX}
     ISO &  TV  &  GETV \\
  \end{tabularx}\\
  \hfill  \includegraphics[width=\figwidth\textwidth]{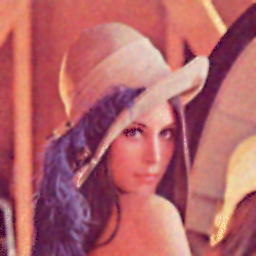} \hfill \hfill
  \hfill\hfill  \includegraphics[width=\figwidth\textwidth]{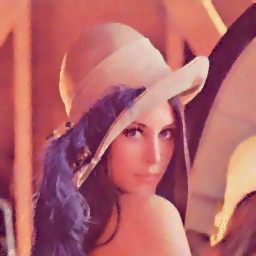} \hfill \hfill
  \hfill\hfill  \includegraphics[width=\figwidth\textwidth]{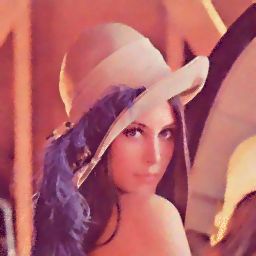} \hfill \phantom{I} \\
  \begin{tabularx}{\tabwidth\textwidth}{XXX}
    ISOm &  TVm &  GETVm \\
  \end{tabularx}\\
  \hfill \includegraphics[width=\figwidth\textwidth]{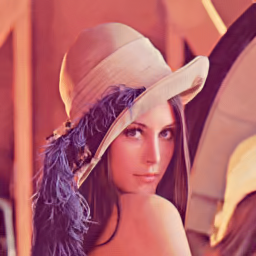} \hfill \hfill
  \hfill \hfill \includegraphics[width=\figwidth\textwidth]{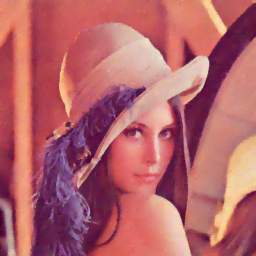} \hfill \hfill
  \hfill \hfill \includegraphics[width=\figwidth\textwidth]{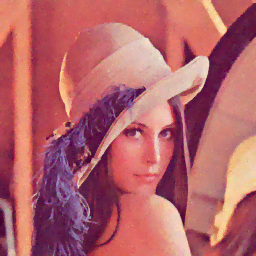} \hfill \phantom{I} \\
  \begin{tabularx}{\tabwidth\textwidth}{XXX}
    BM3D &  TGV  & MRF \\
  \end{tabularx}
  \caption{Comparison of visual quality. The brim of Lena's hat appears more clear where we have applied the mapping function, compared to ISO, TV and GETV. As expected, ISO and ISOm are oversmoothed. TVm, GETVm and TGV all have similar visual appearance. In this example image, there are no apparent color artifacts. Visually, Lenas hair retains more details in BM3D, TGV and MRF compared with TVm and GETVm.}
  \label{fig:oversegm_lena}
\end{figure*}

\subsection{Results}
We already mentioned in the introductory paragraph of this section that we do not aim to outperform state of the art denoising methods. Rather we study how the usage of an oversegmentation map influence the noise removal. Our PDE-framework defines a natural way of reducing the filtering process at border regions rich with detail and texture, so as to not oversmooth corners and lines. We show that the mapping function is the key component for this approach. 

Figure \ref{fig:oversegm_boat} illustrates a result for all considered methods, obtained at their maximum SSIM values. The right illustration in Figure \ref{fig:oversegm_boat_map} shows the mapping for GETVm. We see that the red color-region on the bow of the Boat has high uncertainty, thus in this region, the mapping-based algorithms will reduce the filtering. Note that the color coding is in the RGB-space in the mapping-figure although the actual filtering is done in the CIELAB-space. This means that red color in the mapping corresponds to filtering in the CIELAB intensity channel whereas green and blue describe the opponent color channels. White color indicates that each CIELAB-value is well described by the estimated sample distributions and therefore we have high filtering in each CIELAB channel. Moreover, the color cyan indicates filtering of only the chroma information, not the structural information present in the intensity channel. Due to the mapping, the filtering is reduced in regions with high uncertainty. Example of such regions are the sail and the mountain in the background, see, \eg, ISOm in Figure \ref{fig:oversegm_boat}. Here, this gives an improved visual appearance without oversmoothing details in these regions while suppressing the noise in the background. With respect to the error measures for the Boat image, see Table \ref{eq:table_boat}, it is obvious that BM3D outperforms all compared methods. However, GETV is on-par with TGV and MRF. The mapping makes the most difference for GETV, showing more improvement than any of the other approaches. Although the PSNR value decreases marginally for TVm compared to TV, the denoised result in Figure \ref{fig:oversegm_boat} appears less oversmoothed for TVm. 

Figure \ref{fig:oversegm_lena_map} shows the original and the noisy image of Lena. The same figure also depicts the mapping produced by the GETVm method. Figure \ref{fig:oversegm_lena} illustrates the result of filtering the Lena image for the respective method. In this case, image regions are well-preserved by the mapping, \eg, the brim of the hat and the hair. Note that the mapping-based methods have a tendency to oversmooth features in highly textured regions, \eg, the feathers in Lena's hat. This is due to the violation of homogeneous partitioning of the oversegmentation map, clearly visible in Figures \ref{fig:oversegm_boat_map} and \ref{fig:oversegm_lena_map}. This is a frequent problem for these types of image structure. With respect to the error measures, Table \ref{eq:table_lena}, we see that BM3D is again superior to all considered methods. We also note that the introduction of mapping function does improve the ISO, TV and GETV error values and visual appearance. 

\section{Discussion and summary of results}
\label{sec:discussion}
This work introduces an alternative viewpoint to image filtering, as such, we discuss the usability of the method and give some additional interpretations. 

The novelty of our approach is the idea of including a mapping of the image value range in the variational framework. To this end, the resulting PDE does not only describe spatially-adaptive filtering but also depends on the image value range. Important to note is that the mapping $m$ can be defined as either a \emph{local} or a \emph{global} function. By this we mean that the mapping can be defined locally, \ie, in a region, or alternatively, globally as in the application of gamma correction where the mapping is applied pixel-wise. We shortly elaborate on the difference between these classes of mapping functions. 

For interesting image processing applications, \eg, the ones considered in this work, it is formally necessary to decouple the derived Euler-Lagrange equation from the initial variational formulation. The reason is that none of the complex definitions of the mapping and the nonlinear mappings of the tensor GET$^+$ allows us obtain an energy functional on the form \eqref{eq:EL-div}. This is due to the result in Proposition \ref{prop:GF_conv}. However, note that the decoupling is only in the case GET has negative eigenvalues. For an extended discussion of GET and its properties see \cite{diva2:269100}. If we instead consider the structure tensor: the energy and the E-L equation are \emph{fully} decoupled due to the convolution operator, we again refer to Proposition \ref{prop:GF_conv}.

The versatility of our framework allows a large number of image processing problems to be considered. In this work we illustrate applications of image denoising. We hypothesize that the mapping can be used in, \eg, image value range compression, visualization, deblurring and inpainting, etc. We also see possible extensions to estimate (or learn) the mapping from datasets of images or self-similarity measures of patches in an image.

Naturally our approach is not the only way to achieve any of the considered applications. However, it is a reasonable assumption that coupling the filtering process with the nonlinear mapping is a preferred formulation. Our method is a framework that provides a natural and explicit coupling for image enhancement: it relates theoretical expositions with user-defined as well as data-specific priors. This versatility, comes at a cost of performance. In fact, one can tailor methods to solve each application independently, this is what BM3D has done with great success for the application of image denoising. Although heuristic methods can give better performance w.r.t. error measures, their scalability to other applications remain limited.

To further motivate our approach and clarify its potential we wish to highlight specific differences between the mapping functions we have considered in the respective applications. In particular we discuss, what we call, \emph{local} and \emph{global} mapping functions and in what context the mapping is a suitable approach. 

Table \ref{table:overview_app}, Section \ref{intro}, gives an overview of the different mapping functions considered in this work. 

\subsection{Global mappings}
A global mapping is a mapping where each image sample (pixel) is mapped with the same definition of the mapping function, \ie, $u \mapsto m(u)$. 

This type of mapping is used in the first application on gamma correction. We emphasize that in this case the mapping does not provide any additional \emph{explicit} edge information of the image. However, the mapping does provide information how the image values \emph{should} map, \eg, to correct for the gamma distortion in the final result. Recall that the property of the mapping is that if a line/corner, or feature, is present in the image value range after the mapping, then the feature is preserved. If the structure is not present in the value range after the mapping then the feature is not preserved. Again, although the purely global mapping formulation does not give an explicit edge-stopping effect, the \emph{mapping will aid the discontinuity preservation} induced by the PDE given that the \emph{feature information is present after the mapping}. Moreover, since the mapping is nonlinear, a weak edge in the input data may be transformed to a strong edge as the mapping is applied. Thus, the mapping gives an \emph{implicit} edge-preservation effect. Next we contrast the global mapping function with the local mapping. 

\subsection{Local mapping}
We have considered two types of local mappings. First we defined a local mapping of the color image value range (Application II), where we target the filtering process to a certain color range. The selection of the color range is based on user-input but may also be generalized to prior value ranges based on application requirements. The advantage of having a coupled value and spatial-range filtering process is clear in this targeted application: one does not need to explicitly handle the transition (border) between filtered and unfiltered value range data. This is implicitly handled by the smoothness of the mapping function. 

In the third application, we defined an oversegmentation map where each segment in the map describe a local homogeneous region. From these segments we estimate local density functions and via a mixture model we achieve an \emph{edge stopping effect} between the segments. This edge-stopping effect can be seen in Figure \ref{fig:example_synthetic_butterfly} (second row, middle figure) and in the mappings in Figure \ref{fig:oversegm_boat_map} and \ref{fig:oversegm_lena_map}, respectively. Individually, density functions do not provide any information about edge-localization, rather it is the \emph{combination} of densities that prove edge-localization. This combination of mappings depends on a large number of pixels and their relation. This is comparable, but goes beyond the local estimation of the edge-stopping function in P-M diffusion etc. The method used to produce the oversegmentation map is not unique. Moreover, one can apply any method that produces an oversegmentation map. Naturally, the effect of the mapping in the filtering process will depend on the accuracy of detected segment boundaries. 

\section{Conclusions}
\label{sec:conclusion}
This work introduced a mapping of the image value range into a variational framework. The key component is the mapping function that allows us to replace the ad-hoc edge-stopping function otherwise present in diffusion schemes. After adopting the mapping function to a set of image processing problems we conclude that the mapping contributes positively to the filtering process. Moreover, we have shown that the mapping can be estimated (or set) for each application in a non-parametric way. Our main finding is that context aware tensor-based image filtering can be derived from principles of energy minimization. However, with respect to general image denoising, state of the art denoising methods outperform our framework. If we consider targeted value range applications our methodology clearly demonstrates advantageous aspects by improving feature preservation while reducing the image noise. It is expected that, under appropriate mappings, related energy based problem will benefit from the presented work. Naturally, assuming a reasonable map can be formulated, the direct extension to these alternative applications is to generalize existing regularization terms. 

\begin{acknowledgements}
  We thank the reviewers for their helpful comments and suggestions which have improved this work. This research has received funding from the Swedish Foundation for Strategic Research through the grant VPS and from Swedish Research Council through grants for the projects energy models for computational cameras (EMC$^2$) and Visualization adaptive Iterative Denoising of Images (VIDI), all within the Linnaeus environment CADICS and the excellence network ELLIIT. Support by the German Science Foundation and the Research Training Group (GRK 1653) is gratefully acknowledged by the first author.
\end{acknowledgements}

\appendix

\section{Proof of Theorem~\ref{thm:R}}
\label{appendix:proof_quad_func}
To prove Theorem~\ref{thm:R}, compute the variational derivative of the functional~\eqref{eq:Rfunctional}. The first variation is given by
\begin{equation}
  \delta R = \left . \frac{\partial}{\partial \varepsilon} R(u+\varepsilon v) \right |_{\varepsilon = 0}.
\end{equation}
Now, we let $u \mapsto u+\varepsilon v$ in $R(u)$ and we get
\begin{subequations}
  \begin{align}
    \notag  R(u+\varepsilon v) & = \nabla m(u+\varepsilon v)^{\top}W(\nabla m(u+\varepsilon v)) \nabla m(u+\varepsilon v)  \\
    \label{eq:R_prod_A} & = \underbrace{m'(u+\varepsilon v)^2\nabla (u+\varepsilon v)^{\top}}_{=A} \\ 
    \label{eq:R_prod_B} & \qquad  \cdot  \underbrace{W(\nabla m(u+\varepsilon v)) \nabla(u+\varepsilon v)}_{=B} ,
  \end{align}
\end{subequations}
then, from the product rule of differentiation, we need to consider the following terms
\begin{align}\label{eq:R_prod_exp}
\notag  \frac{\partial}{\partial \varepsilon} R(u+\varepsilon v) &= \left ( \frac{\partial}{\partial \varepsilon} A(u+\varepsilon v) \right)B \\
  & \qquad +  A \left ( \frac{\partial}{\partial \varepsilon} B(u+\varepsilon v) \right).
\end{align}
To simplify the notation define 
\begin{equation}
  z = \begin{pmatrix} z_1 = m'(u+\varepsilon v)(u_1 + \varepsilon v_1) \\ \vdots \\ z_d = m'(u+\varepsilon v)(u_d + \varepsilon v_d)  \end{pmatrix}, 
\end{equation}
and note the relation
\begin{equation}
	z|_{\varepsilon = 0} = \nabla m(u) = s. 
\end{equation}
We first consider the differentiation of the $B$-component w.r.t $\varepsilon$. In order to differentiate the $B$-component use the chain rule of differentiation:
\begin{equation*}
  \pd{W(z)}{\varepsilon} = \pd{W(z)}{z_1}\pd{z_1}{\varepsilon} + \cdots +   \pd{W(z)}{z_d}\pd{z_d}{\varepsilon},
\end{equation*} 
then
\begin{align}\label{eq:dBz}
\notag & \pd{}{ \varepsilon} B(z)  =  \pd{}{ \varepsilon} \left ( W(z)\nabla (u+\varepsilon v) \right ) \\
    & = \left ( \pd{}{\varepsilon} W(z) \right ) \nabla(u+\varepsilon v) +  W(z)\nabla v,
\end{align}
with
\begin{align}\label{eq:dWz}
\notag  \pd{}{\varepsilon} W(z) &=  \Big [W_{z_1}(z) \Big (m''(u+\varepsilon v)v(u_1+\varepsilon v_1) \\ 
\notag                          & \qquad \qquad + m'(u + \varepsilon v) v_1 \Big)  \\
  \notag  & \hspace*{15mm} \vdots \\
  \notag  & \hspace*{3mm}  + W_{z_d}(z) \Big (m''(u+\varepsilon v)v(u_d+\varepsilon v_d) \\
                         & \qquad \qquad + m'(u + \varepsilon v) v_d \Big ) \Big ] ,
\end{align}
evaluating the limit $\varepsilon \rightarrow 0$ in \eqref{eq:dBz} with \eqref{eq:dWz} we get
\begin{align}\label{eq:dBlim}
\notag \pd{}{ \varepsilon} B(z)\bigg |_{\varepsilon = 0} 
  & = \Big [W_{s_1}(s) \Big (m''(u)v u_1 + m'(u) v_1 \Big)  \\
\notag  & \hspace*{15mm} \vdots \\
\notag  & \hspace*{3mm}  + W_{s_d}(s) \Big (m''(u)v u_d + m'(u) v_d \Big ) \Big ] \nabla u \\
   & \hspace*{5mm} + W(s)\nabla v .
\end{align}
Next, we focus on the second product in~\eqref{eq:R_prod_exp}. Then, with \eqref{eq:dBlim} and after few rewrites, we obtain the following
\begin{subequations}
\begin{align}
\notag & \left . A \left ( \frac{\partial}{\partial \varepsilon} B(u+\varepsilon v) \right) \right |_{\varepsilon = 0} \\
\notag & =  m'(u)^2\nabla u^{\top} \Big [W_{s_1}(s) \Big (m''(u)v u_1 + m'(u) v_1 \Big)  \\
\notag & \hspace*{25mm} \vdots \\
\notag & \hspace*{18mm}  + W_{s_d}(s) \Big (m''(u)v u_d  + m'(u) v_d \Big ) \Big ]\nabla u \\
       & \hspace*{5mm} + m'(u)^2\nabla u^{\top} W(s)\nabla v \\
\notag & =  m'(u)^2\Big [\nabla u^{\top} W_{s_1}(s) \nabla u  \Big (m''(u)v u_1 + m'(u) v_1 \Big)  \\
\notag & \hspace*{25mm} \vdots \\
\notag & \hspace*{18mm}  + \nabla u^{\top} W_{s_d}(s) \nabla u \Big (m''(u)v u_d  + m'(u) v_d \Big ) \Big ] \\
       & \hspace*{5mm} + m'(u)^2\nabla u^{\top} W(s)\nabla v \\
\notag & =  m'(u)^2\Big \langle Q , \Big (m''(u)v \nabla u + m'(u) \nabla v \Big)  \Big \rangle \\
       & \hspace*{5mm} + m'(u)^2 \Big \langle \nabla u,  W(s)\nabla v \Big \rangle  \\
\notag & =  \underbrace{ \Big  \langle m'(u)^2 m''(u)Q, \nabla u\Big \rangle}_{=G} v \\ 
\label{eq:first_derivative}  & \hspace*{5mm} + \Big \langle \nabla v, \Big (\underbrace{ m'(u)^3 Q + m'(u)^2W(s)^{\top}}_{=H} \Big ) \nabla u \Big  \rangle  ,
\end{align}
\end{subequations}
where we set
\begin{align}
  Q = \begin{pmatrix} 
    \nabla u^{\top} W_{s_1}(s) \nabla u \\
    \vdots \\
    \nabla u^{\top} W_{s_d}(s) \nabla u
  \end{pmatrix}
  = \prescript{}{\nabla u}{W_{\bm s}({\bm s})} \nabla u, 
\end{align}
and $ \prescript{}{\nabla u}{W_{\bm s}({\bm s})}$ is defined in \eqref{eq:Ws}. The definitions $G$ and $H$ will be used in the application of Green's formula \eqref{eq:EL-div}. 
Now we focus on the differentiation of the $A$-component w.r.t. $\varepsilon$ in \eqref{eq:R_prod_A}, \ie, we set $u \mapsto u+\varepsilon v$ and get
\begin{align}
\notag & \frac{\partial}{\partial \varepsilon} A(u+\varepsilon v) \\
\notag & =  2m'(u+\varepsilon v)m''(u+\varepsilon v) v \nabla (u+\varepsilon v)^{\top} \\
\label{eq:dA}       & \qquad +  m'(u+\varepsilon v)^2 \nabla v^{\top} .
\end{align}
With \eqref{eq:dA} we evaluate the limit  $\varepsilon \rightarrow 0$ of  \eqref{eq:R_prod_A} which results in
\begin{align}
\notag & \left . \left ( \frac{\partial}{\partial \varepsilon} A(u+\varepsilon v) \right)B \right |_{\varepsilon = 0} \\
\notag  = & \Big ( 2m'(u)m''(u) v \nabla u^{\top} +  m'(u)^2 \nabla v^{\top} \Big ) W(\nabla m(u)) \nabla u \\
\notag  = & \Big (\underbrace{2m'(u)m''(u) \nabla u^{\top}  W(\nabla m(u)) \nabla u}_{E} \Big ) v \\
\label{eq:second_derivative}    & \hspace*{10mm} + \nabla v^{\top} \Big ( \underbrace{m'(u)^2 W(\nabla m(u))}_{F}  \Big ) \nabla u ,
\end{align}
where the definitions $E$ and $F$ will be used in the application of Green's theorem. Summing \eqref{eq:first_derivative} and \eqref{eq:second_derivative} and making use of $G, H$ and $E,F$ yields 
\begin{align}\label{eq:intHF}
\notag  \left . \frac{\partial}{\partial \varepsilon} R(u+\varepsilon v) \right |_{\varepsilon = 0} &= \int_\Omega (G+E)v \; d\bm{x} \\ 
&+ \int_\Omega \nabla v^{\top} (H+F)\nabla u \; d\bm{x}. 
\end{align}
The second integral of \eqref{eq:intHF} is now on the form $\nabla v^\top A(\nabla u)$, \ie, we can integrate it w.r.t. $\nabla v$ by using Green's formula (cmp. \eqref{eq:EL-div}) and get
\begin{align*}
\notag  & \int_\Omega \nabla v^{\top}  (H+F)\nabla u \; {d\bm{x}} \\
& = \int_{\partial \Omega} v (\bm{n}\cdot (H+F)\nabla u) \;dS - \int_{\Omega} v \div{(H+F)\nabla u} \;{d\bm{x}} ,
\end{align*}
where the natural Nuemann boundary condition is given by $\bm{n}\cdot (H+F) = 0$. Substituting $G,H,E,F$ defined in \eqref{eq:first_derivative} and \eqref{eq:second_derivative} into the above expression, and after rearranging the terms and dropping the parentheses for improved clarity, we get
\begin{equation}
  \left \{
    \begin{array}{rlll}
      m'm'' \nabla u^{\top} \Big [ 2W +  \prescript{}{\nabla u}{W_{\bm s}({\bm s})} m' \Big ] \nabla u  \\
	  - \div{ (m')^2 \Big ( W +  W^{\top} + m' \prescript{}{\nabla u}{W_{\bm s}({\bm s})} \Big )\nabla u} & = 0 & \mbox{in} & \Omega \\[3mm]
      n \cdot (F + H)\nabla u & = 0    & \mbox{on} & \partial \Omega
    \end{array} \right .
  \label{eq:ELexp_simplified}
\end{equation}
The above E-L equation can be simplified by considering the following expansion 
\begin{align}\label{eq:proofEL}
\notag  & \div{(m')^2[2W+\prescript{}{\nabla u}{W_{\bm s}({\bm s})} m']\nabla u} \\
\notag  & \hspace{10mm} = 2m'm''\nabla u^{\top} [2W+\prescript{}{\nabla u}{W_{\bm s}({\bm s})} m']\nabla u \\
  & \hspace*{15mm} + (m')^2\div{(2W+\prescript{}{\nabla u}{W_{\bm s}({\bm s})} m')\nabla u} .
\end{align}
Substituting the first term of the right hand side of \eqref{eq:proofEL} into the PDE of \eqref{eq:ELexp_simplified} we get:
\begin{align}
\notag & \div{(m')^2[2W+\prescript{}{\nabla u}{W_{\bm s}({\bm s})} m']\nabla u} \\
\notag & \hspace*{5mm} - (m')^2\div{[2W+\prescript{}{\nabla u}{W_{\bm s}({\bm s})} m']\nabla u}  \\
\label{eq:PDEnoLinearity} & \hspace*{5mm} - 2\div{ (m')^2 \Big [ W +  W^{\top} + m' \prescript{}{\nabla u}{W_{\bm s}({\bm s})} \Big ]\nabla u} = 0 .
\end{align}
The final result is obtained after subtracting the last term from the first term in  \eqref{eq:PDEnoLinearity}, which concludes the proof. \hfill $\qed$

\section{Proof of Corollary \ref{corr:R}}
\label{app:proofGETV}
To derive the gradient energy tensor diffusion scheme we set $W$ and $D$ as specified in \eqref{eq:DW}. With this relation between $W$ and $D$ we introduce $E$ and set $E = 1/|\nabla m(u)|$, then $W = DE$. Before proceeding with the proof, we specify the components of GET. The components of GET \eqref{eq:GET_components} are
\begin{subequations}
\begin{align}
\label{eq:GETa}  a & = (\partial_x u_x)^2 + (\partial_x u_y)^2 - u_x (\partial_{xx}u_x + \partial_{xy}u_y) \\
  \notag	b & = (\partial_{x}u_x)(\partial_x u_y) + (\partial_{y}u_x)(\partial_y u_y) \\ 
\label{eq:GETb}  & \hspace{5mm} - \frac{1}{2}( u_x (\partial_{yx}u_x + \partial_{yy}u_y )  + u_y(\partial_{xx}u_x + \partial_{xy}u_y))  \\
\label{eq:GETc}  c & = (\partial_y u_y)^2 + (\partial_y u_x)^2 - u_y (\partial_{yx}u_x + \partial_{yy}u_y) .
\end{align}
\end{subequations}
Now we need to compute $\prescript{}{\nabla u}{W_{\nabla u}(\nabla u)}$ in \eqref{eq:T_s} with $\bm{s} = \nabla u$, that is, we have
\begin{align}
  \prescript{}{\nabla u}{W_{\nabla u}(\nabla u)} = 
  \begin{pmatrix} 
    \nabla u^{\top} E_{u_x}D \\ 
    \nabla u^{\top} E_{u_y}D
  \end{pmatrix}	
  +	 
  E
  \begin{pmatrix} 
    \nabla u^{\top}  D_{u_x} \\ 
    \nabla u^{\top}  D_{u_y}
  \end{pmatrix} .
\end{align}
The derivatives of $E$ reads
\begin{subequations}
\begin{align}
  E_{u_x} &= \partial_{u_x} \frac{1}{|\nabla u|} = - \frac{1}{|\nabla u|^3} u_x,  \\
  E_{u_y} &= \partial_{u_y} \frac{1}{|\nabla u|} = - \frac{1}{|\nabla u|^3} u_y.
\end{align}
\end{subequations}
The tensor $\prescript{}{\nabla u}{W_{\nabla u}(\nabla u)}$ in the E-L scheme, can now be expressed as~\eqref{eq:corr1_Ts}. \hfill \qed

\section{Eigendecomposition}
\label{app:eigenD}
This part decomposes $D$ in its eigendecomposition and computes $D_{u_x}$ and $D_{u_y}$. We have
\begin{equation}
\notag	D(\nabla u) = U \Lambda U^{\top}  
	 =
	\begin{pmatrix} 
		v^2_1   &   v_1v_2   \\
		v_1v_2  &  v^2_2
	\end{pmatrix}\lambda_1
	+
	\begin{pmatrix} 
		w^2_1   &   w_1w_2   \\
		w_1w_2  &  w^2_2
	\end{pmatrix}\lambda_2,
\end{equation}
where $\lambda_{1,2} = \exp(-|\mu_{1,2}|/k^2)$ and $\mu_1 = \frac{1}{2}( \tr{GET} + \alpha)$ and $\mu_2 = \frac{1}{2}(\tr{GET} - \alpha)$ are the eigenvalues of GET with $\alpha = \sqrt{(a-c)^2 + 4b^2}$. We obtain the eigenvectors of GET by solving $GET \tilde{v} = \mu_1 \tilde{v}$, \ie, we have the following equation system
\begin{equation}
  \left \{
    \begin{array}{cccccc}
      (a - c - \alpha)\tilde{v}_1 & + & 2b\tilde{v}_2 & = & 0 & \\
      2b\tilde{v}_1                    & + & (c-a-\alpha)\tilde{v}_2 & = & 0 & \enspace .
    \end{array} \right . 
\end{equation}
The orthonormal eigenvectors of GET are \eqref{eq:eigenvector_v} and \eqref{eq:eigenvector_w}. Now, we focus on $\partial_{u_x} D$ in \eqref{eq:D_ux}. After expanding the derivatives of the eigenvectors we obtain
\begin{subequations}
\begin{align}
  \partial_{u_x} \tilde{v}_1 &= -(\partial_{yx}u_x + \partial_{yy}u_y) , \\
  \partial_{u_x} \tilde{v}_2 &= \partial_{xx}u_x + \partial_{xy}u_y + \partial_{u_x} \alpha , \\
  \partial_{u_x} \tilde{w}_1 &= \partial_{u_x} \tilde{v}_1,  \\
  \partial_{u_x} \tilde{w}_2 &= \partial_{xx}u_x + \partial_{xy}u_y - \partial_{u_x} \alpha, 
\end{align}
\end{subequations}
and 
\begin{subequations}
\begin{align}
  \partial_{u_x} |\tilde{v}| &= |\tilde{v}|^{-1}(\tilde{v}_1(\partial_{u_x}\tilde{v}_1)+\tilde{v}_2(\partial_{u_x}\tilde{v}_2)), \\
  \partial_{u_x} |\tilde{w}| &= |\tilde{w}|^{-1}(\tilde{w}_1(\partial_{u_x}\tilde{w}_1)+\tilde{w}_2(\partial_{u_x}\tilde{w}_2)).
\end{align}
\end{subequations}
The derivatives of the tensor's eigenvalues are:
\newcommand{\sgn}{\mathop{\mathrm{sgn}}}
\begin{align}
\notag	\partial_{u_x} \lambda_{1,2} & = \partial_{u_x} \exp(-|\mu_{1,2}|/k^2) \\
& = - \frac{\sgn(\mu_{1,2})}{k^2}\Big(\partial_{u_x}\tr{GET} \pm \partial_{u_x}\alpha \Big)\lambda_{1,2}, 
\end{align}
where 
\begin{equation}
  \partial_{u_x}\alpha =  \alpha^{-1} \Big ( \tr{GET}\partial_{u_x}\tr{GET} - 2\partial_{u_x} \det(GET) \Big)
\end{equation}
 and
\begin{subequations}
\begin{align}
  \partial_{u_x}\tr{GET}   &=  -(\partial_{xx}u_x + \partial_{xy}u_y), \\
  \notag	\partial_{u_x} \det (GET) &= \partial_{u_x} (ac - b^2) \\
  \notag   &= -(\partial_{xx}u_x + \partial_{xy}u_y)c \\ 
  & \quad \qquad + b(\partial_{yx}u_x + \partial_{yy}u_y).
\end{align}
\end{subequations}
In the case $b=0$ the difference to the above derivation is that $\det(GET) = ac$, thus $\partial_{u_x} \alpha$ and $\partial_{u_x} \mathrm{det}(GET)$ should be modified accordingly. The component $\nabla u^{\top} D_{u_y}$ follows the same line of calculations and is therefore omitted here. 

\bibliographystyle{spmpsci}  

\bibliography{JMIV2014}

@INPROCEEDINGS{Blomgren97totalvariation,
    author = {Peter Blomgren and Tony F. Chan and Pep Mulet and C.K. Wong},
    title = {{Total Variation Image Restoration: Numerical Methods and Extensions}},
    booktitle = {in Proc. IEEE ICIP},
    year = {1997},
    pages = {384--387}
}

@ARTICLE{Chen04variableexponent,
    author = {Yunmei Chen and Stacey Levine and Murali Rao},
    title = {{Variable exponent, linear growth functionals in image processing}},
    journal = {SIAM Journal on Applied Mathematics},
    year = {2004},
    volume = {66},
    pages = {1383--1406}
}

@ARTICLE{Bollt2008,
author="Bollt, Erik M. and Chartrand, Rick and Esedo{\u{g}}lu, Selim and Schultz, Pete and Vixie, Kevin R.",
title="{Graduated Adaptive Image Denoising: Local Compromise Between Total Variation and Isotropic Diffusion}",
journal="Advances in Computational Mathematics",
year="2008",
volume="31",
number="1",
pages="61--85",
issn="1572-9044",
}

@Book{aubert-kornprobst:06,
author = {Aubert, G. and Kornprobst, P.},
title = {{Mathematical Problems in Image Processing: Partial Differential Equations and the Calculus of Variations (second edition)}},
year = {2006},
volume = {147},
publisher = {Springer, Berlin},
series = {Applied Mathematical Sciences} 
}

@Inbook{Levin2012,
author="Levin, Anat and Nadler, Boaz and Durand, Fredo and Freeman, William T.",
chapter="Patch Complexity, Finite Pixel Correlations and Optimal Denoising",
title="ECCV 2012",
year="2012",
publisher="Springer Berlin Heidelberg",
address="Berlin, Heidelberg",
pages="73--86",
isbn="978-3-642-33715-4",
}

@article{Oliveira20091683,
title = "Adaptive total variation image deblurring: A majorization–minimization approach ",
journal = "Signal Processing ",
volume = "89",
number = "9",
pages = "1683 - 1693",
year = "2009",
note = "",
issn = "0165-1684",
author = "João P. Oliveira and José M. Bioucas-Dias and Mário A.T. Figueiredo",
}

@INPROCEEDINGS{4409000,
author={Huguet, F. and Devernay, F.},
booktitle={Computer Vision, 2007. ICCV 2007. IEEE 11th International Conference on},
title={A Variational Method for Scene Flow Estimation from Stereo Sequences},
year={2007},
pages={1-7},
ISSN={1550-5499},
month={Oct},}

@ARTICLE{1420392,
author={Bruhn, A. and Weickert, J. and Feddern, C. and Kohlberger, T. and Schn{\"o}rr, C.},
journal={Image Processing, IEEE Transactions on},
title={Variational optical flow computation in real time},
year={2005},
volume={14},
number={5},
pages={608-615},
ISSN={1057-7149},
month={May},}

@book{gilbarg2001elliptic,
  title={Elliptic Partial Differential Equations of Second Order},
  author={Gilbarg, D. and Trudinger, N.S.},
  isbn={9783540411604},
  lccn={00052272},
  series={Classics in Mathematics},
  year={2001},
  publisher={Springer Berlin Heidelberg}
}

@INPROCEEDINGS{5539844,
author={Schmidt, U. and Qi Gao and Roth, S.},
booktitle={Computer Vision and Pattern Recognition (CVPR), 2010 IEEE Conference on},
title={A generative perspective on MRFs in low-level vision},
year={2010},
month={June},
pages={1751-1758},
ISSN={1063-6919},}

@book{DiBenedettoBook,
	Address = {New York},
	Author = {DiBenedetto, Emmanuele},
	Isbn = {0-387-94020-0},
	Mrclass = {35K65 (35-02)},
	Mrnumber = {1230384 (94h:35130)},
	Mrreviewer = {Ya Zhe Chen},
	Pages = {xvi+387},
	Publisher = {Springer-Verlag},
	Title = {Degenerate parabolic equations},
	Year = {1993}}

@book{LadyzenskajaSolonnikovUraltceva,
	Address = {Providence, R.I.},
	Author = {Lady{\v{z}}enskaja, O. A. and Solonnikov, V. A. and Ural'ceva, N. N.},
	Mrclass = {35.62},
	Mrnumber = {0241822 (39 \#3159b)},
	Mrreviewer = {B. Frank Jones, Jr.},
	Pages = {xi+648},
	Publisher = {American Mathematical Society},
	Series = {Translated from the Russian by S. Smith. Translations of Mathematical Monographs, Vol. 23},
	Title = {{Linear and Quasilinear Equations of Parabolic Type}},
	Year = {1968}}

@article{Kass88snakes:active,
author = {Kass, Michael and Witkin, Andrew and Terzopoulos, Demetri},
journal = {International Journal of Computer Vision},
number = {4},
pages = {321--331},
title = {{Snakes: Active contour models}},
volume = {1},
year = {1988}
}

@article{935036,
author = {Ballester, C and Bertalm{\'i}o, M and Caselles, V and Sapiro, G and Verdera, J},
issn = {1057-7149},
journal = {IEEE Transactions on Image Processing},
number = {8},
pages = {1200--1211},
title = {{Filling-In by Joint Interpolation of Vector Fields and Gray Levels}},
volume = {10},
year = {2001}
}

@article{HornShunk1981,
author = {Horn, Berthold K P and Schunck, Brian G},
journal = {Artificial Intelligence},
pages = {185--203},
title = {{Determining Optical Flow}},
volume = {17},
year = {1981}
}

@article{661187,
author = {Chan, T F and Wong, Chiu-Kwong},
issn = {1057-7149},
journal = {IEEE Transactions on Image Processing},
number = {3},
pages = {370--375},
title = {{Total Variation Blind Deconvolution}},
volume = {7},
year = {1998}
}

@article{citeulike:6171513,
    author = {Torralba, A. and Oliva, A.},
    issn = {0954-898X},
    journal = {Network: Computation in Neural Systems},
    month = aug,
    pages = {391--412},
    publisher = {Informa Healthcare},
    title = {{Statistics of natural image categories}},
    year = {2003}
}

@book{Boyd:2004:CO:993483,
 author = {Boyd, Stephen and Vandenberghe, Lieven},
 title = {{Convex Optimization}},
 year = {2004},
 isbn = {0521833787},
 publisher = {Cambridge University Press},
 address = {New York, NY, USA},
}

@article{doi:10.1137/S1064827596299767,
author = {Tony F. Chan and Gene H. Golub and Pep Mulet},
title = {A Nonlinear Primal-Dual Method for Total Variation-Based Image Restoration},
journal = {SIAM Journal on Scientific Computing},
volume = {20},
number = {6},
pages = {1964-1977},
year = {1999}
}

@article{Scherzer2000,
author = {Scherzer, Otmar and Weickert, Joachim},
journal = {Journal of Mathematical Imaging and Vision},
number = {1},
pages = {43--63},
title = {{Relations Between Regularization and Diffusion Filtering}},
volume = {12},
year = {2000}
}

@book{Horn:1985:MA:5509,
address = {New York, NY, USA},
editor = {Horn, Roger A and Johnson, Charles R},
isbn = {0-521-30586-1},
publisher = {Cambridge University Press},
title = {{Matrix Analysis}},
year = {1986}
}

@inproceedings{Lefkimmiatis2013,
author = {Lefkimmiatis, Stamatios and Roussos, Anastasios and Unser, Michael and Maragos, Petros},
booktitle = {SSVM},
editor = {Kuijper, Arjan and Bredies, Kristian and Pock, Thomas and Bischof, Horst},
pages = {48--60},
title = {{Convex Generalizations of Total Variation Based on the Structure Tensor with Applications to Inverse Problems}},
year = {2013}
}

@inproceedings{diva2:274026,
address = {London, Great Britain},
author = {Big\"{u}n, J and Granlund, G\"{o}sta H},
booktitle = {Proceedings of the IEEE First ICCV},
pages = {433--438},
title = {{Optimal Orientation Detection of Linear Symmetry}},
year = {1987}
}

@article{Grasmair210,
author = {Grasmair, Markus and Lenzen, Frank},
journal = {Applied Mathematics \& Optimization},
number = {3},
pages = {323--339},
title = {{Anisotropic Total Variation Filtering}},
volume = {62},
year = {2010}
}

@book{sharma2014digital,
author = {Sharma, G and Bala, R},
isbn = {9781420041484},
publisher = {Taylor \& Francis},
series = {Electrical Engineering \& Applied Signal Processing Series},
title = {{Digital Color Imaging Handbook}},
year = {2014}
}

@incollection{AstroemACCV,
year={2015},
isbn={978-3-319-16630-8},
booktitle={Computer Vision - ACCV 2014 Workshops},
volume={9009},
series={LNCS},
editor={Jawahar, C.V. and Shan, Shiguang},
title={On the Choice of Tensor Estimation for Corner Detection, Optical Flow and Denoising},
publisher={Springer International Publishing},
author={{\AA}str{\"o}m, Freddie and Felsberg, Michael},
pages={16-30},
language={English}
}

@incollection{Astrom2014,
year={2015},
isbn={978-3-319-14611-9},
booktitle={Energy Minimization Methods in Computer Vision and Pattern Recognition (EMMCVPR)},
volume={8932},
series={LNCS},
editor={Tai, Xue-Cheng and Bae, Egil and Chan, TonyF. and Lysaker, Marius},
title={A Tensor Variational Formulation of Gradient Energy Total Variation},
publisher={Springer International Publishing},
author = {{\AA}str{\"o}m, Freddie and Baravdish, George and Felsberg, Michael},
pages={307-320},
language={English}
}

@article{Wang2004,
author = {Wang, Zhou and Bovik, A C and Sheikh, H R and Simoncelli, E P},
issn = {1057-7149},
journal = {IEEE Trans. Image Processing},
month = apr,
number = {4},
pages = {600--612},
title = {{Image quality assessment: from error visibility to structural similarity}},
volume = {13},
year = {2004}
}

@article{Koenderink1984,
author = {Koenderink, Jan J},
journal = {Biological Cybernetics},
number = {50},
pages = {363--370},
title = {{The Structure of Images}},
volume = {370},
year = {1984}
}

@book{lindeberg1993scale,
author = {Lindeberg, T},
isbn = {9780792394181},
publisher = {Springer},
series = {Kluwer international series in engineering and computer science: Robotics: Vision, manipulation and sensors},
title = {{Scale-Space Theory in Computer Vision}},
year = {1993}
}

@inproceedings{SEEDS,
author = {den Bergh, Michael Van and Boix, Xavier and Roig, Gemma and de Capitani, Benjamin and Gool, Luc Van},
booktitle = {ECCV12},
editor = {Fitzgibbon, Andrew W and Lazebnik, Svetlana and Perona, Pietro and Sato, Yoichi and Schmid, Cordelia},
isbn = {978-3-642-33785-7},
pages = {13--26},
publisher = {Springer},
series = {LNCS},
title = {{SEEDS: Superpixels Extracted via Energy-Driven Sampling.}},
volume = {7578},
year = {2012}
}

@article{Lindeberg96_direct_computation,
author = {G{\aa}rding, Jonas and Lindeberg, Tony},
issn = {0920-5691},
journal = {IJCV},
month = feb,
number = {2},
pages = {163--191},
publisher = {Kluwer Academic Publishers},
title = {{Direct computation of shape cues using scale-adapted spatial derivative operators}},
volume = {17},
year = {1996}
}

@inproceedings{Krajsek2010,
address = {San Francisco},
author = {Krajsek, Kai and Scharr, Hanno},
booktitle = {CVPR2010},
isbn = {9781424469857},
issn = {1063-6919},
pages = {2536--2543},
title = {{Diffusion Filtering Without Parameter Tuning : Models and Inference Tools}},
year = {2010}
}

@article{Fberg2011,
author = {Felsberg, M},
issn = {1057-7149},
journal = {Image Processing, IEEE Transactions on},
number = {7},
pages = {1797--1806},
title = {{Autocorrelation-Driven Diffusion Filtering}},
volume = {20},
year = {2011}
}

@book{Weickert96anisotropicdiffusion,
author = {Weickert, Joachim},
publisher = {Teubner-Verlag, Stuttgart, Germany},
title = {{Anisotropic Diffusion in Image Processing}},
year = {1998}
}

@inproceedings{Roussos2010,
author = {Roussos, A and Maragos, P},
booktitle = {ICIP},
isbn = {9781424479948},
pages = {4141--4144},
title = {{Tensor-based image diffusions derived from generalizations of the total variation and beltrami functionals}},
year = {2010}
}

@inproceedings{diva2:269100,
author = {Felsberg, Michael and K\"{o}the, Ullrich},
booktitle = {Scale Space and PDE Methods in Computer Vision},
editor = {Kimmel, R and Sochen, N and Weickert, J},
pages = {192--203},
publisher = {LNCS},
series = {LNCS},
title = {{GET: The connection between monogenic scale-space and Gaussian derivatives}},
volume = {3459},
year = {2005}
}

@article{275632,
author = {Bovik, A C and Maragos, P},
issn = {1053-587X},
journal = {Signal Processing, IEEE Transactions on},
month = feb,
number = {2},
pages = {469--471},
title = {{Conditions for positivity of an energy operator}},
volume = {42},
year = {1994}
}

@incollection{Chan2006,
author = {Chan, T and Esedo{\u{g}}lu, S and Park, F and Yip, A},
booktitle = {Handbook of Mathematical Models in Computer Vision},
chapter = {2},
editor = {Paragios, Nikos and Chen, Yunmei and Faugeras, Olivier},
pages = {17--31},
publisher = {Springer US},
title = {{Total Variation Image Restoration: Overview and Recent Developments}},
year = {2006}
}

@phdthesis{nordberg95,
address = {SE-581 83 Link\"{o}ping, Sweden},
author = {Nordberg, Klas},
school = {Link\"{o}ping University, Sweden},
title = {{Signal Representation and Processing using Operator Groups}},
year = {1995}
}

@inproceedings{1467423,
author = {Buades, A and Coll, B and Morel, J.-M.},
booktitle = {CVPR},
issn = {1063-6919},
pages = {60 -- 65 vol. 2},
title = {{A non-local algorithm for image denoising}},
volume = {2},
year = {2005}
}

@article{Perona90scale-spaceand,
author = {Perona, Pietro and Malik, Jitendra},
issn = {01628828},
journal = {IEEE Transactions, PAMI},
month = jul,
number = {7},
pages = {629--639},
title = {{Scale-space and edge detection using anisotropic diffusion}},
volume = {12},
year = {1990}
}

@article{DBLP:journals/tip/DabovFKE07,
author = {Dabov, Kostadin and Foi, Alessandro and Katkovnik, Vladimir and Egiazarian, Karen O},
journal = {IEEE Transactions on Image Processing},
number = {8},
pages = {2080--2095},
title = {{Image Denoising by Sparse 3-D Transform-Domain Collaborative Filtering}},
volume = {16},
year = {2007}
}

@article{Bredies:2010:TGV:1958729.1958739,
address = {Philadelphia, PA, USA},
author = {Bredies, Kristian and Kunisch, Karl and Pock, Thomas},
issn = {1936-4954},
journal = {SIAM J. Img. Sci.},
month = sep,
number = {3},
pages = {492--526},
publisher = {Society for Industrial and Applied Mathematics},
title = {{Total Generalized Variation}},
volume = {3},
year = {2009}
}

@inproceedings{115702,
author = {Kaiser, J F},
booktitle = {Acoustics, Speech, and Signal Processing, 1990. ICASSP-90., 1990 International Conference on},
issn = {1520-6149},
month = apr,
pages = {381--384 vol.1},
title = {{On a simple algorithm to calculate the `energy' of a signal}},
year = {1990}
}

@incollection{diva2:608779,
author = {{\AA}str\"{o}m, Freddie and Felsberg, Michael and Baravdish, George and Lundstr\"{o}m, Claes},
booktitle = {SSVM2013},
number = {June},
pages = {1--11},
publisher = {Springer Berlin/Heidelberg},
title = {{Targeted Iterative Filtering}},
year = {2013}
}

@article{Kuijper20091023,
title = "Geometrical \{PDEs\} based on second-order derivatives of gauge coordinates in image processing ",
journal = "Image and Vision Computing ",
volume = "27",
number = "8",
pages = "1023 - 1034",
year = "2009",
note = "",
issn = "0262-8856",
author = "Arjan Kuijper",
}

@article{diva2:758277,
author = {Baravdish, George and Svensson, Olof and {\AA}str\"{o}m, Freddie},
title = {{On Backward $p(x)$-Parabolic Equations for Image Enhancement}},
journal = {Numerical Functional Analysis and Optimization},
volume = {36},
number = {2},
pages = {147-168},
year = {2015}
}

@book{books/daglib/0095066,
author = {Granlund, G\"{o}sta H and Knutsson, Hans},
isbn = {978-0-7923-9530-0},
pages = {I--XII, 1--437},
publisher = {Kluwer},
title = {{Signal processing for computer vision.}},
year = {1995}
}

@inproceedings{Felsberg04poidetection,
author = {Felsberg, Michael and Granlund, G\"{o}sta},
booktitle = {In: Pattern Recognition: 26th DAGM Symposium. Volume 3175 of LNCS},
pages = {103--110},
title = {{POI detection using channel clustering and the 2D energy tensor}},
year = {2004}
}

@inproceedings{diva2:543914,
address = {Firenze},
author = {{\AA}str\"{o}m, Freddie and Baravdish, George and Felsberg, Michael},
booktitle = {ECCV12},
pages = {215--228},
publisher = {LNCS, Springer Berlin/Heidelberg},
title = {{On Tensor-Based PDEs and their Corresponding Variational Formulations with Application to Color Image Denoising}},
volume = {7574},
year = {2012}
}

@inproceedings{Forstner87,
author = {F\"{o}rstner, W and G\"{u}lch, E},
booktitle = {ISPRS Intercommission, Workshop, Interlaken, pp. 149-155.},
title = {{A fast operator for detection and precise location of distinct points, corners and centres of circular features}},
year = {1987}
}

@incollection{JWeickert_Disc,
author = {Weickert, Joachim},
booktitle = {Signal processing and pattern recognition},
chapter = {15},
editor = {J\"{a}hne, Bernd and Haussecker, Horst and Beissler, Peter},
pages = {423--451},
publisher = {Academic Press},
series = {Handbook of Computer Vision and Applications},
title = {{Nonlinear Diffusion Filtering}},
year = {1999}
}

@article{Rudin1992,
author = {Rudin, Leonid and Osher, Stanley and Fatemi, Emad},
issn = {0167-2789},
journal = {Physica D},
month = nov,
number = {1-4},
pages = {259--268},
publisher = {Elsevier Science Publishers B. V.},
title = {{Nonlinear total variation based noise removal algorithms}},
volume = {60},
year = {1992}
}

@article{Black1996,
author = {Black, Michael J and Rangarajan, Anand},
journal = {IJCV},
number = {1},
pages = {57--91},
title = {{On the Unification of Line Processes , Outlier Rejection , and Robust Statistics with Applications in Early Vision}},
volume = {19},
year = {1996}
}

@inproceedings{Astrom2013a,
author = {{\AA}str\"{o}m, Freddie and Zografos, Vasileios and Felsberg, Michael},
booktitle = {SCIA},
language = {eng},
pages = {17--20},
title = {{Density Driven Diffusion}},
year = {2013}
}

\end{document}